\tikzset{
  every plot/.style={prefix=plots/pgf-},
  shape example/.style={
    color=black!30,
    draw,
    fill=yellow!30,
    line width=.5cm,
    inner xsep=2.5cm,
    inner ysep=0.5cm}
}
\definecolor{graphicbackground}{rgb}{0.96,0.96,0.8}
\tikzset{background rectangle/.style={
    fill=\backcol,
  },
  use background/.style={
    show background rectangle
  }
}
\def\thmt@refnamewithcomma #1#2#3,#4,#5\@nil{%
  \@xa\def\csname\thmt@envname #1utorefname\endcsname{#3}%
  \ifcsname #2refname\endcsname
  \csname #2refname\expandafter\endcsname\expandafter{\thmt@envname}{#3}{#4}%
  \fi
}
\Crefname{proposition}{Proposition}{Propositions}
\declaretheorem[name=Lemma,refname={Lemma,Lemmas},Refname={Lemma,Lemmas},sibling=theorem]{lemma}
\declaretheorem[name=Assumption,refname={Assumption,Assumptions},Refname={Assumption,Assumptions}]{assumption}
\declaretheorem[name=Proposition,refname={Proposition,Propositions},Refname={Proposition,Propositions},sibling=theorem]{proposition}
\DeclareMathOperator*{\argmax}{argmax}
\DeclareMathOperator*{\Lip}{Lip} 
\newcommand{\ie}{\emph{i.e.\@\xspace}}
\newcommand{\eg}{\emph{e.g.\@\xspace}}
\newcommand{\st}{s.t.\@\xspace}
\newcommand{\wrt}{w.r.t.\@\xspace}
\newcommand{\cf}{cf.\@\xspace}
\newcommand{\block}[3]{\left#1 #2 \right#3} 
\newcommand{\cb}[1]{\block{\{}{#1}{\}}} 
\newcommand{\pb}[1]{\block{(}{#1}{)}} 
\newcommand{\sqb}[1]{\block{[}{#1}{]}} 
\newcommand{\ab}[1]{\block{|}{#1}{|}} 
\newcommand{\ip}[1]{\block{\langle}{#1}{\rangle}} 
\newcommand{\tup}[1]{\ip{#1}} 
\newcommand{\norm}[1]{\| #1\|} 
\newcommand{\normb}[1]{\block{\|}{#1}{\|}} 
\newcommand{\mnormb}[1]{\bigl\| #1 \bigr\|} 
\newcommand{\EE}[1]{\mathbb{E}\sqb{#1}} 
\newcommand{\sn}[1]{\mathcal{#1}} 
\newcommand{\fun}[3]{#1 : #2 \rightarrow #3} 
\newcommand{\eqdef}{\doteq} 
\newcommand{\tra}{^\top} 
\newcommand{\tkern}[1]{\sn{#1}} 
\newcommand{\rl}[1]{\mathbb{R}^{#1}} 
\newcommand{\ind}[1]{\mathbb{I}\cb{#1}}
\newcommand{\Qkern}{\tkern{Q}}
\newcommand{\proj}{\tkern{R}}
\newcommand{\projA}{\proj^{\sn{A}}}
\newcommand{\Pkern}{\tkern{P}}
\newcommand{\pihat}{\hat{\pi}}
\newcommand{\als}[1]{\begin{align*} #1 \end{align*}} 
\renewcommand{\epsilon}{\varepsilon}
\title[Policy Error Bounds for MBRL with Factored Linear Models]{Policy Error Bounds for Model-Based Reinforcement Learning with Factored Linear Models}
\begin{document}

\maketitle

\begin{abstract}

In this paper we study a model-based approach to calculating approximately optimal policies in Markovian Decision Processes.
In particular, we derive novel bounds on the loss of using a policy derived from a
factored linear model, a class of models which generalize numerous previous models out of those that come with 
strong computational guarantees. 
For the first time in the literature, we derive performance bounds for model-based techniques
where the model inaccuracy is measured in weighted norms.
Moreover, our bounds show a decreased sensitivity to the discount factor and, unlike similar bounds derived 
for other approaches, they are insensitive to measure mismatch.
Similarly to previous works, our proofs are also based on contraction arguments, 
but with the main differences that
we use carefully constructed norms building on Banach lattices,
and the contraction property is only assumed for operators acting on ``compressed'' spaces, 
thus weakening previous assumptions,
while strengthening previous results.

\end{abstract}

\section{Introduction}
\label{sec:introduction}
The recent years have witnessed a renewed interest in model-based reinforcement learning (MBRL).
\citet{barreto2011reinforcement,kveton2012kernel} and \citet{precup2012online}, building on the seminal work of \citet{ormoneit2002kernel},
studied various approaches to stochastic factorizations of the transition probability kernel, while
\citet{grunewalder2012modelling} proposed to use  RKHS embeddings to approximate the transition kernel,
with further enhancements proposed recently by \citet{lever2016compressed}.
A key common feature of these otherwise distant-looking
works is that once the model is set up, it leads to a policy in a computationally efficient way (i.e., in poly-time and space
in the size of the model).
Having realized that this is not a mere coincidence, \citet{yao2014pseudo} introduced the concept of factored linear models,
which keeps the advantageous computational properties, while generalizing all previous works.
While efficient computation is a necessity, efficient learning and good performance of the policy are equally important.
In this paper we focus on the second of these criteria, namely the performance of the policy derived from the model.
The argument for omitting the learning part for the time being
is that one should better understand first what errors need to be controlled
because this will influence the choice of the learning objective and hence the algorithms
 (we also note in passing that, in the above-mentioned examples, the statistical analysis of the model learning algorithms is well understood
by now).

We are not the first to consider the performance  of the policy as a function of the model errors.
In fact, most of the previously mentioned works also give bounds on the policy error
(we define the policy error to be the performance loss due to using the derived policy instead of an optimal one).
However, all these previous works derive bounds that express model errors in a supremum norm.
While the supremum norm is a convenient choice when working with Markovian Decision Processes (which
give the theoretical foundations in these works), an observation that goes back to at least \citet{whitt1978approximations},
the supremum norm is also known to be a rather unforgiving metric:
In learning settings, when data comes from a large cardinality set, and the data may have an uneven distribution,
while the objects of interest lack appropriate smoothness, or other helpful structural properties,
we expect errors measured in the supremum norm to decrease rather slowly.
Furthermore, most learning algorithms aim to reduce some weighted norms, hence deriving bounds for the supremum norm is neither
natural, nor desirable.
Can existing bounds of the policy error from the MBRL literature be extended to other norms?
In the analogue context of approximate dynamic programming methods, \citet{munos2003error} pioneered a technique to allow the use
of weighted $L^p$-norms to bound the policy error,
while in the context of approximate linear programming, \cite{de2003linear} proposed a different technique
to allow the use of weighted supremum norms, both leading to substantial further work \citep{BuBa10:RLBook}, 
\citep[Chapter 3]{WieMa12:RL}.
While the use of weighted norms is a major advance, these bounds do not come without any caveats.
In particular,  in ALP, the bounds rely on the similarity of the so-called constraint sampling distribution to
the stationary distribution $\mu^*$ of the optimal policy, while in ADP they rely on the similarity of the data sampling distribution and the start-state distribution,
leading to hard to control error terms. Can this be avoided by model-based approaches?

\paragraph{Contributions.}
We derive bounds on the policy error of policies derived from factored linear models in MBRL.
The policy error is bounded in supremum, weighted supremum and weighted $L^p$ norms (\cref{thm:inftyNormBound,thm:weightedInftyNormBound,thm:muNormBound}).
The results hold under some conditions:
the left factor of the approximate factorization of the transition kernel
must satisfy a mild boundedness condition (\cref{ass:boundedQkern}),
the right factor must be a join-homomorphism (\cref{ass:joinHomomorphism}),
the operator obtained by swapping the left and right factors must satisfy a boundedness condition (\cref{ass:boundedNorm} or \cref{ass:Lyapunov}).
This latter condition is not mild as the one on the left factor, but it i) generalizes the conditions used to derive previous policy error bounds; and ii) can be easier to enforce as it constrains the norm of a low-dimensional operator, unlike the analogue constraints in previous works.

We recover results for unfactored linear models that satisfy a contraction assumption, and we recover existing supremum norm bounds for factored linear models that meet \cref{ass:joinHomomorphism}. In addition to being able to recover previous results, we also provide a new type of analysis,
which has interesting implications. The new analysis shows that MBRL can in fact escape the sensitivities in ALP and ADP (cf. \cref{thm:muNormBound}, term $\varepsilon_1$), answering the above major question on the positive. In fact, the new bound also shows the potential for better scaling with the discount factor, which is another surprising result.
We attribute this success to the systematic use of the language of Banach lattices, which forced us to discover amongst other things a definition of \emph{mixed} norms for action-value functions which is general, yet makes the so-called value selection operators non-expansions (cf. \cref{prop:LipMaxOperatorAlt}).
For the skeptics who believe that MBRL is ``hard'' because the derived policy cannot be good before the model approximates ``reality'' uniformly everywhere, we point out that already the first ever bound derived for policy error in MBRL (due to \citet{whitt1978approximations}) shows
that the model has to be accurate only in an extremely localized way.
Our bounds also share this characteristic of previous bounds.

Our analysis builds on techniques borrowed from approximate policy iteration (API) and approximate linear programming (ALP),
and provide new insights to existing results for ALP (\cref{prop:Lyapunov}).
However, the MBRL setup we consider is nevertheless different from API and ALP, so the connections in our proofs are not a mere translation of API or ALP results to MBRL, as we will explain in \cref{sec:related}, which is also attested by the novel features of our bounds.

Other miscellaneous contributions include an example that shows why controlling the deviation between the optimal value function underlying the true and approximate models, a metric often used in some previous works to evaluate model quality,
is insufficient to derive a policy error bound (\cref{prop:errorGaps,thm:adp}).
We present a characterization of linear join-homomorphisms (\cref{prop:onlyLJH}).
We show that our supremum norm bounds are tight to arbitrary accuracy (\cref{prop:tightness}), but that quantifying policy error in supremum norm can be harsh, so it pays off to consider the policy error in $L^p(\mu)$ norm instead (\cref{prop:infNormHarsh}).

The rest of the paper is organized as follows:
We start by providing the necessary background on MDPs in \cref{sec:background}, followed by introducing
 factored linear models and the questions studied in \cref{sec:approach}.
After this, we state our assumptions in \cref{sec:ass}, present our main results in \cref{sec:results},
and close with placing our work in the context of existing work, and providing an outlook for future work in \cref{sec:related}.
While we include the proofs of our main results in the main body of the paper, proofs of technical results are relegated to the appendix.

\section{Markov Decision Processes}
\label{sec:background}
We shall describe the agent-environment interaction using the framework of \emph{Markov Decision Processes} (MDPs), with which the reader is assumed to be familiar.
The notation used here is perhaps closest to that of \citet{szepesvari2010algorithms},
but the reader may also consult the books of \citet{puterman94} and \citet{sutton1998reinforcement} on background.
Here, we describe only the main concepts so as to clarify our notation.
Well-understood technical details (such as measurability) are (mostly) omitted for brevity.
The first two paragraphs describe standard notation, while the rest of the section defines less standard notation
which is essential to understand the paper.

\paragraph{Markov Decision Processes.} An MDP is a tuple $\tup{ \sn{X}, \sn{A}, \Pkern, r }$,
 where $\sn{X}$ is the state space, $\sn{A}$ is the action space,
$\Pkern= (\sn{P}^a)_{a \in \sn{A}}$ is the \emph{transition probability kernel} and
$r = (r^a)_{a \in \sn{A}}$ is the \emph{reward function}.
For each state $x\in \sn{X}$ and action $a\in \sn{A}$, $\Pkern^a(\cdot | x)$ gives
a distribution over the states in $\sn{X}$,
interpreted as the distribution over the next states given that action $a$ is taken in state $x$.
For each action $a\in \sn{A}$ and state $x\in \sn{X}$, $r^a(x)$ gives a real number,
which is interpreted as the reward received when action $a$ is taken in state $x$.%
\footnote{The standard MDP definitions would allow stochastic rewards, which may also be correlated with the next state.
Our simplified model enhances clarity and
extending our results to the case of stochastic rewards is trivial under a suitable set of assumptions.}

An MDP describes the interaction of an agent and its environment. The interaction happens in a sequential manner
where in each step the agent chooses an action $A_t\in \sn{A}$ based on the past information it has,
sends the action to the environment,
which then moves from the current state $X_t$ to the next one
according to the transition kernel: $X_{t+1}\sim \Pkern^a(\cdot|X_t)$.
The agent then observes the next state and the reward associated with the transition.
In this paper we assume that the agent's goal is to maximize
the \emph{expected total discounted reward}, $\EE{\sum_{t=0}^\infty \gamma^t r^{A_t}(X_t)}$,
where $0\le \gamma <1$ is the so-called \emph{discount factor}.
A rule describing the way an agent acts given its past actions and observations is called a \emph{policy}.
The \emph{value} of a policy $\pi$ in a state $x$, denoted by $V^\pi(x)$,
is the expected total discounted reward when the initial state ($X_0$) is $x$ assuming the agent follows the policy.
An \emph{optimal policy} is one that achieves the maximum possible value amongst all policies in each state $x\in \sn{X}$.
The \emph{optimal value} for state $x$ is denoted by $V^*(x)$.
A \emph{deterministic Markov policy} disregards everything in the history except the last step.
Such policies can and will be identified with a map $\pi: \sn{X} \to \sn{A}$, and the space of measurable deterministic Markov policies will be denoted by $\Pi$.
We will assume that the action set is finite.
When, in addition, the reward function is bounded, which we assume from now on,
all the value functions are bounded and
one can always find a deterministic Markov policy that is optimal \citep{puterman94}.
The suboptimality or \emph{policy error} of a policy $\pi$ at a state $x$ is the difference $V^*(x)-V^\pi(x)$.
Loosely speaking, a policy is near-optimal when this difference is small for the states that one cares about.
In this work we are interested in bounding the policy error (for policies described in \cref{sec:approach}) in different norm choices:
supremum norm, a weighted supremum norm and an $L^p(\mu)$ norm.

\paragraph{Spaces of value functions.} Let $(\sn{V},\norm{\,\cdot\,}_{\sn{V}})$ be a Banach space of real-valued measurable functions over $\sn{X}$, equipped with a given norm, and $(\sn{V}^{\sn{A}}, \norm{\,\cdot\,}_{\sn{V}^{\sn{A}}})$ be a Banach space of all measurable functions mapping $\sn{A}$ to $\sn{V}$.
Elements of $\sn{V}$ are called \emph{value functions},
while elements of $\sn{V}^{\sn{A}}$ are called \emph{action-value functions}.
Oftentimes, we will choose $\norm{\,\cdot\,}_{\sn{V}}$ to be the norms mentioned before. The choice of $\norm{\,\cdot\,}_{\sn{V}^{\sn{A}}}$ will in general depend on that of $\norm{\,\cdot\,}_{\sn{V}}$, but this will be made clear in the actual context.
Of course, $\sn{V}^{\sn{A}}$ can also be identified with the set of real-valued functions with domain $\sn{X} \times \sn{A}$ (since $\sn{A}$ is finite).
To avoid too many parentheses, for $V\in \sn{V}^{\sn{A}}$, we will use $V^a$ as an alternate notation to $V(a)$.
Conveniently, $V^a\in \sn{V}$.
With a slight abuse of notation, we denote by $\Pkern^a$ the
$\sn{V} \to \sn{V}$ right linear operator defined by $(\Pkern^a V )(x) \eqdef \int V(x') \Pkern^a(dx'|x)$ (we assume that $V\in \sn{V}$ implies integrability, hence the integrals are well defined).
We also view $\Pkern^a$ as a left linear operator, acting over the space of probability measures defined over $\sn{X}$:
$\Pkern^a: \sn{M}_1(\sn{X}) \to \sn{M}_1(\sn{X})$, $(\mu \Pkern^a)(dy) = \int \mu(dx) P^a(dy|x)$, $\mu \in \sn{M}_1(\sn{X})$.
In what follows, whenever a norm is uniquely identifiable from its argument, we will drop the index of the norm denoting the underlying space.

\paragraph{Operators.}
The \emph{Bellman return operator} \wrt{} $\Pkern$, $\fun{T_{\Pkern}}{\sn{V}}{\sn{V}^{\sn{A}}}$, is defined by
$T_{\Pkern}V \eqdef r + \gamma \Pkern V$ (the indexing of $T$ with $\Pkern$ will help us to replace $\Pkern$ with some other operator)
and the so-called \emph{maximum selection} operator $\fun{M}{\sn{V}^{\sn{A}}}{\sn{V}}$ is defined by $(MV)(x) \eqdef \max_a V^a(x)$.
Then, $MT_{\Pkern}$, corresponds to the \emph{Bellman optimality operator} \citep{puterman94}.
The optimal value function satisfies $V^* = MT_{\Pkern}V^*$ \citep{puterman94},
a non-linear fixed-point equation,
which is known as the \emph{Bellman optimality equation}.
The \emph{greedy operator} $\fun{G}{\sn{V}^{\sn{A}}}{\Pi}$, which selects the maximizing actions chosen by $M$,
is defined by $GV(x) \eqdef \argmax_a V^a(x)$ ($x \in \sn{X}$, with ties broken arbitrarily).
Recall that $GV^*$ is an optimal policy \citep{puterman94}.

\paragraph{Planning in MDPs.}
In the \emph{online planning problem}
we wish to compute, at any given state $x$, an action that a near-optimal policy would take (the attribute ``online''
signifies that one is allowed some amount of calculation for each state).
By collecting all actions at all states, a planning method defines a policy $\pihat$.
Disregarding computation, planning methods are compared by how good the policy they return is, i.e., by the policy error of $\pihat$.
One approach to efficient online planning is to use an abstract model which i) contains relevant information about the MDP, ii) can be efficiently constructed, and iii) allows $\pihat(x)$ to be computed efficiently at any state $x$.
In this work we are interested in online planning with a special type of abstract models, called \emph{factored linear models}.

\section{Factored Linear Models}
\label{sec:approach}

In this section, we define factored linear models, the core of our MBRL approach.
We also show examples of MBRL approaches that use factored linear models.

In a \emph{factored linear model} we approximate the MDP's stochastic kernel $\Pkern$ as the product of two linear operators,
$\Qkern \proj$, where $\fun{\proj}{\sn{V}}{\sn{W}}$, $\Qkern = (\Qkern^a)_{a\in \sn{A}}$ and $\fun{\Qkern^a}{\sn{W}}{\sn{V}^{\sn{A}}}$ \citep{yao2014pseudo}.
Here, $\sn{W} = (\sn{W}, \norm{\,\cdot\,}_{\sn{W}})$ is a Banach space of functions with (measurable) domain $\sn{I}$.
We will refer to elements of $\sn{W}$ as \emph{compressed value functions} and elements of $\sn{W}^{\sn{A}}$ as \emph{compressed action value functions} (and, occasionally, the corresponding spaces will also be called compressed, while the spaces $\sn{V}$ and $\sn{V}^{\sn{A}}$ will be called uncompressed).
These names come from the fact that often we will want to choose $\sn{I}$ to be ``small".
In fact, for computational reasons one should choose $\sn{I}$ to be finite, in which case $\sn{W}$ will be a finite-dimensional Euclidean space.
We also allow infinite $\sn{I}$, so that we can then use $\sn{I} = \sn{X}$ and compare the tightness of our results to existing results that consider unfactored linear models.

In this work, for simplicity, we assume that the reward function $r$ remains the same in the factored linear model (the extension of our results
to the case when the reward function is also approximated is routine).
Formally, in this work we will call a tuple of the form $\tup{ \sn{X}, \sn{A}, \Qkern, \proj, r }$ a factored linear model,
where $\Qkern$ and $\proj$ are as above.
Finally, note that we do not require that $\Qkern \proj$ is a stochastic operator.  Hence, a factored linear model defines a \emph{pseudo-MDP}
\citep{yao2014pseudo}.

We must define some additional operators in order to describe how we use factored linear models to derive policies.
The extension of $\proj$ to multiple actions, $\fun{\projA}{\sn{V}^{\sn{A}}}{\sn{W}^{\sn{A}}}$, is defined by $(\projA V)^a \eqdef \proj V$ ($a \in \sn{A}$), where $\sn{W}^{\sn{A}}$ is a Banach space of $\sn{A}\to \sn{W}$ functions analogously to $\sn{V}^{\sn{A}}$.
The Bellman return operator for $\Qkern$, written as $\fun{T_{\Qkern}}{\sn{W}}{\sn{V}^{\sn{A}}}$, is defined by $T_{\Qkern}w \eqdef r + \gamma \Qkern w$ ($w \in \sn{W}$).
We also define the shorthands $T_{\projA \Qkern} \eqdef \projA T_{\Qkern} = \projA r + \gamma \projA \Qkern$ (the equality holds by linearity of $\projA$) and $T_{\Qkern\proj} \eqdef T_{\Qkern}\proj = r + \gamma \Qkern\proj $ (by linearity of $\proj$).
Finally,  $\fun{M'}{\sn{W}^{\sn{A}}}{\sn{W}}$, the counterpart of the maximum selection operator $M$, is defined by $(M' w)(i) =\max_{a\in \sn{A}} w^a(i)$ ($i \in \sn{I}$).
The relationship between these operators is shown on \cref{fig:comDiagram}, and we collected the operators defined here in \cref{sec:listop} of the appendix into a table for easy of reference.
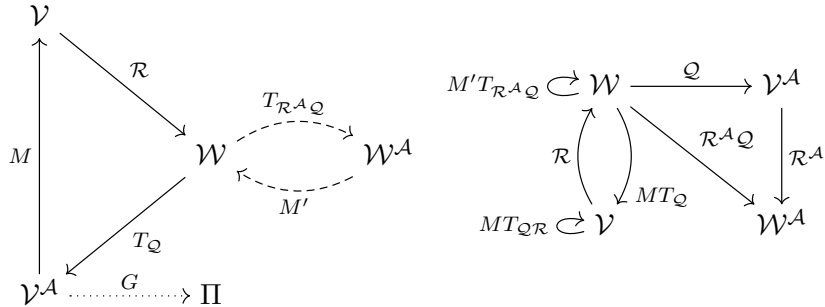
\begin{figure}[thb]
\centering
\begin{tikzcd}[column sep=large,row sep=large,background color=yellow!30]
\sn{V}
    \arrow{rd}{\proj}
    \\
& \sn{W}
    \arrow[bend left, dashed]{r}{T_{\projA\Qkern}}
    \arrow{ld}{T_{\Qkern}}
    &
\sn{W}^{\sn{A}}
    \arrow[bend left, dashed]{l}{M'}
    \\
\sn{V}^{\sn{A}}
    \arrow{uu}{M}
    \arrow[dotted]{r}{G}
& \Pi
\end{tikzcd}
\begin{tikzcd}[column sep=large,row sep=large,background color=yellow!30]
\sn{W}
    \arrow[bend left]{d}[near end]{MT_{\Qkern}}
    \arrow{dr}{\projA\Qkern}
    \arrow[loop left]{}{M'T_{\projA\Qkern}}
    \arrow{r}{\Qkern}                             & \sn{V}^{\sn{A}}
    \arrow{d}{\projA}
    \\
\sn{V}
    \arrow[bend left]{u}{\proj}
    \arrow[loop left]{}{MT_{\Qkern\proj}}             & \sn{W}^{\sn{A}}
\end{tikzcd}
\caption{Commutative diagrams showing the operators and the spaces that they act on. \label{fig:comDiagram}
}
\end{figure}

The \emph{factored linear model approach to reinforcement learning} is as follows:
Given the factored linear model $\tup{ \sn{X}, \sn{A}, \Qkern, \proj, r }$,
we take the policy
\begin{align}\label{eq:pihatdef}
\pihat \eqdef GT_{\Qkern}u^*,
\end{align}
where
\begin{align}\label{eq:ustardef}
u^* = M' T_{\projA \Qkern}u^*\,.
\end{align}
that is, the policy $\pihat$ does a \emph{Bellman lookahead} with $T_{\Qkern}$ from $u^*\in \sn{W}$, a function that satisfies a fixed-point equation.
Note that even when $\sn{X}$ is very large, or infinite, $\sn{W}$ can be finite dimensional,
in which case a good approximation to $u^*$ can
often be found in a computationally efficient manner, for example by iterating $u_{k+1} = M' T_{\projA \Qkern}u_k$, which can be seen as a form of value iteration \citep{yao2014pseudo}.
The dashed lines on the left subfigure on \cref{fig:comDiagram} show that this computation can be done
over the compressed spaces $\sn{W}$ and $\sn{W}^{\sn{A}}$. The diagram also shows that once $u^*$ is found, $T_{\Qkern}$ extends this function to $\sn{V}^{\sn{A}}$, from where using the greedy operator $G$ one obtains a policy. Note that in the applications the policy itself does not need to be explicitly represented, but the actions that the policy takes in a particular state $x\in \sn{X}$ can be computed ``on demand'' given $u^*$ and the Bellman return operator $T_{\Qkern}$.
(The right-hand side figure shows some more useful relationships between the operators involved.)
We will say that this approach is \emph{viable} when $u^*$ is well-defined.

Factored linear models allow one to analyze modeling errors in seemingly distant model-based planning methods in a \emph{unified manner}. This will be illustrated soon by describing how models proposed in numerous previous works can be written in a factored form (this was also shortly mentioned by \citet{yao2014pseudo}).
Before describing these previous models, we need some more definitions, to be able to describe the differences and similarities between them.
In particular, the models will differ in terms of whether $\proj$ is stochastic, or more specifically $\proj$ is also a point-evaluator.
Recall that the operator $\proj$ is stochastic
if $\inf_{V \geq 0}\inf_x (\proj V)(x) \geq 0$ and $\proj \mathbf{1}_{\sn{V}} = \mathbf{1}_{\sn{W}}$ where $\mathbf{1}_{\sn{V}}(x) = 1$ for all $x \in \sn{X}$ and $(\mathbf{1}_{\sn{W}})_i = 1$ for all $i \in \sn{I}$.
Here, we started to use the convention of using $w_i$ instead of $w(i)$ to reduce clutter.
Also, we say that $\proj$ is a
 \emph{point-evaluator} if $\sn{I}$ indexes elements of $\sn{X}$ and $(\proj V)_i = V(x_i)$ for all $i \in \sn{I}, V \in \sn{V}$.
Note that point evaluators are stochastic.
Choosing $\sn{I} = \sn{X}$ allows us to choose $\proj$ to be the identity,
which becomes a point evaluator when choosing $x_i = i$, $i\in \sn{I}$.

When $\proj$ is a point selector, a short direct calculation shows that $\proj M = M'\projA$, which means that on \cref{fig:comDiagram} the solid cycle and the dashed cycle starting from $\sn{W}$ are equivalent and we can interweave solid and dashed lines. For example, starting from $\sn{V}$: $MT_{\Qkern}M' T_{\projA \Qkern} \proj = (MT_{\Qkern\proj})^2$.
The equivalence $M' T_{\projA \Qkern} = \proj M T_{\Qkern}$ gives that $U^* \eqdef MT_{\Qkern}u^*$ is a fixed point of $MT_{\Qkern\proj}$, and that the identity $u^* = \proj U^*$ also holds (\cf{} \cref{thm:uStar}).
It also follows that if $M' T_{\projA\Qkern}$ is a contraction (though $MT_{\Qkern\proj}$ may not be), the factored linear model approach is viable.
To the best of our knowledge, this observation has not been made elsewhere:
In all previous works, viability was achieved by assuming that $\Qkern$ and $\proj$ are both stochastic, or that $\proj$ is a point evaluator and $\Qkern\proj$ is a non-expansion in supremum norm. (In both cases $M' T_{\projA\Qkern}$ is a contraction, so $u^*$ is well-defined and the factored linear model approach is viable.)

With this, we are ready to present different instances of the factored linear model approach:

\begin{example}[Kernel-based reinforcement learning]
\label{ex:KBRL}
In kernel-based reinforcement learning (KB\-RL), introduced by \citet{ormoneit2002kernel}, $\sn{I}$ is indexing elements of $\sn{X}$, and $\Qkern$ is a stochastic operator constructed from kernel functions at elements of $S \eqdef \cb{x_i : i \in \sn{I}}$.
Moreover,
\begin{enumerate}[(a)]
\item $S$ is an i.i.d.~sample from $\sn{X} \eqdef \rl{d}$ and $\proj$ is a point evaluator \citep{ormoneit2002kernel}; or
\item $S$ is a set of \emph{reference states} and $\proj$ is stochastic \citep{barreto2011reinforcement,kveton2012kernel,precup2012online}.
\end{enumerate}
KBRL is viable because $\Qkern$ and $\proj$ are stochastic, so $\projA\Qkern$ is also stochastic.
\end{example}

\begin{example}[Pseudo-MDPs]
\label{ex:pseudo}
Pseudo-MDPs \citep{yao2014pseudo} are factored linear models with a point evaluator $\proj$.
In pseudo-MDPs, $\Qkern$ is no longer stochastic, but $\Qkern\proj$ is assumed to be a non-expansion in supremum norm \citep{grunewalder2012modelling,yao2014pseudo,lever2016compressed}.
It can be shown that under this assumption both $MT_{\Qkern\proj}$ and $M' T_{\projA\Qkern}$ are contractions.
In the approach of these authors, one should take $\tilde{\pi} \eqdef GT_{\Qkern\proj}U^*$, where $U^*$ is the fixed point of $MT_{\Qkern\proj}$.
Our formulation still applies, though, because we can show that $u^* = \proj U^*$ is the fixed point of $M' T_{\projA\Qkern}$ (\cf{} \cref{thm:uStar}), so that $\tilde{\pi} = GT_{\Qkern\proj}U^* = GT_{\Qkern}u^* = \pihat$.
Here, $\Qkern$ is essentially learned using a penalized least-squared approach.
\end{example}

\begin{example}[State aggregation]
State aggregation \citep{whitt1978approximations,bertsekas2011approximate} in MBRL generalizes KBRL.
Here, too, $\sn{I}$ is an index set over $\sn{X}$, and $\cb{x_i : i \in \sn{I}}$ is the set of reference states.
In hard aggregation, $\proj$ is a point evaluator, while in soft aggregation \citep{SiJaJo95:aggregation} it is stochastic.
\end{example}

\begin{example}[MDP homomorphisms]
MDP homomorphisms \citep{ravindran2004algebraic,sorg2009transfer} can be used for transfer learning in reinforcement learning.
Here, $\sn{I}$ is not identified with an index set over $\sn{X}$.
If $\proj$ is a point-evaluator, we recover MDP homomorphisms \emph{per se} \citep{ravindran2004algebraic}, and the more general case of $\proj$ stochastic yields soft MRP homomorphisms \citep{sorg2009transfer}.
\end{example}

\begin{example}[Unfactored linear models]
It is possible to recover unfactored linear models as a special case of factored linear models by taking $\sn{W} = \sn{V}$, and $\proj$ to be the identity mapping.
For the approach to be viable, it is sufficient for $\Qkern$ to be stochastic, which is often assumed with unfactored linear models.
\end{example}

\section{Assumptions}
\label{sec:ass}
The purpose of this section is to state and discuss the assumptions that will be used in our subsequent results.

Our first assumption states that the operators $M: \sn{V}^{\sn{A}} \to \sn{V}$, $M': \sn{W}^{\sn{A}}\to \sn{W}$, and the related policy based value selector operators $M^\pi: \sn{V}^{\sn{A}} \to \sn{V}$ and $M'^{\pi}: \sn{W}^{\sn{A}}\to \sn{W}$ to be defined soon are non-expansions.
Operator $M^\pi$ is defined by $(M^{\pi}V)(x) \eqdef V^{\pi(x)}(x)$ ($x \in \sn{X}$, $\pi \in \Pi$), while
 $(M^{\prime\pi}w)_i \eqdef w_i^{\pi(i)}$ ($i \in \sn{I}$, $\fun{\pi}{\sn{I}}{\sn{A}}$).
Now, recall that an operator $J: \sn{E} \to \sn{F}$ mapping between Banach spaces $\sn{E} = (\sn{E},\norm{\cdot}_{\sn{E}})$,
$\sn{F} = (\sn{F},\norm{\cdot}_{\sn{F}})$ is called a non-expansion when its Lipschitz constant does not exceed one.
The Lipschitz constant of $J$ is defined by
\[
	\Lip(J) \eqdef \sup_{e,e'\in \sn{E}: e\ne e'} \frac{\norm{Je - Je'}}{\norm{e - e'}},
\]
where we follow the convention that the identity of the norm is derived from what space the argument belongs to.
Note the dependence of $\Lip$ on the norms of $\sn{E}$ and $\sn{F}$, which we suppressed. The definition implies that for any $e,e'$, $\norm{Je - Je'} \le \Lip(J)\norm{e-e'}$.
Useful properties of $\Lip$ include that it is submultiplicative ($\Lip(JJ')\le \Lip(J)\Lip(J')$),
it is invariant to constant shifts of operators ($\Lip(J+e) = \Lip(J)$, where $J+e$ is defined by $(J+e)e' = e+ Je'$) and when $J$ is a
linear operator, $\Lip(J) = \norm{J}$, the induced operator norm of $J$, which is defined by
\begin{align*}
	\norm{J} \eqdef \sup_{e\in \sn{E}, e\ne 0} \frac{\norm{Je}}{\norm{e}}.
\end{align*}
Again, the induced norm depends on the norms that the operator acts between, but we suppress this dependence.

Let us now formally state the aforementioned assumption:
\begin{assumption}[Non-expanding selectors]
\label{ass:LipMaxOperator}
We have $\Lip(M) \le 1$, $\Lip(M') \le 1$ and for any $\pi_1\in \Pi$, $\pi_2:\sn{I}\to\sn{A}$, $\Lip(M^{\pi_1}) \leq 1$ and
 $\Lip(M^{\prime\pi_2}) \leq 1$.
\end{assumption}
Note that this assumption constrains what norms can be selected for the spaces $\sn{V}^{\sn{A}}$, $\sn{V}$, $\sn{W}^{\sn{A}}$ and $\sn{W}$.
\cref{ass:LipMaxOperator} will be helpful to establish that various operators involving $M$ are Lipschitz with a factor strictly below one, i.e., that they are \emph{contractions}.
For example, to establish that $M T_{\Pkern}$ is a contraction, one can use $\Lip(MT_{\Pkern}) \le \Lip(M) \Lip(T_{\Pkern}) \le \gamma \Lip(\Pkern) = \gamma \norm{\Pkern}$, reducing the question to showing $ \gamma \norm{\Pkern}<1$. Similar arguments work the other operators that will involve $M'$, $M^\pi$, or $M'^{\pi}$.

As it was alluded to earlier, we will use a number of different norms.
However, in all cases we choose the norm for $\sn{V}^{\sn{A}}$ ($\sn{W}^{\sn{A}}$) based on the norm of $\sn{V}$
(respectively, the norm of $\sn{W}$) to be a mixed max-norm:
In particular, for $\sn{U}$ being either $\sn{V}$ or $\sn{W}$, the norm of $\sn{U}^{\sn{A}}$ will be defined as
$\norm{U}_{\sn{U}^{\sn{A}}}
	=\norm{ M_{|\cdot|} U }_{\sn{U}}$
where $M_{|\cdot|}: \sn{U}^{\sn{A}} \to \sn{U}$ is defined by
$(M_{|\cdot|} U)(\cdot) = \max_{a} |U^a(\cdot)|$. We call the resulting norm the \emph{mixed max-norm} w.r.t. the norm of $\sn{U}$.

The next proposition shows that this choice of the mixed norm makes \cref{ass:LipMaxOperator} hold whenever the underlying spaces are
so-called Banach lattices  \citep{PMB91:Banachlattices}.
Recall that a lattice is a non-empty set $\sn{U}$ with a partial ordering $\le$ such that every pair $f,g\in \sn{U}$ has a supremum (or least upper bound), denoted by $f \vee g$, and an infimum (greatest lower bound), denoted by $f \wedge  g$.
Spaces of real-valued functions are lattices with the componentwise ordering, our default choice in what follows, when it comes to $\sn{V}$ and $\sn{W}$.
Operator $\vee$ is also called a \emph{join}, a terminology we will adopt.
A vector lattice $\sn{U}$ is a lattice that is also a vector space.
In a vector lattice, for $f\in \sn{U}$, $f_+ =  f \vee 0$, $f_- = (-f) \vee 0$ and $|f| = f_+ + f_-$ (these generalize the usual definitions of positive part, negative part and absolute value).
A Banach lattice $\sn{U}$ is a normed vector lattice where $\sn{U}$ is also a Banach space and the norm satisfies that for any $f,g\in \sn{V}$, $|f|\le |g|$ $\implies$ $\norm{f}\le \norm{g}$.
With this we are ready to restate and prove the said statement:
\begin{proposition}
\label{prop:LipMaxOperatorAlt}
Assume that $\sn{V}$ and $\sn{W}$ are Banach lattices.
Then \cref{ass:LipMaxOperator} is satisfied.
\end{proposition}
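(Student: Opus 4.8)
The plan is to exploit the two defining features of a Banach lattice --- the componentwise join and the monotonicity of the norm --- together with the definition of the mixed max-norm. Since $\sn{V}$ and $\sn{W}$ enter \cref{ass:LipMaxOperator} completely symmetrically, I would prove a single generic statement for an arbitrary Banach lattice $\sn{U}$ (playing the role of $\sn{V}$ or $\sn{W}$), with $\sn{U}^{\sn{A}}$ equipped with the mixed max-norm, covering both the maximum selector (generically written $M$) and an arbitrary policy selector (generically written $M^\pi$). Applying it with $\sn{U}=\sn{V}$ and $\sn{U}=\sn{W}$ then yields all four bounds at once.

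The key step is an order inequality in $\sn{U}$: for any $U, U' \in \sn{U}^{\sn{A}}$,
\[
	|MU - MU'| \;\le\; M_{|\cdot|}(U - U')
	\qquad\text{and}\qquad
	|M^\pi U - M^\pi U'| \;\le\; M_{|\cdot|}(U - U').
\]
For the selector $M^\pi$ this is immediate, since at each index the left-hand side equals $|U^{\pi(\cdot)} - U'^{\pi(\cdot)}|$, which is dominated by $\max_a |U^a - U'^a| = M_{|\cdot|}(U-U')$. For the maximum selector $M$ it follows from the elementary inequality $|\max_a s_a - \max_a t_a| \le \max_a |s_a - t_a|$ (valid because $\sn{A}$ is finite), which is the lattice-join analogue of the non-expansiveness of $\max$; since the order on $\sn{U}$ is componentwise, this scalar inequality lifts verbatim to the claimed order inequality.

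The remaining step converts the order inequality into a norm inequality. Because $M_{|\cdot|}(U-U') \ge 0$, its absolute value equals itself, so the Banach lattice property ``$|f| \le |g| \implies \norm{f} \le \norm{g}$'' applied to the two displayed inequalities gives $\norm{MU - MU'}_{\sn{U}} \le \norm{M_{|\cdot|}(U-U')}_{\sn{U}}$, and likewise for $M^\pi$. By the very definition of the mixed max-norm, $\norm{M_{|\cdot|}(U-U')}_{\sn{U}} = \norm{U - U'}_{\sn{U}^{\sn{A}}}$, whence $\Lip(M) \le 1$ and $\Lip(M^\pi) \le 1$. Specializing to $\sn{U} = \sn{V}$ (with $\pi \in \Pi$) and to $\sn{U} = \sn{W}$ (with $\pi : \sn{I}\to\sn{A}$) establishes all four claims of \cref{ass:LipMaxOperator}.

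I expect the only genuinely delicate point to be the join inequality $|MU - MU'| \le M_{|\cdot|}(U-U')$: one must ensure that the operation realized by $M$ is indeed the lattice join (true here because $\sn{V}$ and $\sn{W}$ carry the componentwise order, so $M$ is pointwise $\max$), and that $M_{|\cdot|}(U-U')$ genuinely lands in $\sn{U}$ so that its norm is defined --- but the latter is already presupposed by the well-definedness of the mixed max-norm. Everything else is a routine chaining of the lattice norm monotonicity with the definition of the norm on $\sn{U}^{\sn{A}}$.
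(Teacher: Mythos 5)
Your proposal is correct and follows essentially the same route as the paper's own proof: the componentwise order inequality $|MU - MU'| \le M_{|\cdot|}(U-U')$, converted into a norm bound via the Banach lattice monotonicity property and the definition of the mixed max-norm. The only cosmetic difference is that the paper handles $M^\pi$ through linearity ($\Lip(M^\pi) = \norm{M^\pi}$ together with $|M^\pi V| \le M_{|\cdot|}V$), whereas you apply the same Lipschitz-style order inequality directly to $M^\pi$ --- equivalent arguments, since $M^\pi U - M^\pi U' = M^\pi(U-U')$.
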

\begin{proof}
To see why this holds, take for example $M$.
Then for any $U,V\in \sn{V}^{\sn{A}}$, $M U - M V \le M_{|\cdot|} (U-V)$ ($\le$ denotes the componentwise ordering)
and by swapping the order of $U,V$, we also get $|M U - M V| \le M_{|\cdot|} (U-V)$.
Now, since for any $f,g\in \sn{V}$, $|f|\le |g|$ implies $\norm{f} \le \norm{g}$,
we get $\norm{MU - M V} \le \norm{ M_{|\cdot|} (U-V) } = \norm{ U- V }_{\sn{V}^{\sn{A}}}$.
For $M^{\pi}$, since it is a linear operator, $\Lip(M^{\pi}) = \normb{M^{\pi}}$, and for any $V \in \sn{V}^{\sn{A}}$, $\ab{M^{\pi}V^{\sn{A}}} \leq M\ab{V^{\sn{A}}}$, so $\Lip(M^{\pi}) \leq \Lip(M) \leq 1$.
The statement is proven for the other operators analogously.
\end{proof}

Let us now define the norms we will use in this paper.
The weighted supremum norm of a function $f: \sn{Z} \to \rl{}$ with respect to weight $w:\sn{Z} \to \rl{}_+$ is defined as $\norm{f}_{\infty,w} = \sup_{z\in \sn{Z}} |f(z)|/w(z)$.
When $w = \mathbf{1}$ (i.e., $w(z)= 1$ for all $z\in \sn{Z}$), we drop $w$ from the index and use $\norm{f}_{\infty}$.
For $p\ge 1$, the $L^p(\mu)$-norm of $f$ is defined as $\norm{f}^p_{\mu,p} \eqdef \int_{\sn{Z}} \ab{f(z)}^p d\mu(z)$. 
By slightly abusing notation, the mixed norm of space $\sn{U}^{\sn{A}}$ derived from $\norm{\cdot}_{\infty,w}$, or $\norm{\cdot}_{p,\mu}$ will be denoted identically (i.e., for $V\in \sn{V}^{\sn{A}}$, $\norm{V}_{\infty,w}$ is a mixed norm defined using $M_{|\cdot|}$).
Since these norms make their underlying spaces a Banach lattice, we immediately get the following corollary to \cref{prop:LipMaxOperatorAlt}:
\begin{restatable}{corollary}{propLipMaxOperator}
\label{prop:LipMaxOperator}
Assume that the norms over $\sn{V}$ and $\sn{W}$ are supremum norms, weighted supremum norms, or $L^p(\mu)$ and $L^p(\rho)$ norms, and equip the spaces $\sn{V}^{\sn{A}}$ and $\sn{W}^{\sn{A}}$ with the respective mixed norms.
Then \cref{ass:LipMaxOperator} is satisfied.
\end{restatable}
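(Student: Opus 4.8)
The plan is to reduce the corollary directly to \cref{prop:LipMaxOperatorAlt}: since that proposition already shows that the mixed max-norm makes \cref{ass:LipMaxOperator} hold whenever $\sn{V}$ and $\sn{W}$ are Banach lattices, and since the hypothesis here equips $\sn{V}^{\sn{A}}$ and $\sn{W}^{\sn{A}}$ with precisely those mixed norms, all that remains is to verify that each of the three listed norm families turns $\sn{V}$ and $\sn{W}$ into Banach lattices under the componentwise (pointwise) ordering. Thus there is essentially a single thing to check — the defining properties of a Banach lattice — carried out once for each of the supremum, weighted supremum, and $L^p$ norms.

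First I would recall the three ingredients a Banach lattice requires: that the space is a vector lattice (so that $f \vee g$, $f \wedge g$, and hence $|f|$ all lie in the space), that it is complete in the given norm, and that the norm is \emph{monotone}, meaning $|f| \le |g|$ implies $\norm{f} \le \norm{g}$. The vector-lattice structure is identical in all three cases: the underlying objects are real-valued (measurable) functions on $\sn{X}$ or $\sn{I}$, the pointwise maximum and minimum of two such functions again belongs to the space, and under the pointwise ordering $|f|$ coincides with the pointwise absolute value. The real content is then the monotonicity of each norm together with completeness. For the supremum norm, $|f| \le |g|$ gives $|f(z)| \le |g(z)|$ for every $z$, hence $\supnorm$ is monotone, and completeness is the standard Banach-space property of bounded functions under $\supnorm$. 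For the weighted supremum norm $\norm{f}_{\infty,w} = \sup_z |f(z)|/w(z)$ the same pointwise comparison applies after dividing by the strictly positive weight, and completeness transfers from the unweighted case through the isometry $f \mapsto f/w$. For the $L^p(\mu)$ and $L^p(\rho)$ norms (handled identically, only the measure and ground space differing), $|f| \le |g|$ a.e.\ yields $|f(z)|^p \le |g(z)|^p$ a.e., which integration preserves, and completeness is the Riesz--Fischer theorem.

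The only genuine subtlety — which I would flag as a point to handle carefully rather than a true obstacle — is the $L^p$ case, where the elements are equivalence classes of functions modulo null sets: there the ordering, the lattice operations, and the absolute value must be read a.e., and one should note that these are all well-defined on equivalence classes and that the monotone-norm property need only hold up to null sets. With these routine facts in place, $\sn{V}$ and $\sn{W}$ are Banach lattices under each of the stated norms, and the corollary follows immediately by invoking \cref{prop:LipMaxOperatorAlt}.
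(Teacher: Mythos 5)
Your proposal is correct and takes the same route as the paper: the paper derives \cref{prop:LipMaxOperator} from \cref{prop:LipMaxOperatorAlt} precisely via the one-line observation that the supremum, weighted supremum, and $L^p$ norms make $\sn{V}$ and $\sn{W}$ Banach lattices under the pointwise ordering. Your write-up simply fills in the routine verifications (lattice structure, norm monotonicity, completeness, and the a.e.\ caveat for $L^p$) that the paper leaves implicit.
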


Note that $(\sn{V},\vee)$ is a semi-lattice (a lattice with only a join).
For the sake of simplicity, we make the following assumption, which will be assumed to hold until \cref{thm:muNormBoundStochastic}.
\begin{assumption}[$\proj$ is a join-homomorphism]
\label{ass:joinHomomorphism}
The operator $\proj$ is a join-homomorphism of the semi-lattice $(\sn{V}, \vee)$ into the semi-lattice $(\sn{W}, \vee)$,
i.e., $\proj (U \vee V) = (\proj U) \vee (\proj V)$ for any $U,V\in \sn{V}$.
\end{assumption}
This assumption ensures that $\proj M = M' \projA$, an identity which  can be seen to hold simply by using the definitions and the above assumption, and which will be frequently used in our proofs.

The point evaluator defined in \cref{sec:approach} is a linear join-homomorphism, and,
since the identity operator is a point evaluator, it is also a linear join-homomorphism.
However, stochastic operators (also often used in place of $\proj$) may not be join-homomorphisms.
As it turns out, the class of linear join-homomorphisms is not very diverse.
\Cref{prop:onlyLJH} supports this claim for finite-dimensional $\sn{V}$ and $\sn{W}$, and the extension to infinite-dimensional spaces can be obtained by projection on finite-dimensional spaces.
For a positive integer $m$, we let $[m] = \{1,\dots,m\}$.
\begin{restatable}{proposition}{propOnlyLJH}
\label{prop:onlyLJH}
Assume that $\sn{V} = \rl{m}$ and $\sn{W} = \rl{n}$, and let $\proj$ be any linear join-homomorphism.
Then there exists $a \in \rl{n}_+$ and $J \in [m]^n$ \st{} $(\proj v)_i = a_i \, v_{J_i}$ for all $v \in \sn{V}$ and $i \in [n]$. 
\end{restatable}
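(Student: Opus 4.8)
The plan is to exploit linearity to represent $\proj$ as an $n\times m$ matrix and then argue that the join-homomorphism property forces each row of this matrix to have at most one nonzero entry, which must moreover be nonnegative. Write $(\proj v)_i = \sum_{j\in[m]} R_{ij}\, v_j$ for a real matrix $R=(R_{ij})$, $i\in[n]$, $j\in[m]$.

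First I would establish that $R$ is entrywise nonnegative. The join-homomorphism property immediately yields monotonicity: if $u\le v$ in the componentwise ordering, then $u\vee v = v$, so $\proj v = \proj(u\vee v) = (\proj u)\vee(\proj v)\ge \proj u$. Since $\proj 0 = 0$ by linearity, applying monotonicity to the standard basis vector $e_j\ge 0$ gives $\proj e_j\ge 0$, i.e.\ $R_{ij} = (\proj e_j)_i\ge 0$ for all $i,j$.

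Next, fix a row index $i$ and read the identity $\proj(U\vee V)=(\proj U)\vee(\proj V)$ coordinatewise at $i$. This gives, for all $u,v\in\sn{V}$,
\[
\sum_{j\in[m]} R_{ij}\max(u_j,v_j) = \max\left(\sum_{j\in[m]} R_{ij}\,u_j,\ \sum_{j\in[m]} R_{ij}\,v_j\right).
\]
The crux is to show this can hold only if at most one $R_{ij}$ is nonzero, and the right test vectors make this immediate. Taking $u=e_j$ and $v=e_k$ with $j\neq k$, the left-hand side equals $R_{ij}+R_{ik}$ while the right-hand side equals $\max(R_{ij},R_{ik})$. Hence $R_{ij}+R_{ik}=\max(R_{ij},R_{ik})$, which together with nonnegativity forces $\min(R_{ij},R_{ik})=0$. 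Since this holds for every pair $j\neq k$, row $i$ has at most one nonzero entry.

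Finally I would read off the claimed form. If row $i$ is identically zero, set $a_i=0$ and $J_i$ arbitrary (say $J_i=1$), so that $(\proj v)_i = 0 = a_i\,v_{J_i}$. Otherwise row $i$ has a unique positive entry $R_{iJ_i}=a_i>0$, and $(\proj v)_i = a_i\,v_{J_i}$, as required. Collecting the $a_i$ into $a\in\rl{n}_+$ and the indices into $J\in[m]^n$ completes the argument. The only nonroutine step is isolating the per-row functional equation and choosing the basis vectors $e_j,e_k$; once these are substituted the contradiction is instantaneous, so I expect no substantive obstacle beyond that observation.
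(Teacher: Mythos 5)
Your proof is correct, and its core is the same as the paper's: represent $\proj$ through its action on the Euclidean basis and evaluate the join-homomorphism identity on joins of basis vectors, so that in each output coordinate a ``sum equals max'' constraint forces at most one nonzero entry per row. The differences are in the choice of test vectors and in one point of rigor. The paper uses a single test, the all-ones vector $v=\bigvee_{j} e_j$, which gives $\sum_j (\proj e_j)_i = \max_j (\proj e_j)_i$ for every $i$ in one stroke, whereas you test all pairs $e_j\vee e_k$ separately; both routes yield the same per-row conclusion. More substantively, you first establish entrywise nonnegativity of the matrix (monotonicity of $\proj$ from $u\le v \iff u\vee v = v$, combined with $\proj 0 = 0$), and this step is genuinely needed: without nonnegativity, $\sum_j c_j = \max_j c_j$ does not force at most one nonzero $c_j$ (take $c=(1,-1,1)$). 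The paper's proof leaves this nonnegativity implicit when it passes from the sum-equals-max identity to the conclusion, so your version, while following the same strategy, is the more airtight of the two.
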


Our subsequent assumptions will ensure that certain operators are contractions in appropriate norms.
We start with the simplest of these assumptions:
\begin{assumption}
\label{ass:boundedNorm}
The following hold for $\Qkern$ and $\projA$: $\norm{\projA \Qkern} \leq 1$.
\end{assumption}
Note that $\projA \Qkern$ is a $(\sn{W}, \norm{\cdot}_{\sn{W}}) \to (\sn{W}^{\sn{A}}, \norm{\cdot}_{\sn{W}^{\sn{A}}})$ operator
and the norm used in \cref{ass:boundedNorm} is the respective operator norm.
As mentioned earlier, whenever \cref{ass:LipMaxOperator} holds (which is the case for the norms under which we bound the policy error, \cf{} \cref{prop:LipMaxOperator}), we have that $\Lip(M' T_{\projA\Qkern}) \leq \gamma \normb{\projA \Qkern}$, and then \cref{ass:boundedNorm} implies that $M' T_{\projA\Qkern}$ is a $\gamma$-contraction (again, for the respective operator norm).
That $\projA \Qkern$ is a map between the compressed spaces $\sn{W}$ and $\sn{W}^{\sn{A}}$ is significant: When $\sn{W}$ is a finite dimensional space, \cref{ass:boundedNorm} can be enforced during a learning procedure as done, e.g., by \cite{yao2014pseudo}.
In fact, \cite{yao2014pseudo} argue by means of some examples that enforcing this constraint as opposed to enforcing $\norm{ \Qkern\proj }\le 1$
(which may be difficult to enforce as it constrains the norm of an operator between potentially infinite dimensional spaces)
can lead to better results in some learning settings.

When the norms are specifically chosen to be weighted supremum norms,
the previous assumption can be replaced by a weaker one, to be stated next.
To state this assumption, we need to introduce the concept of Lyapunov functions, building on a more specialized definition due to \citet{de2003linear}.
As \citet{de2003linear} showed by means of an example, using weighted supremum norms can greatly reduce the error bounds.
Intuitively, one achieves this by assigning large weights to unimportant states, i.e., to states that are infrequently visited by any policy.
Indeed, one should not expect much data, or a good behavior at such states, but since they are not visited often, the errors made at such states
can be safely discounted.

Given $\sn{Z}= (\sn{Z}, \normb{\,\cdot\,}_{\infty, w})$, with $\fun{w}{\sn{Z}}{\rl{}_+}$, and an operator $\fun{J}{\sn{Z}}{\sn{Z}}$, first let us define
\[
	\beta_{w, J} = \gamma \sup_{f : \ab{f} = w}\normb{J f}_{\infty,w}\,.
\]
Then, we say that the function $w$ is \emph{$\gamma$-Lyapunov} with respect to operator $J$ if $\beta_{w, J} < 1$.
We also extend the definition for operators of the form $K: \sn{Z} \to \sn{Z}^{\sn{A}}$, i.e., when $K = (K^a)_{a\in \sn{A}}$.
In this case, we say that $w$ is $\gamma$-Lyapunov \wrt{} $K$ if it is $\gamma$-Lyapunov \wrt{} each operator $K^a$ for any $a \in \sn{A}$.
If $J$ satisfies $J f \leq J \ab{f}$ for all $f \in \sn{Z}$ (\eg{}, if $J$ is a stochastic operator), then the definition of $\beta_{w, J}$ simplifies to $\gamma\normb{J w}_{\infty,w}$, coinciding with the definition of \citet{de2003linear}.

Lyapunov functions enable us to ensure that $MT_{\Pkern}$, $M^{\pi}T_{\Pkern}$ ($\pi \in \Pi$) and $M' T_{\projA \Qkern}$ are contractions in the corresponding weighted supremum norms.
For this, notice that the following hold:
\begin{restatable}{proposition}{propLyapunov}
\label{prop:Lyapunov}
Given ($\sn{U}$, $\normb{\,\cdot\,}_{\infty, \nu}$) with $\fun{\nu}{\sn{U}}{\rl{}_+}$, and $\fun{J}{\sn{U}}{\sn{U}^{\sn{A}}}$, if each $J^a$ is a linear operator, then $\gamma \Lip(J) = \beta_{\nu, J}$.
\end{restatable}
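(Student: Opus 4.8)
The plan is to exploit that, since each $J^a$ is linear, the full operator $\fun{J}{\sn{U}}{\sn{U}^{\sn{A}}}$ is linear, so by the remarks preceding \cref{ass:LipMaxOperator} we have $\Lip(J) = \norm{J}$, the induced operator norm in the relevant mixed weighted supremum norms. The proof then reduces to unfolding $\gamma\norm{J}$ and $\beta_{\nu, J}$ and showing they coincide. Writing the norm of $\sn{U}^{\sn{A}}$ via $M_{|\cdot|}$ and using homogeneity of $\norm{J}$ to restrict to the unit ball, I would rewrite
\[
\norm{J} = \sup_{f:\,\ab{f}\le\nu}\ \sup_{z,\,a}\ \frac{\ab{(J^a f)(z)}}{\nu(z)},
\qquad
\frac{\beta_{\nu, J}}{\gamma} = \sup_{f:\,\ab{f}=\nu}\ \sup_{z,\,a}\ \frac{\ab{(J^a f)(z)}}{\nu(z)},
\]
where $\ab{f}\le\nu$ is exactly the constraint $\norm{f}_{\infty,\nu}\le 1$ and $\ab{f}=\nu$ means $\ab{f(z)}=\nu(z)$ pointwise.

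The inequality $\beta_{\nu, J}\le\gamma\norm{J}$ is then immediate, since every $f$ with $\ab{f}=\nu$ satisfies $\norm{f}_{\infty,\nu}=1$ and hence lies in the unit ball. The reverse inequality is the crux. Here I would first swap the two suprema (a supremum over a product of index sets may always be taken in either order), reducing the claim to a statement for each \emph{fixed} pair $(z,a)$: the linear functional $f\mapsto(J^a f)(z)$ attains the same supremum of its absolute value over the box $\{\ab{f}\le\nu\}$ as over its extreme set $\{\ab{f}=\nu\}$. Granting this per-coordinate claim, swapping the suprema back and recognizing the result as $\beta_{\nu,J}/\gamma$ closes the argument.

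The main obstacle is precisely this per-coordinate boundary claim, whose intuition is sign alignment. Substituting $f = \nu\cdot h$ (assuming $\nu>0$) turns the weighted box into the ordinary supremum-norm unit ball $\{\ab{h}\le 1\}$, and the absolute value of a linear functional over this ball is maximized by pushing every coordinate to $\pm 1$ in the direction dictated by the functional's sign, i.e.\ at a $\{\pm1\}$-valued $h$, which corresponds to an $f$ with $\ab{f}=\nu$. In the finite-dimensional case $(J^a f)(z)=\sum_{z'}c_{z'}f(z')$ and the maximizer is explicitly $f(z')=\mathrm{sign}(c_{z'})\,\nu(z')$, yielding the common value $\sum_{z'}\ab{c_{z'}}\nu(z')$. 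The infinite-dimensional case requires only that this sign-aligned choice still defines an admissible element of $\sn{U}$, or is approximated by such, which is where I expect to spend the care: continuous functionals on weighted supremum-norm spaces need not attain their norm, so the reduction to extreme points of the unit ball must be handled by the appropriate density argument rather than by mere compactness.
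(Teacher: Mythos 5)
Your proposal is correct and takes essentially the same route as the paper's proof: identify $\Lip(J)$ with the operator norm by linearity, swap the suprema over $(z,a)$ with the supremum over the weighted unit ball $\{f : \ab{f} \le \nu\}$, and argue by sign alignment that this supremum coincides with the supremum over $\{f : \ab{f} = \nu\}$, which is $\beta_{\nu,J}/\gamma$. If anything, you are more explicit than the paper about the one delicate point---the paper simply asserts that the inner supremum is maximized at $U=\nu$, whereas you flag that in infinite dimensions the sign-aligned maximizer must either lie in $\sn{U}$ or be approximable within it.
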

Now, if $\nu$ is $\gamma$-Lyapunov \wrt{} the probability kernel $\Pkern$, then we immediately get from \cref{prop:LipMaxOperator,prop:Lyapunov} that $MT_{\Pkern}$ and $M^{\pi}T_{\Pkern}$ (for any $\pi \in \Pi$) are $\beta_{\nu, \Pkern}$-contractions in $\nu$-weighted supremum norm.
Similarly, if $\eta$ is $\gamma$-Lyapunov \wrt{} $\projA \Qkern$, then $M' T_{\projA \Qkern}$ is a $\beta_{\eta, \projA\Qkern}$-contraction in $\eta$-weighted supremum norm.

With this, we can state the assumption that we will use to relax \cref{ass:boundedNorm} when the norms used the respective function spaces are weighted supremum norms.
In what follows we fix two functions, $\nu: \sn{V}\to \rl{}_+$ and $\eta: \sn{W} \to \rl{}_+$, which will act as weighting functions.
\begin{assumption}[Lyapunov weights]
\label{ass:Lyapunov}
The following hold for $\Qkern$, $\projA$, $\nu$, and $\eta$:
\begin{enumerate}[(i)]
\item $\nu$ is $\gamma$-Lyapunov \wrt{} $\Pkern$;\label{ass:Lyapunov:nu}
\item $\eta$ is $\gamma$-Lyapunov \wrt{} $\projA \Qkern$. \label{ass:Lyapunov:eta}
\end{enumerate}
\end{assumption}
Note that choosing the weight function $\nu$ to be the constant one function, \cref{ass:Lyapunov}\eqref{ass:Lyapunov:nu} is automatically satisfied,
while choosing $\eta$ to be the constant one function,  \cref{ass:Lyapunov}\eqref{ass:Lyapunov:eta} is equivalent to \cref{ass:boundedNorm} when the norm used there is the supremum norm.

Some (but not all) of our bounds will have a dependency on $\Lip(T_{\Qkern}) = \gamma \normb{\Qkern}$.
Therefore, we will also make \cref{ass:boundedQkern} to avoid vacuous bounds. 
\begin{assumption}
\label{ass:boundedQkern}
We have that $B \eqdef \norm{\Qkern} < \infty$.
\end{assumption}
Note that this assumption is mild: Learning procedures would more often than not guarantee finiteness of the objects they return.
In fact, by appropriate normalization, even $\norm{Q}\le 1$ can be arranged (if necessary) as done, for example, by \citet{grunewalder2012modelling}.

\section{Results}
\label{sec:results}

In this section we present our main results.
We start with a viability result (explaining why our minimal assumptions are sufficient for the existence of the policy whose performance we are interested in), followed by a short review of previous bounds on the policy error.
These previous bounds provide the context for our new results, which we present afterwards.
After each result we discuss their relative merits and present their proofs.
We reiterate that for all the results in this section \cref{ass:joinHomomorphism} is assumed to hold, \ie{}, $\proj$ is assumed to be a join-homomorphism.

\subsection{A viability result}

\Cref{thm:uStar} formalizes that $u^*$ is well-defined (the MBRL approach with factored linear models is viable) under \cref{ass:boundedNorm} or \cref{ass:Lyapunov}~\eqref{ass:Lyapunov:eta}, provided that the norm over $\sn{W}^{\sn{A}}$ is a mixed max-norm \wrt{} the norm over $\sn{W}$.
\Cref{thm:uStar} shows that $M' T_{\projA \Qkern}$ is a contraction (in $\normb{\cdot}_{\sn{W}}$) and we can compute $u^*$ by value iteration.
Therefore, as remarked in \cref{sec:background}, if the compressed space $\sn{W}$ is finite dimensional, we are able to evaluate $M' T_{\projA\Qkern}$ and thus also approximate $u^*$ efficiently (up to the desired accuracy).
Evaluating $\pihat(x)$ can be done by computing $(T_{\Qkern}u^*)(x)$ for each $x$ as needed.
\Cref{thm:uStar} also shows that $MT_{\Qkern\proj}$ has a unique fixed point $U^* = MT_{\Qkern}u^*$, and it is not hard to see that $U^*$ is a fixed point of $M^{\pihat}T_{\Qkern\proj}$ as well.
The fixed points $U^*$ and $u^*$, as well the contraction $M' T_{\Qkern \proj}$, will play pivotal roles in our bounds.
\begin{restatable}{theorem}{thmuStar}
\label{thm:uStar}
Assume that the norm over $\sn{W}^{\sn{A}}$ is the mixed max-norm \wrt{} the norm over $\sn{W}$, and let \cref{ass:boundedNorm} or \cref{ass:Lyapunov}~\eqref{ass:Lyapunov:eta} hold.
Assume also that $\proj$ satisfies \cref{ass:joinHomomorphism}.
Then $M' T_{\projA\Qkern}$ is a contraction \wrt{} the norm underlying $\sn{W}$, $M' T_{\projA\Qkern}$ has a unique fixed point $u^*$, and the iteration $u_{k+1} = M' T_{\projA\Qkern}u_k$ converges geometrically to $u^*$, for any $u_0 \in \sn{W}$.
Moreover, $U^* \eqdef MT_{\Qkern}u^*$ is the unique fixed point of $MT_{\Qkern\proj}$, and the identity $u^* = \proj U^*$ holds.
\end{restatable}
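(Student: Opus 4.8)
The plan is to establish the contraction property first, read off the first three claims from Banach's fixed-point theorem, and then exploit the intertwining identity $\proj M = M' \projA$ furnished by \cref{ass:joinHomomorphism} to transfer between the compressed fixed point $u^*$ and the uncompressed fixed point $U^*$.

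First I would note that since the norm over $\sn{W}^{\sn{A}}$ is the mixed max-norm \wrt{} the norm over $\sn{W}$, \cref{prop:LipMaxOperator} guarantees \cref{ass:LipMaxOperator}, so $\Lip(M') \le 1$ and hence $\Lip(M' T_{\projA\Qkern}) \le \gamma \norm{\projA\Qkern}$. Under \cref{ass:boundedNorm} this is at most $\gamma < 1$; under \cref{ass:Lyapunov}~\eqref{ass:Lyapunov:eta}, \cref{prop:Lyapunov} gives $\gamma \Lip(\projA \Qkern) = \beta_{\eta, \projA\Qkern} < 1$ in the $\eta$-weighted supremum norm. In either case $M' T_{\projA\Qkern}$ is a contraction on the Banach space $\sn{W}$, so Banach's fixed-point theorem yields a unique fixed point $u^*$ and geometric convergence of $u_{k+1} = M' T_{\projA\Qkern}u_k$ from any $u_0 \in \sn{W}$, settling the first three claims.

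For the remaining claims, set $U^* \eqdef MT_{\Qkern}u^*$. Using $T_{\projA\Qkern} = \projA T_{\Qkern}$ together with the identity $\proj M = M' \projA$ supplied by \cref{ass:joinHomomorphism}, I would compute $\proj U^* = \proj M T_{\Qkern} u^* = M' \projA T_{\Qkern} u^* = M' T_{\projA\Qkern} u^* = u^*$, which is exactly the claimed identity $u^* = \proj U^*$. Substituting this back gives $MT_{\Qkern\proj} U^* = MT_{\Qkern}\proj U^* = MT_{\Qkern} u^* = U^*$, so $U^*$ is a fixed point of $MT_{\Qkern\proj}$.

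The step I expect to be the crux is \emph{uniqueness} of this fixed point, precisely because $MT_{\Qkern\proj}$ need not itself be a contraction (indeed the paper stresses this point). The idea is that $\proj$ compresses any fixed point of $MT_{\Qkern\proj}$ down to the unique fixed point of the contraction $M' T_{\projA\Qkern}$. Concretely, if $U = MT_{\Qkern\proj} U$, then applying $\proj$ and using $\proj M = M' \projA$ together with $\projA T_{\Qkern\proj} = \projA T_{\Qkern}\proj = T_{\projA\Qkern}\proj$ gives $\proj U = M' T_{\projA\Qkern}\proj U$; thus $\proj U$ is a fixed point of $M' T_{\projA\Qkern}$, and by the uniqueness already established, $\proj U = u^*$. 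Feeding this back into the fixed-point equation yields $U = MT_{\Qkern}\proj U = MT_{\Qkern} u^* = U^*$, so the fixed point of $MT_{\Qkern\proj}$ is unique and equals $U^*$, completing the argument.
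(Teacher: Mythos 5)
Your proof is correct and takes essentially the same approach as the paper: show $M'T_{\projA\Qkern}$ is a contraction via $\Lip(M')\le 1$ together with \cref{ass:boundedNorm} or \cref{ass:Lyapunov} (through \cref{prop:Lyapunov}), apply Banach's fixed-point theorem, and transfer the fixed point between the compressed and uncompressed spaces using the intertwining identity $\proj M = M'\projA$. The only difference is presentational: the paper packages your transfer-and-uniqueness argument as a general standalone lemma (\cref{lem:bContractionFixedPoint}, applied with $T = MT_{\Qkern}$ and $H = \proj$), whereas you inline exactly that argument for these specific operators.
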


Before this work, it was not known that $M' T_{\projA \Qkern}$ being a contraction is sufficient for $MT_{\Qkern\proj}$ to have a unique fixed point.
As pointed out in \cref{sec:background}, to the best of our knowledge, all previous works either assumed or imposed a contraction property on $MT_{\Qkern\proj}$.
In fact, with the exception of \citet{yao2014pseudo}, all previous works required $\Qkern\proj$ to be stochastic.

In the proof of \Cref{thm:uStar}, which is presented ahead, we will use the following more general result:
\begin{restatable}{lemma}{lemBContractionFixedPoint}
\label{lem:bContractionFixedPoint}
Let $(\sn{V}, \norm{\,\cdot\,}_{\sn{V}})$ and $(\sn{W}, \norm{\,\cdot\,}_{\sn{W}})$ be two Banach spaces.
Let $\fun{T}{\sn{W}}{\sn{V}}$ and $\fun{H}{\sn{V}}{\sn{W}}$ be two operators such that $\Lip((HT)^m) < 1$ for some $m > 0$.
Then $HT$ has a unique fixed point $W^*$, and $V^* \eqdef TW^*$ is the unique fixed point of $TH$.
\end{restatable}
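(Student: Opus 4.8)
The plan is to deduce everything from the (generalized) Banach fixed point theorem applied to the self-map $HT: \sn{W} \to \sn{W}$, together with an elementary transfer argument that shuttles fixed points back and forth between $\sn{W}$ and $\sn{V}$ via $T$ and $H$. The key structural observation is that $\sn{W}$ is complete (being a Banach space), so once we know that \emph{some} iterate of $HT$ is a contraction we can locate a unique fixed point $W^*$ of $HT$ itself; the partner fixed point in $\sn{V}$ will then simply be its image $V^* \eqdef TW^*$. Note that throughout we use only the algebraic relations between $T$ and $H$ together with the single Lipschitz hypothesis $\Lip((HT)^m) < 1$; in particular we need neither linearity nor individual continuity of $T$ or $H$.

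First I would record the iterated contraction principle for $f \eqdef HT$. Since $\Lip((HT)^m) < 1$ and $\sn{W}$ is complete, Banach's theorem gives a unique fixed point $W^*$ of $(HT)^m$. Applying $f$ and using $f \circ f^m = f^m \circ f$ shows that $f(W^*)$ is again fixed by $f^m$, whence $f(W^*) = W^*$ by uniqueness; conversely, every fixed point of $f$ is fixed by $f^m$, so $W^*$ is the unique fixed point of $HT$. Next I would verify that $V^* = TW^*$ is fixed by $TH$: indeed $TH V^* = T(HT)W^* = TW^* = V^*$, using $HTW^* = W^*$.

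For uniqueness of the $TH$-fixed point, the idea is that $H$ carries $TH$-fixed points to $HT$-fixed points. Concretely, if $V'$ satisfies $THV' = V'$, set $W' \eqdef HV'$; then $HTW' = H(THV') = HV' = W'$, so $W'$ is a fixed point of $HT$ and hence $W' = W^*$ by the previous step. Feeding this back gives $V' = THV' = TW' = TW^* = V^*$, which establishes uniqueness.

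I do not anticipate a serious obstacle here: the only genuinely nontrivial ingredient is the iterated contraction principle, which requires the small auxiliary argument above precisely because contraction is assumed for the $m$-th power rather than for $HT$ directly. The transfer steps are purely algebraic once this is in place. The one point demanding care is to keep the directions of the maps straight---$HT$ acts on $\sn{W}$ and $TH$ on $\sn{V}$---so that the identities $HTW^* = W^*$ and $THV^* = V^*$ are applied on the correct side when composing.
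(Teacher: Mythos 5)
Your proof is correct, and its overall architecture matches the paper's: get the unique fixed point $W^*$ of $HT$ from the contraction $(HT)^m$, check $THV^* = THTW^* = TW^* = V^*$, and transfer uniqueness by sending any $TH$-fixed point $V'$ to the $HT$-fixed point $W' = HV'$, forcing $W' = W^*$ and hence $V' = TW' = V^*$ (you do this directly, the paper by contradiction---an immaterial difference). The one genuinely different ingredient is the auxiliary step that passes from ``$(HT)^m$ is a contraction'' to ``$HT$ has a unique fixed point.'' The paper delegates this to \cref{prop:uniqueFixedPoint}, whose statement carries the extra hypothesis $\Lip(T) < \infty$ and whose proof goes through the iterates $T^{m^2}$ and $T^{m(m+1)}$; you instead apply $f \eqdef HT$ to the unique fixed point $W^*$ of $f^m$ and use the commutation $f^m(f(W^*)) = f(f^m(W^*)) = f(W^*)$ to conclude $f(W^*) = W^*$ by uniqueness. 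Your version is both more elementary and slightly more general: it nowhere needs $\Lip(HT) < \infty$, which is welcome because the lemma as stated assumes only $\Lip((HT)^m) < 1$, so your route fits the stated hypotheses exactly, whereas the paper's appeal to \cref{prop:uniqueFixedPoint} formally imports a finiteness assumption that the lemma never supplies.
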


The proof of \cref{lem:bContractionFixedPoint} can be found in \cref{sec:generalResults}.
The argument we use is intuitive when $m = 1$: If $HT$ is a contraction, it has a fixed point $W^*$, so defining $V^* \eqdef TW^*$ gives $V^* = TW^* = THTW^* = THV^*$, so $V^*$ is a fixed point of $TH$ (and we also have the identity $W^* = HV^*$).
The operator $TH$ need not be a contraction for $V^*$ to be its fixed point; indeed, we can even have $\Lip(TH) = \infty$ and still have $V^* = THV^*$ (\cf{} \cref{prop:TQRContraction}).
The argument for $m > 1$ and for ensuring uniqueness relies largely on Banach's fixed point theorem.

\begin{proof} (of \cref{thm:uStar}).
To prove \cref{thm:uStar}, we can apply \cref{lem:bContractionFixedPoint} with $m = 1$, $T = MT_{\Qkern}$ and $H = \proj$, but we have to ensure that $\Lip(M' T_{\projA\Qkern}) < 1$.
We can use submultiplicativity of $\Lip$ and affinity of $T_{\projA\Qkern}$ to get that $\Lip(M' T_{\projA \Qkern}) \leq \gamma \Lip(M')\Lip(\projA \Qkern)$.
By the choice of norm over $\sn{W}^{A}$, $\Lip(M') \leq 1$, and by assumption $\gamma \Lip(\projA \Qkern) < 1$, so, indeed, $\Lip(M' T_{\projA \Qkern}) < 1$.

So far we have established that $M' T_{\projA \Qkern}$ is a contraction, and \cref{lem:bContractionFixedPoint} gives us that $u^*$ is the fixed point of $M' T_{\projA \Qkern}$, that $U^*$ is the fixed point of $M T_{\Qkern\proj}$, and that the two fixed points satisfy $u^* = \proj U^*$.
Because $M' T_{\projA \Qkern}$ is a contraction, the iteration $u_{k+1} = M' T_{\projA\Qkern}u_k$ converges geometrically to $u^*$, for any $u_0 \in \sn{W}$, by Banach's fixed-point theorem.
\end{proof}

\subsection{Previous results on the policy error}
\label{sec:baseline}

The typical MBRL performance bound is a supremum-norm bound on the policy error of $\tilde{\pi} \eqdef GT_{\widetilde{\Pkern}}\widetilde{V}$, where $\widetilde{\Pkern}$ is stochastic and $\widetilde{V}$ is the fixed point of $MT_{\widetilde{\Pkern}}$.
\begin{theorem}[Baseline bound on MBRL policy error]
\label{thm:baseline}
Consider some transition probability kernel $\widetilde{\Pkern}$ for the state and action spaces $\sn{X}$ and $\sn{A}$.
Let $\widetilde{V}$ be the fixed point of $MT_{\widetilde{\Pkern}}$, and $\tilde{\pi} = GT_{\widetilde{\Pkern}}\widetilde{V}$.
Then
\[
	\normb{V^* - V^{\tilde{\pi}}}_{\infty}
		\leq \frac{2\gamma}{1 - \gamma}\normb{(\Pkern - \widetilde{\Pkern})\widetilde{V}}_{\infty}.
\]
\end{theorem}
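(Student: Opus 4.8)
The plan is to split the error through the approximate optimal value function $\widetilde{V}$ via the triangle inequality, $\normb{V^* - V^{\tilde{\pi}}}_\infty \le \normb{V^* - \widetilde{V}}_\infty + \normb{\widetilde{V} - V^{\tilde{\pi}}}_\infty$, and to bound each half by $\frac{\gamma}{1-\gamma}\normb{(\Pkern - \widetilde{\Pkern})\widetilde{V}}_\infty$. The starting observation is that $\widetilde{V}$ is simultaneously the optimal value function of the approximate model and a fixed point of the policy-evaluation operator of $\tilde{\pi}$ in that model: since $\tilde{\pi} = GT_{\widetilde{\Pkern}}\widetilde{V}$ selects exactly the maximizing actions, $M^{\tilde{\pi}}T_{\widetilde{\Pkern}}\widetilde{V} = MT_{\widetilde{\Pkern}}\widetilde{V} = \widetilde{V}$, so $\widetilde{V}$ is the unique fixed point of $M^{\tilde{\pi}}T_{\widetilde{\Pkern}}$ as well as of $MT_{\widetilde{\Pkern}}$ (uniqueness because $\widetilde{\Pkern}$ is stochastic, hence both are $\gamma$-contractions). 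In the true MDP, $V^* = MT_{\Pkern}V^*$ and $V^{\tilde{\pi}} = M^{\tilde{\pi}}T_{\Pkern}V^{\tilde{\pi}}$. Throughout I use the supremum norm on $\sn{V}$ and the associated mixed max-norm on $\sn{V}^{\sn{A}}$, under which $M$ and every $M^{\pi}$ are non-expansions (\cref{prop:LipMaxOperator}), and, since $\Pkern$ is stochastic ($\norm{\Pkern}=1$), both $MT_{\Pkern}$ and $M^{\tilde{\pi}}T_{\Pkern}$ are $\gamma$-contractions.

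For the first half I write $V^* - \widetilde{V} = MT_{\Pkern}V^* - MT_{\widetilde{\Pkern}}\widetilde{V}$ and insert $MT_{\Pkern}\widetilde{V}$, splitting into a contraction term and a model-mismatch term:
\begin{align*}
\normb{V^* - \widetilde{V}}_\infty
\le \normb{MT_{\Pkern}V^* - MT_{\Pkern}\widetilde{V}}_\infty
+ \normb{MT_{\Pkern}\widetilde{V} - MT_{\widetilde{\Pkern}}\widetilde{V}}_\infty.
\end{align*}
The first term is at most $\gamma\normb{V^* - \widetilde{V}}_\infty$ by the contraction property, while the second is at most $\normb{T_{\Pkern}\widetilde{V} - T_{\widetilde{\Pkern}}\widetilde{V}}_\infty = \gamma\normb{(\Pkern - \widetilde{\Pkern})\widetilde{V}}_\infty$, because $M$ is a non-expansion and the reward terms cancel in $T_{\Pkern}\widetilde{V} - T_{\widetilde{\Pkern}}\widetilde{V} = \gamma(\Pkern - \widetilde{\Pkern})\widetilde{V}$. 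Rearranging gives $\normb{V^* - \widetilde{V}}_\infty \le \frac{\gamma}{1-\gamma}\normb{(\Pkern - \widetilde{\Pkern})\widetilde{V}}_\infty$.

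The second half is symmetric, now using the fixed-point identities for $\tilde{\pi}$: from $\widetilde{V} = M^{\tilde{\pi}}T_{\widetilde{\Pkern}}\widetilde{V}$ and $V^{\tilde{\pi}} = M^{\tilde{\pi}}T_{\Pkern}V^{\tilde{\pi}}$, I insert $M^{\tilde{\pi}}T_{\Pkern}\widetilde{V}$ and bound the model-mismatch term by $\gamma\normb{(\Pkern - \widetilde{\Pkern})\widetilde{V}}_\infty$ (non-expansion of $M^{\tilde{\pi}}$) and the remaining term by $\gamma\normb{\widetilde{V} - V^{\tilde{\pi}}}_\infty$ (contraction of $M^{\tilde{\pi}}T_{\Pkern}$), giving $\normb{\widetilde{V} - V^{\tilde{\pi}}}_\infty \le \frac{\gamma}{1-\gamma}\normb{(\Pkern - \widetilde{\Pkern})\widetilde{V}}_\infty$. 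Summing the two bounds through the triangle inequality produces the factor $2\gamma/(1-\gamma)$. The only genuinely non-mechanical step—the one I would state carefully—is the opening claim that the greedy policy makes $\widetilde{V}$ a fixed point of $M^{\tilde{\pi}}T_{\widetilde{\Pkern}}$: this is what licenses treating $\widetilde{V}$ as the value of $\tilde{\pi}$ in the approximate model, and hence what lets both halves be controlled by the single mismatch quantity $\normb{(\Pkern - \widetilde{\Pkern})\widetilde{V}}_\infty$. Everything else is bookkeeping with the non-expansion and contraction constants.
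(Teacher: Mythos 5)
Your proof is correct and follows essentially the same route as the paper: the paper obtains \cref{thm:baseline} as the specialization of \cref{thm:inftyNormBound} (with $\proj$ the identity and $\Qkern$ stochastic, keeping only the $\varepsilon_2$ terms), whose proof is exactly your decomposition --- the triangle inequality through the model's fixed point $\widetilde{V} = U^*$ (\cref{eq:triangleIneq}) followed by bounding each half via the $\gamma$-contractions $MT_{\Pkern}$ and $M^{\tilde{\pi}}T_{\Pkern}$ against the mismatch $\gamma\normb{(\Pkern - \widetilde{\Pkern})\widetilde{V}}_{\infty}$ (\cref{lem:boundOnly}, \cref{eq:lem:boundOnly:contraction}). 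Your inline ``insert and rearrange'' step is just an unrolled derivation of the standard fixed-point bound \cref{eq:standardFPBound} that the paper invokes, so the two arguments coincide.
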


This result is essentially contained in the works of
    \citet[][Corollary to Theorem 3.1]{whitt1978approximations},
    \citet[][Corollary 2]{singh1994upper}\footnote{
    \citet{singh1994upper} correctly bound $\normb{V^* - V^{\tilde{\pi}}}_{\infty}$, but their statement of Corollary 2 suggests that they are
    bounding a different quantity.
    },
    \citet[][Proposition 3.1]{bertsekas2012weighted},
    and \citet[][Lemma 1.1]{Grunewalder2011:arxiv}.

An important implication of this result, which we feel is often overlooked, is that the approximation $\widetilde{\Pkern}$ to $\Pkern$
does not have to be precise everywhere (at all functions $V\in \sn{V}$), but only at $\widetilde{V}$, the fixed point of the approximate model --
a self-fulfilling prophecy, prone to failure?
To understand why this works, consider the case when $\widetilde{\Pkern} \widetilde{V}$ perfectly matches $\Pkern \widetilde{V}$, \ie{}, when the bound on the right-hand side is zero. In this case $\widetilde{V} = MT_{\widetilde{\Pkern}}\widetilde{V} = MT_{\Pkern}\widetilde{V}$, which implies that $\widetilde{V} = V^*$ and, $\tilde{\pi} = GT_{\widetilde{\Pkern}}\widetilde{V} = GT_{\Pkern}V^*$ is optimal.
\emph{The moral is that models do not have to be precise everywhere; if $\Pkern \tilde{V}$ can be estimated, the above inequality can be used to derive a posteriori bounds on the policy error and even form the basis of improving the model.} This can be viewed as a major,
unexpected win for model-based RL.

\Citet{ormoneit2002kernel,barreto2011reinforcement,barreto2011computing,precup2012online,barreto2014policy,barreto2014practical} bound $\norm{V^* - \widetilde{V}}_{\infty}$ rather than the policy error.
We emphasize (\cf{} \cref{sec:valueBoundIssues}) that  $\norm{V^* - \widetilde{V}}_{\infty}$ is not the correct quantity to bound in order to understand the quality of $\pihat$, and that the policy error should be bounded.
As we also discuss in \cref{sec:valueBoundIssues}, this contrasts to ADP bounds, where, in order to understand the policy error in supremum norm, it is sufficient to bound the deviation between the optimal value function and the value estimate that generates the policy.

\subsection{Bounds on the policy error in factored linear models}
Our first novel result is a supremum-norm bound for policy error when we use factored linear models: \cref{thm:inftyNormBound}.
Because we can recover results for unfactored linear models by taking $\proj$ to be the identity mapping over $\sn{X}$, we can use \cref{thm:inftyNormBound} to get a bound that is tighter than \cref{thm:baseline}.
Strictly speaking, taking $\Qkern$ stochastic, $\proj$ as the identity mapping, and upper-bounding the right-hand side of \cref{thm:inftyNormBound} by $2\varepsilon_2$ gives us \cref{thm:baseline}.
\begin{restatable}[Supremum-norm bound]{theorem}{thmInftyNormBound}
\label{thm:inftyNormBound}
Let $\pihat$ be the policy derived from the factored linear model defined using \eqref{eq:pihatdef} and \eqref{eq:ustardef}.
If \cref{ass:boundedQkern,ass:boundedNorm} hold, then
\begin{align}
\label{eq:inftynormBound}
\normb{V^* - V^{\pihat}}_{\infty} \leq \varepsilon(V^*) + \varepsilon(V^{\pihat}),
\end{align}
where $\varepsilon(V) = \min(\varepsilon_1(V),\varepsilon_2)$, and
\begin{align*}
\varepsilon_1(V) &= \gamma \normb{(\Pkern - \Qkern\proj)V}_{\infty} + \frac{B\gamma^2}{1 - \gamma} \normb{\proj(\Pkern - \Qkern\proj)V}_{\infty},\\
\varepsilon_2 &= \frac{\gamma}{1 - \gamma}\mnormb{(\Pkern  - \Qkern\proj)U^*}_{\infty}.
\end{align*}
\end{restatable}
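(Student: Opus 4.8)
The plan is to insert the intermediate quantity $U^* = MT_{\Qkern}u^*$, which \cref{thm:uStar} guarantees to be the unique fixed point of $MT_{\Qkern\proj}$ (with $u^* = \proj U^*$), and split
\[
	V^* - V^{\pihat} = (V^* - U^*) + (U^* - V^{\pihat}),
\]
so that by the triangle inequality it suffices to show $\normb{V^* - U^*}_\infty \le \varepsilon(V^*)$ and $\normb{U^* - V^{\pihat}}_\infty \le \varepsilon(V^{\pihat})$. Since each $\varepsilon(\cdot)$ is a minimum of two terms, I would bound each of these two differences in two independent ways: once to produce the $\varepsilon_2$ term and once to produce the $\varepsilon_1$ term, and then take the minimum.

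For the $\varepsilon_2$ bounds I would run a ``perturbed fixed point'' contraction argument entirely in the uncompressed space $\sn{V}$. Using $V^* = MT_{\Pkern}V^*$ and $U^* = MT_{\Qkern\proj}U^*$ and inserting $MT_{\Pkern}U^*$, the non-expansiveness of $M$ (\cref{prop:LipMaxOperator}) together with the $\gamma$-contractivity of $MT_{\Pkern}$ (here $\Pkern$ is a genuine stochastic kernel, so $\norm{\Pkern}=1$ in supremum norm) yields $\normb{V^*-U^*}_\infty \le \tfrac{\gamma}{1-\gamma}\mnormb{(\Pkern-\Qkern\proj)U^*}_\infty = \varepsilon_2$. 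The same manoeuvre with $M$ replaced by $M^{\pihat}$ bounds $\normb{U^*-V^{\pihat}}_\infty$: here I would use that $\pihat = GT_{\Qkern}u^*$ is greedy, whence $M^{\pihat}T_{\Qkern\proj}U^* = MT_{\Qkern\proj}U^* = U^*$ (recall $u^* = \proj U^*$), together with $V^{\pihat} = M^{\pihat}T_{\Pkern}V^{\pihat}$ and the $\gamma$-contractivity of $M^{\pihat}T_{\Pkern}$.

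The $\varepsilon_1$ bounds are where the factored structure enters, and I would carry them out in two stages. First, a bound in the compressed space, e.g. $\normb{u^* - \proj V^*}_\infty \le \tfrac{\gamma}{1-\gamma}\normb{\proj(\Pkern-\Qkern\proj)V^*}_\infty$: since $M' T_{\projA\Qkern}$ is a $\gamma$-contraction under \cref{ass:boundedNorm}, and since the identity $\proj M = M'\projA$ from \cref{ass:joinHomomorphism} turns $\proj(MT_{\Qkern\proj}V^*)$ into $M' T_{\projA\Qkern}\proj V^*$, a one-step comparison of $u^*$ against $\proj V^*$ leaves exactly the residual $\proj(\Pkern-\Qkern\proj)V^*$ after invoking $V^* = MT_{\Pkern}V^*$. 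Second, I would lift this back to $\sn{V}$: writing $V^* - U^* = (MT_{\Pkern}V^* - MT_{\Qkern\proj}V^*) + (MT_{\Qkern\proj}V^* - MT_{\Qkern\proj}U^*)$, the first bracket contributes $\gamma\mnormb{(\Pkern-\Qkern\proj)V^*}_\infty$, while the second, after using $\norm{\Qkern}=B$ (\cref{ass:boundedQkern}) and $\proj(V^*-U^*) = \proj V^* - u^*$, contributes $\tfrac{B\gamma^2}{1-\gamma}\normb{\proj(\Pkern-\Qkern\proj)V^*}_\infty$, summing to $\varepsilon_1(V^*)$. The bound $\normb{U^*-V^{\pihat}}_\infty \le \varepsilon_1(V^{\pihat})$ follows from the same two-stage scheme with $M,M'$ replaced by their policy-indexed counterparts.

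I expect the main obstacle to be the policy-indexed analogue of the join-homomorphism identity needed in the $V^{\pihat}$ branch: I would have to produce a selector $\pi':\sn{I}\to\sn{A}$ with $\proj M^{\pihat} = M^{\prime\pi'}\projA$ and, crucially, with $u^*$ a fixed point of $M^{\prime\pi'}T_{\projA\Qkern}$ — that is, $\pi'$ must be ``greedy'' in the compressed model whenever $\pihat$ is greedy for the lookahead $T_{\Qkern}u^*$. This compatibility is exactly what allows the compressed contraction argument to transfer verbatim from $V^*$ to $V^{\pihat}$. Establishing it rests on the structure of linear join-homomorphisms (\cref{prop:onlyLJH}), where the nonnegativity of the scaling factors is what guarantees that the uncompressed and compressed greedy choices coincide.
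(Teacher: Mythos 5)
Your proposal is correct and follows essentially the same route as the paper's proof: the paper also splits via the triangle inequality through $U^*$ (\cref{eq:triangleIneq}) and then applies \cref{lem:boundOnly}, whose two parts are exactly your bounds — the fixed-point perturbation argument in the uncompressed space for $\varepsilon_2$ (packaged via \cref{lem:bContractionBound}) and the two-stage compressed-then-lifted argument for $\varepsilon_1$ (packaged as \cref{lem:QKernFPBound}). The ``obstacle'' you flag, namely producing a selector $\pi'$ with $\proj M^{\pihat} = M^{\prime\pi'}\projA$ and verifying that $u^*$ is a fixed point of $M^{\prime\pi'}T_{\projA\Qkern}$ via the greediness of $\pihat$, is resolved in exactly this way in the paper's proof of \cref{lem:QKernFPBound} (the operator $N'$ there, justified by \cref{ass:joinHomomorphism}).
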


The comments after \cref{thm:baseline} apply to \cref{thm:inftyNormBound}: Curiously, it is enough if the model is ``good'' at its own fixed point.
However, what is most striking about \cref{thm:inftyNormBound} is the $\varepsilon_1(V)$ term.
It means that if $B$ is not too big, and if the error of the model at $V^*$ and $V^{\pihat}$ \emph{in the compressed space $\sn{W}^{\sn{A}}$} is small, then the term that depends on $\frac{1}{1 - \gamma}$ is small.
Moreover, we can expect this term to be easier to control than $\normb{(\Pkern - \Qkern\proj)V}_{\infty}$,
though while the term with $\varepsilon_2$ may lead to \emph{a posteriori} bounds, due to the presence of
$V^*$ and $V^{\pihat}$, objects in the true MDP, the $\varepsilon_1$ terms are better treated as \emph{a priori} bounds.

The proof of \cref{thm:inftyNormBound} (presented below) uses the triangle inequality
\begin{equation}
\norm{V^* - V^{\pihat}} \leq \norm{V^* - U^*} + \norm{U^* - V^{\pihat}}, \label[equation]{eq:triangleIneq}
\end{equation}
combined with \cref{lem:boundOnly} stated next
(\Cref{lem:boundOnly} is a technical lemma and its proof is in \cref{sec:mdpResults}):
\begin{restatable}{lemma}{lemmaBoundOnly}
\label{lem:boundOnly}
Let \cref{ass:LipMaxOperator,ass:boundedQkern} hold, and assume that $\gamma\Lip(\projA \Qkern) \leq \alpha < 1$.
For $V \in \cb{V^*, V^{\pihat}}$ we have that
\begin{equation}
\normb{V - U^*} \leq \gamma\normb{(\Pkern - \Qkern\proj)V} + \frac{B\gamma^2}{1 - \alpha}\normb{\projA (\Pkern - \Qkern\proj)V}.\label[equation]{eq:lem:boundOnly:noContraction}
\end{equation}
Additionally, if $\gamma\Lip(\Pkern) \leq \beta < 1$ (or, alternatively, $\gamma\Lip(M^{\pihat}\Pkern) \leq \beta < 1$), we also have for $V = V^*$ (respectively, $V = V^{\pihat}$) that
\begin{equation}
\normb{V - U^*} \leq \frac{\gamma}{1 - \beta}\mnormb{(\Pkern - \Qkern \proj)U^*}. \label[equation]{eq:lem:boundOnly:contraction}
\end{equation}
\end{restatable}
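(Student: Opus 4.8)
The plan is to bound $\normb{V - U^*}$ by a telescoping argument that isolates a single application of the model-error operator $\Pkern - \Qkern\proj$ and then controls the propagation of the error in the \emph{compressed} space $\sn{W}$, where \cref{thm:uStar} supplies a genuine contraction. To handle $V = V^*$ and $V = V^{\pihat}$ at once I would introduce a selector $N$, equal to $M$ in the first case and to $M^{\pihat}$ in the second, with compressed counterpart $N'$, equal to $M'$ and to $M^{\prime\pi}$ for a suitable $\pi:\sn{I}\to\sn{A}$. The facts I would record first are: $V = NT_{\Pkern}V$; $U^* = NT_{\Qkern}u^* = NT_{\Qkern\proj}U^*$ (immediate for $V^*$, and for $V^{\pihat}$ because $\pihat = GT_{\Qkern}u^*$ is greedy, so $M^{\pihat}T_{\Qkern}u^* = MT_{\Qkern}u^* = U^*$); the identity $\proj N = N'\projA$; and, applying $\proj$ to $U^* = NT_{\Qkern}u^*$ while using $u^* = \proj U^*$, that $u^*$ is a fixed point of $N'\projA T_{\Qkern} = N'T_{\projA\Qkern}$. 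Since $\Lip(N')\le 1$ by \cref{ass:LipMaxOperator}, this operator is an $\alpha$-contraction.

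For \eqref{eq:lem:boundOnly:noContraction} I would split
\[
V - U^* = \pb{NT_{\Pkern}V - NT_{\Qkern}\proj V} + \pb{NT_{\Qkern}\proj V - NT_{\Qkern}u^*}.
\]
The $r$-terms cancel inside each $T$, so $\Lip(N)\le 1$ bounds the first bracket by $\gamma\normb{(\Pkern - \Qkern\proj)V}$, and submultiplicativity together with $\Lip(\Qkern) = \normb{\Qkern} = B$ (\cref{ass:boundedQkern}) bounds the second by $\gamma B\normb{\proj V - u^*}$. It remains to control $\normb{\proj V - u^*}$ in $\sn{W}$: writing $\proj V = N'\projA T_{\Pkern}V$ and $u^* = N'T_{\projA\Qkern}u^*$, inserting $N'T_{\projA\Qkern}\proj V$ and invoking the $\alpha$-contraction gives
\[
\normb{\proj V - u^*} \le \gamma\normb{\projA(\Pkern - \Qkern\proj)V} + \alpha\normb{\proj V - u^*},
\]
whence $\normb{\proj V - u^*} \le \frac{\gamma}{1-\alpha}\normb{\projA(\Pkern-\Qkern\proj)V}$. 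Substituting this into the previous bound yields exactly \eqref{eq:lem:boundOnly:noContraction}.

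For \eqref{eq:lem:boundOnly:contraction} I would instead exploit that the \emph{true} operator is now a contraction. Adding and subtracting $NT_{\Pkern}U^*$ in $V - U^* = NT_{\Pkern}V - U^*$ and using $U^* = NT_{\Qkern\proj}U^*$ gives
\[
V - U^* = \pb{NT_{\Pkern}V - NT_{\Pkern}U^*} + \pb{NT_{\Pkern}U^* - NT_{\Qkern\proj}U^*}.
\]
The hypothesis $\gamma\Lip(\Pkern)\le\beta$ (respectively $\gamma\Lip(M^{\pihat}\Pkern)\le\beta$) makes $\Lip(NT_{\Pkern})\le\beta$, so the first bracket is at most $\beta\normb{V - U^*}$, while the second, after the $r$-cancellation, is at most $\gamma\mnormb{(\Pkern-\Qkern\proj)U^*}$. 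Rearranging gives \eqref{eq:lem:boundOnly:contraction}.

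The main obstacle is the $V^{\pihat}$ case, in which neither default $M$ nor $M'$ applies. The real work is to exhibit the policy-indexed selector $\pi:\sn{I}\to\sn{A}$ for which $\proj M^{\pihat} = M^{\prime\pi}\projA$; here I would invoke the characterization of linear join-homomorphisms (\cref{prop:onlyLJH}), under which $\proj$ is a weighted point-evaluator and $\pi$ is obtained by composing $\pihat$ with the index map. One must then verify that $u^*$ is a fixed point of $M^{\prime\pi}T_{\projA\Qkern}$ and that this operator remains an $\alpha$-contraction; both follow from $\Lip(M^{\prime\pi})\le 1$ (\cref{ass:LipMaxOperator}) together with $U^* = M^{\pihat}T_{\Qkern}u^*$. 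Once these identities are secured, the two displays above carry over verbatim with $N = M^{\pihat}$ and $N' = M^{\prime\pi}$.
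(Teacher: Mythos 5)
Your proof is correct and follows essentially the same route as the paper's: the same telescoping through $NT_{\Qkern\proj}$ to isolate $\gamma\normb{(\Pkern-\Qkern\proj)V} + B\gamma\normb{\proj V - u^*}$ (the paper's \cref{lem:QKernFPBound}), the same reduction of the residual to the compressed space where $N'T_{\projA\Qkern}$ is an $\alpha$-contraction, and the same contraction-plus-rearrangement argument (which the paper packages as \cref{lem:bContractionBound}) for both \eqref{eq:lem:boundOnly:noContraction} and \eqref{eq:lem:boundOnly:contraction}. The only difference is cosmetic: you construct the compressed selector $M^{\prime\pi_2}$ explicitly via \cref{prop:onlyLJH}, whereas the paper simply asserts that $N'$ with $\proj N = N'\projA$ is well-defined because $\proj$ is a join-homomorphism (\cref{ass:joinHomomorphism}).
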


\Cref{lem:boundOnly}~\eqref{eq:lem:boundOnly:noContraction} can be interpreted as the bound we get by doing a Bellman lookahead with $MT_{\Qkern}$, followed by application of the well-known bound for an $\alpha$-contraction $T$ with fixed point $\widetilde{V}$ \citep{bertsekas1995dynamic}:
\begin{equation}
    \normb{V - \widetilde{V}} \leq \frac{1}{1 - \alpha}\normb{V - TV} \label[equation]{eq:standardFPBound}
\end{equation}
(with $T = M' T_{\projA \Qkern}$ in the case of \cref{lem:boundOnly}).
Similarly, taking $T = MT_{\Pkern}$ ($T = M^{\pihat}T_{\Pkern}$) in \eqref{eq:standardFPBound} combined with $\gamma\Lip(\Pkern) \leq \beta < 1$ ($\gamma\Lip(M^{\pihat}\Pkern) \leq \beta < 1$), allows us to see that $MT_{\Pkern}$ ($M^{\pihat}T_{\Pkern}$) is a $\beta$-contraction, so \eqref{eq:standardFPBound} gives us \cref{lem:boundOnly}~\eqref{eq:lem:boundOnly:contraction} for $V^*$ ($V^{\pihat}$).
\Cref{lem:boundOnly}~\eqref{eq:lem:boundOnly:noContraction} is also interesting in the special case of unfactored linear models (when $\proj$ is the identity mapping) with $\Qkern$ as a non-expansion (\eg{}, $\Qkern$ stochastic): Because $B = 1$ and $\alpha = \gamma$, the bound becomes
\[
\normb{V - U^*} \leq \frac{\gamma}{1 - \gamma}\normb{(\Pkern - \Qkern\proj)V},
\]
and in this case no looseness was introduced by doing a Bellman lookahead and then applying \eqref{eq:standardFPBound}, relative to applying \eqref{eq:standardFPBound} directly.
This will allow us to recover results for unfactored linear models from the bounds we derive from \cref{lem:boundOnly}.

\begin{proof} (of \cref{thm:inftyNormBound})
We will verify the assumptions of \cref{lem:boundOnly}, so that we can bound the terms on the right-hand side (RHS) of \eqref{eq:triangleIneq} with the help of this lemma.
\cref{lem:boundOnly} needs: \cref{ass:LipMaxOperator}, \cref{ass:boundedQkern}, $\gamma \Lip(\projA\Qkern) < 1$, $\gamma \Lip(\Pkern) < 1$ and $\gamma \Lip(M^{\pihat}\Pkern) < 1$.
 \cref{ass:LipMaxOperator} holds by  \cref{prop:LipMaxOperator}, whose 
 assumptions are satisfied because \cref{thm:inftyNormBound} uses supremum norms.
\cref{ass:boundedQkern} holds by assumption.
Next, \cref{ass:boundedNorm} implies that $\gamma \Lip(\projA\Qkern) \leq \gamma < 1$.
Because $\Lip(\Pkern) = 1$ in supremum norm, we get $\gamma \Lip(P) \leq \gamma < 1$.
Finally, $\Lip(\Pkern) = 1$ and \cref{ass:LipMaxOperator} imply together that $\Lip(M^{\pihat}\Pkern) \leq \gamma < 1$.
The result is obtained by using \cref{lem:boundOnly} (with $\alpha = \beta = \gamma$) to bound the terms on the RHS of
\eqref{eq:triangleIneq}.
\end{proof}

\Cref{thm:inftyNormBound} is tight, as shown by \cref{prop:tightness} (\cf{} \cref{sec:mdpResults}).
Trivially, we can use \cref{thm:inftyNormBound} to crudely upper-bound the policy error in $L^p(\mu)$ norm, but the bound we obtain this way is not very interesting.
This is because supremum norm bounds, though easy to prove, can be too harsh: $V^*$ and $V^{\pihat}$ can be close in other meaningful norms, while not being close in supremum norm, in which case the right-hand side of the bound in \cref{thm:inftyNormBound} can be large even if the left-hand side is small (\cf{} \cref{prop:infNormHarsh}, \cref{sec:mdpResults}).

\Citet{de2003linear}
show that the harshness of the supremum norm can be mitigated by considering the policy error in weighted supremum norm.
Intuitively, the error in states that are unlikely to be visited by $\pi^*$ should be underweighted, as we discussed earlier.
Thus, one alternative to supremum norm bounds is to use a generalization of \cref{thm:inftyNormBound} for the weighted supremum norm:
\begin{restatable}[Weighted supremum norm bound]{theorem}{thmWeightedInftyNormBound}
\label{thm:weightedInftyNormBound}
Let $\pihat$ be the policy derived from the factored linear model defined using \eqref{eq:pihatdef} and \eqref{eq:ustardef}.
If \cref{ass:boundedQkern,ass:Lyapunov} hold, then
\[
\normb{V^* - V^{\pihat}}_{\infty, \nu} \leq \varepsilon(V^*) + \varepsilon(V^{\pihat}),
\]
where $\varepsilon(V) = \min(\varepsilon_1(V),\varepsilon_2)$, and
\begin{align*}
\varepsilon_1(V) &= \gamma \normb{(\Pkern - \Qkern\proj)V}_{\infty, \nu} + \frac{B\gamma^2}{1 - \beta_{\eta, \projA\Qkern}} \normb{\proj(\Pkern - \Qkern\proj)V}_{\infty, \eta} \\
\varepsilon_2 &= \frac{\gamma}{1 - \beta_{\nu, \Pkern}}\mnormb{(\Pkern  - \Qkern\proj)U^*}_{\infty, \nu}.
\end{align*}
\end{restatable}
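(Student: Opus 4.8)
The plan is to follow the proof of \cref{thm:inftyNormBound} almost verbatim, replacing the supremum norm by the $\nu$-weighted supremum norm on $\sn{V}$ (with its induced mixed norm on $\sn{V}^{\sn{A}}$) and the $\eta$-weighted supremum norm on $\sn{W}$ (with its induced mixed norm on $\sn{W}^{\sn{A}}$). Concretely, I would start from the triangle inequality \eqref{eq:triangleIneq}, now read in $\normb{\cdot}_{\infty,\nu}$, and bound $\normb{V^* - U^*}_{\infty,\nu}$ and $\normb{U^* - V^{\pihat}}_{\infty,\nu}$ by $\varepsilon(V^*)$ and $\varepsilon(V^{\pihat})$ respectively via \cref{lem:boundOnly}. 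The only genuine work is to re-verify the hypotheses of that lemma under \cref{ass:Lyapunov} instead of \cref{ass:boundedNorm}, and to identify the resulting contraction moduli $\alpha$ and $\beta$ with the Lyapunov constants $\beta_{\eta,\projA\Qkern}$ and $\beta_{\nu,\Pkern}$ that appear in the statement.

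For the hypotheses, \cref{ass:LipMaxOperator} holds by \cref{prop:LipMaxOperator}, since weighted supremum norms and their induced mixed norms make $\sn{V}$, $\sn{V}^{\sn{A}}$, $\sn{W}$ and $\sn{W}^{\sn{A}}$ Banach lattices, and \cref{ass:boundedQkern} is assumed. The crucial bridge is \cref{prop:Lyapunov}: applied to $J = \projA\Qkern$ with weight $\eta$ (each $(\projA\Qkern)^a = \proj\Qkern^a$ being linear) it gives $\gamma\Lip(\projA\Qkern) = \beta_{\eta,\projA\Qkern}$, so \cref{ass:Lyapunov}\eqref{ass:Lyapunov:eta} supplies exactly the hypothesis $\gamma\Lip(\projA\Qkern) \le \alpha < 1$ with $\alpha = \beta_{\eta,\projA\Qkern}$; applied to $J = \Pkern$ with weight $\nu$ it gives $\gamma\Lip(\Pkern) = \beta_{\nu,\Pkern}$, so \cref{ass:Lyapunov}\eqref{ass:Lyapunov:nu} yields $\gamma\Lip(\Pkern) \le \beta < 1$ with $\beta = \beta_{\nu,\Pkern}$, which is the extra hypothesis needed for the $V = V^*$ case of \eqref{eq:lem:boundOnly:contraction}.

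For the $V = V^{\pihat}$ case I would instead need $\gamma\Lip(M^{\pihat}\Pkern) < 1$; this follows from submultiplicativity of $\Lip$ together with $\Lip(M^{\pihat}) \le 1$ (again \cref{prop:LipMaxOperator}), so that $\gamma\Lip(M^{\pihat}\Pkern) \le \gamma\Lip(\Pkern) = \beta_{\nu,\Pkern} < 1$ and the same $\beta = \beta_{\nu,\Pkern}$ works. With all hypotheses in place, \eqref{eq:lem:boundOnly:noContraction} with $\alpha = \beta_{\eta,\projA\Qkern}$ reproduces $\varepsilon_1(V)$ (reading $\normb{\proj(\Pkern - \Qkern\proj)V}_{\infty,\eta}$ as the $\sn{W}^{\sn{A}}$ mixed norm of $\projA(\Pkern - \Qkern\proj)V$), while \eqref{eq:lem:boundOnly:contraction} with $\beta = \beta_{\nu,\Pkern}$ reproduces $\varepsilon_2$; taking the smaller of the two gives $\normb{V - U^*}_{\infty,\nu} \le \varepsilon(V)$ for $V \in \cb{V^*, V^{\pihat}}$, and summing through \eqref{eq:triangleIneq} closes the argument.

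I expect no genuine obstacle beyond careful bookkeeping, since the substantive content is already isolated in \cref{lem:boundOnly} and in the identity $\gamma\Lip(J) = \beta_{\nu,J}$ of \cref{prop:Lyapunov}. The one point demanding attention is that two distinct weights are in play: $\nu$ governs the uncompressed space $\sn{V}$ and hence the terms measured on $\Pkern$ (i.e.\ $\varepsilon_2$ and the first term of $\varepsilon_1$), while $\eta$ governs the compressed space $\sn{W}$ and hence the $\projA\Qkern$ term (the second term of $\varepsilon_1$). I must therefore match each Lyapunov assumption to the correct factor and the correct norm when instantiating the lemma; a mismatch here — for instance attaching $\eta$ to $\Pkern$ or $\nu$ to $\projA\Qkern$ — would be the easiest way to produce an incorrect constant.
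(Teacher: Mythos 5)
Your proposal is correct and follows essentially the same route as the paper's own proof: the triangle inequality \eqref{eq:triangleIneq}, \cref{lem:boundOnly} applied in the weighted norms, and \cref{prop:Lyapunov} to identify the contraction moduli $\alpha = \beta_{\eta,\projA\Qkern}$ and $\beta = \beta_{\nu,\Pkern}$ under \cref{ass:Lyapunov}. Your handling of the $V = V^{\pihat}$ case via $\gamma\Lip(M^{\pihat}\Pkern) \le \gamma\Lip(\Pkern) = \beta_{\nu,\Pkern} < 1$ is exactly the paper's argument, so there is nothing to add.
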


Under \cref{ass:boundedNorm} and \cref{ass:Lyapunov}~\eqref{ass:Lyapunov:nu}, \cref{thm:weightedInftyNormBound} holds with $\beta_{\eta, \projA\Qkern} = \gamma$.
The comments about $\varepsilon_1(V)$ and $\varepsilon_2$ in \cref{thm:baseline,thm:inftyNormBound} are also valid for \cref{thm:weightedInftyNormBound}, but the dependencies are, evidently, expressed in different norms.
Moreover, by taking $\nu = x \mapsto 1$ and $\eta = i \mapsto 1$, and by realizing that $\nu$ is $\gamma$-Lyapunov \wrt{} $\Pkern$ and, under \cref{ass:boundedNorm}, $\eta$ is $\gamma$-Lyapunov \wrt{} $\projA\Qkern$, we recover \cref{thm:inftyNormBound} from \cref{thm:weightedInftyNormBound}.
Previously, weighted-supremum norm bounds were derived for ALP. However, the weakness of these bounds is that they are sensitive
to the measure-change between the ``ideal constraint sampling distribution'' (which depends on unknown quantities whose knowledge
basically implies the knowledge of the optimal policy) and the actual one used in the algorithm \citep{de2003linear}.

\begin{proof} (of \cref{thm:weightedInftyNormBound})
We start with the triangle inequality in \eqref{eq:triangleIneq}.
To obtain $\varepsilon_1(V)$ we use \cref{lem:boundOnly}~\eqref{eq:lem:boundOnly:noContraction} with $\alpha = \beta_{\eta, \projA\Qkern}$.
The conditions of \cref{lem:boundOnly}~\eqref{eq:lem:boundOnly:noContraction} are fulfilled by \Cref{prop:LipMaxOperator,ass:boundedQkern}, and because $\eta$ is $\gamma$-Lyapunov \wrt{} $\projA\Qkern$ (via \cref{ass:Lyapunov}~\eqref{ass:Lyapunov:eta}).

\Cref{lem:boundOnly}~\eqref{eq:lem:boundOnly:noContraction} gives $\varepsilon_2$ after we realize that $\Lip(MT_{\Pkern}) \leq \gamma \Lip(\Pkern) = \gamma \beta_{\nu} < 1$ and that $\Lip(M^{\pihat}T_{\Pkern}) \leq \gamma \Lip(\Pkern) = \gamma \beta_{\nu} < 1$, since $\nu$ is $\gamma$-Lyapunov \wrt{} $\Pkern$ by \cref{ass:Lyapunov}~\eqref{ass:Lyapunov:nu}.
\end{proof}

Normally, we are interested in the policy error \wrt{} an initial state distribution, or a stationary distribution of a policy (\eg{}, a stationary distribution of $\pi^*$), and we can naturally consider the policy error in $L^1(\mu)$ norm, where $\mu$ is a measure over $\sn{X}$ that we are interested in.
We can get an immediate bound for the more general $L^p(\mu)$ norm (for any $p \geq 1$) of the policy error, using \cref{thm:weightedInftyNormBound} (\cf{} \cref{thm:muNormBoundInfty}, \cref{sec:mdpResults}).
However, we can also bound the policy error in $L^p(\mu)$ ``directly'', \ie{}, in terms of model errors in $L^p(\mu)$ norm, as \cref{thm:muNormBound}, to be stated next, shows.

In order to state \cref{thm:muNormBound}, we need to use a \emph{concentrability coefficient} $C_{\pihat, \Pkern, \mu, \xi}$ (although part of our bound will be free of this coefficient).
Consider a measure $\xi$ over $\sn{X}$, and the operator $\fun{I - \gamma M^{\pihat}\Pkern}{(\sn{V}, \normb{\,\cdot\,}_{\xi,p})}{(\sn{V}, \normb{\,\cdot\,}_{\mu,p})}$.
If $I - \gamma M^{\pihat}\Pkern$ has no inverse (as an operator acting between the above two spaces), define $C_{\gamma, \pihat, \Pkern, \mu, \xi} \eqdef \infty$, otherwise let the concentrability coefficient be
\begin{equation}
    C_{\gamma, \pihat, \Pkern, \mu, \xi} \eqdef (1 - \gamma)\Lip((I - \gamma M^{\pihat}\Pkern)^{-1}) = (1 - \gamma)\normb{(I - \gamma M^{\pihat}\Pkern)^{-1}}\,. \label[equation]{eq:concentrabilityCoefficient}
\end{equation}
(Note that here both $\Lip(\cdot)$ and $\normb{\cdot}$ hide a dependence on $\xi,\pi$ and $p$.)
As opposed to previous uses of concentrability coefficients \citep{munos2003error,FaMuSze10},
our coefficient depends only on the policy computed, which makes it more suitable for the estimation of our bound.
In case the $C_{\gamma, \pihat, \Pkern, \mu, \xi}$ is not very large, we can get meaningful bounds from \cref{thm:muNormBound} from $\varepsilon_2$, but even if $C_{\gamma, \pihat, \Pkern, \mu, \xi} = \infty$ and $\varepsilon_2$ is vacuous, we can still get a priori bounds with a dependence on $\varepsilon_1(V^{\pihat})$, in addition to the dependence on $\varepsilon_1(V^*)$.
The $\varepsilon_1(V)$ term can be analyzed similarly to its analogues in \cref{thm:inftyNormBound,thm:weightedInftyNormBound}, modulo the norm differences.
We are flexible about the choice of $\norm{\cdot}_{\sn{W}}$ (which nonetheless affects \cref{ass:boundedQkern,ass:boundedNorm}).
One may think of choosing $\norm{\cdot}_{\sn{W}} = \norm{\cdot}_{\rho,p}$ for some $\rho$, however with this norm choice, \cref{ass:boundedNorm} becomes restrictive.
When it comes to satisfying \cref{ass:boundedNorm}, a weighted supremum norm is reasonable, as discussed earlier, so we choose this norm as the norm over the compressed space $\sn{W}$ in \cref{thm:muNormBound}.
\begin{restatable}[$L^p(\mu)$ norm bound]{theorem}{thmMuNormBound}
\label{thm:muNormBound}
Let $\pihat$ be the policy derived from the factored linear model defined using \eqref{eq:pihatdef} and \eqref{eq:ustardef}.
Choose the norms so that 
$\norm{\cdot}_{\sn{V}} = \norm{\, \cdot \,}_{\mu,p}$ and $\norm{\,\cdot\,}_{\sn{W}} = \norm{\,\cdot\,}_{\infty, \eta}$.
If \cref{ass:boundedQkern,ass:boundedNorm} hold, then
\als{
\normb{V^* - V^{\pihat}}_{\mu,p} \leq \varepsilon_1(V^*) + \min\cb{\varepsilon_1(V^{\pihat}), \varepsilon_2},
}
where
\als{
	&\varepsilon_1(V) = \gamma\normb{(\Pkern - \Qkern\proj)V}_{\mu,p} + \frac{B \gamma^2}{1 - \gamma}\normb{\projA (\Pkern - \Qkern\proj)V}_{\infty, \eta},\\
	&\varepsilon_2 = C_{\gamma, \pihat, \Pkern, \mu, \xi}\,\frac{\gamma}{1 - \gamma}\normb{(\Pkern - \Qkern \proj)U^*}_{\xi,p},
}
where $C_{\gamma, \pihat, \Pkern, \mu, \xi}$ is defined in \eqref{eq:concentrabilityCoefficient}.
\end{restatable}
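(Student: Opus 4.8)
The plan is to bound the two terms of the triangle inequality \eqref{eq:triangleIneq}, $\normb{V^* - V^{\pihat}}_{\mu,p} \leq \normb{V^* - U^*}_{\mu,p} + \normb{U^* - V^{\pihat}}_{\mu,p}$, separately. First I would verify the hypotheses of \cref{lem:boundOnly}: \cref{ass:LipMaxOperator} holds by \cref{prop:LipMaxOperator}, since $\sn{V}$ carries an $L^p(\mu)$ norm and $\sn{W}$ a weighted supremum norm; \cref{ass:boundedQkern} is assumed; and \cref{ass:boundedNorm} gives $\gamma\Lip(\projA\Qkern) = \gamma\normb{\projA\Qkern} \leq \gamma < 1$, so one may take $\alpha = \gamma$. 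Applying \cref{lem:boundOnly}~\eqref{eq:lem:boundOnly:noContraction} with $V = V^*$ then yields $\normb{V^* - U^*}_{\mu,p} \leq \varepsilon_1(V^*)$, where the first summand is a mixed $\mu,p$ norm on $\sn{V}^{\sn{A}}$, the second a mixed $\infty,\eta$ norm on $\sn{W}^{\sn{A}}$, and $B = \norm{\Qkern}$ is the operator norm of $\Qkern: (\sn{W}, \norm{\cdot}_{\infty,\eta}) \to \sn{V}^{\sn{A}}$, matching the stated $\varepsilon_1$. The same application with $V = V^{\pihat}$ gives $\normb{U^* - V^{\pihat}}_{\mu,p} \leq \varepsilon_1(V^{\pihat})$, furnishing one candidate in the minimum.

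The crux is the alternative bound $\varepsilon_2$ for $\normb{U^* - V^{\pihat}}_{\mu,p}$, which cannot come from \cref{lem:boundOnly}~\eqref{eq:lem:boundOnly:contraction} because in $L^p(\mu)$ the operator $M^{\pihat}\Pkern$ need not be a contraction. Instead I would set up a linear error recursion. Since $\pihat = GT_{\Qkern\proj}U^*$ is greedy \wrt{} $T_{\Qkern\proj}U^*$, we have $M^{\pihat}T_{\Qkern\proj}U^* = MT_{\Qkern\proj}U^* = U^*$ (using \cref{thm:uStar}), so $U^*$ is the fixed point of $M^{\pihat}T_{\Qkern\proj}$, while $V^{\pihat}$ is the fixed point of $M^{\pihat}T_{\Pkern}$. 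Subtracting the two fixed-point equations and inserting $\pm\,\Pkern U^*$ yields, using only linearity of $M^{\pihat}$ and of the kernels, the identity
\[
(I - \gamma M^{\pihat}\Pkern)(U^* - V^{\pihat}) = -\gamma M^{\pihat}(\Pkern - \Qkern\proj)U^*.
\]
When $I - \gamma M^{\pihat}\Pkern$ is invertible (otherwise $C_{\gamma,\pihat,\Pkern,\mu,\xi} = \infty$ and the minimum selects $\varepsilon_1(V^{\pihat})$), I would invert, take the $\mu,p$ norm, bound the operator $(I - \gamma M^{\pihat}\Pkern)^{-1}: (\sn{V},\norm{\cdot}_{\xi,p}) \to (\sn{V},\norm{\cdot}_{\mu,p})$ by its norm $\tfrac{1}{1-\gamma}C_{\gamma,\pihat,\Pkern,\mu,\xi}$ from \eqref{eq:concentrabilityCoefficient}, and then use $\Lip(M^{\pihat}) \leq 1$ in the $\xi,p$ norm (\cref{prop:LipMaxOperator}, applied with $\xi$ in place of $\mu$) to drop the leading $M^{\pihat}$. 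This produces $\normb{U^* - V^{\pihat}}_{\mu,p} \leq C_{\gamma,\pihat,\Pkern,\mu,\xi}\tfrac{\gamma}{1-\gamma}\normb{(\Pkern - \Qkern\proj)U^*}_{\xi,p} = \varepsilon_2$.

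Combining, the second term is bounded by $\min\{\varepsilon_1(V^{\pihat}), \varepsilon_2\}$ and the first by $\varepsilon_1(V^*)$, which is the claimed inequality. The main obstacle is the $\varepsilon_2$ branch: the key realization is that the failure of $M^{\pihat}\Pkern$ to be a contraction in $L^p(\mu)$ must be absorbed into the concentrability coefficient via the change of measure from $\xi$ to $\mu$ encoded in the operator norm of $(I - \gamma M^{\pihat}\Pkern)^{-1}$, rather than through a geometric-series argument as in the supremum-norm proofs. Care is also needed to match the several distinct norms ($\mu,p$ on $\sn{V}$, $\xi,p$ for the change of measure, and the mixed norms on $\sn{V}^{\sn{A}}$ and $\sn{W}^{\sn{A}}$) so that each invocation of \cref{prop:LipMaxOperator} and \cref{lem:boundOnly} is legitimate.
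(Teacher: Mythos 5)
Your proposal is correct and follows essentially the same route as the paper: the triangle inequality \eqref{eq:triangleIneq}, \cref{lem:boundOnly}~\eqref{eq:lem:boundOnly:noContraction} with $\alpha=\gamma$ for the $\varepsilon_1$ terms, and a resolvent/concentrability argument for $\varepsilon_2$. Your identity $(I - \gamma M^{\pihat}\Pkern)(U^* - V^{\pihat}) = -\gamma M^{\pihat}(\Pkern - \Qkern\proj)U^*$ is just the paper's two steps (writing $(I - \gamma M^{\pihat}\Pkern)(U^* - V^{\pihat}) = U^* - M^{\pihat}T_{\Pkern}U^*$ and then invoking $U^* = M^{\pihat}T_{\Qkern\proj}U^*$) merged into one, so the argument is the same.
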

Before the proof, let us point out that $\varepsilon_1$ is independent of the concentrability coefficient.
Further, as remarked beforehand,
its dependence on the discount factor can be quite mild (if the second term in the definition of $\varepsilon_1$ is small).
\begin{proof}
The first step is to use \eqref{eq:triangleIneq}.
Then we see that \Cref{prop:LipMaxOperator} ensures that \cref{ass:LipMaxOperator} is satisfied, and \cref{ass:boundedNorm} guarantees that $\normb{\projA\Qkern} \leq 1$.
Thus, \cref{lem:boundOnly}~\eqref{eq:lem:boundOnly:noContraction} with $\alpha = \gamma$ gives us $\normb{U^* - V}_{\mu,p} \leq  \varepsilon_1(V)$ for $V \in \cb{V^*, V^{\pihat}}$.

To bound $\normb{U^* - V^{\pihat}}_{\mu,p} \leq \varepsilon_2(V^{\pihat})$ we proceed as follows.
If $(I - \gamma M^{\pihat}\Pkern)$ is not invertible, then $C_{\gamma, \pihat, \Pkern, \mu, \xi} = \infty$ and the result holds vacuously, so assume otherwise.
Since $V^{\pihat} = M^{\pihat}T_{\Pkern}V^{\pihat}$,
\[
    (I - \gamma M^{\pihat}\Pkern)V^{\pihat} = M^{\pihat}r.
\]
Moreover,
\[
    U^* - \gamma M^{\pihat}\Pkern U^* - M^{\pihat}r = U^* - M^{\pihat}T_{\Pkern}U^*.
\]
Hence,
\als{
\normb{U^* - V^{\pihat}}_{\mu,p}
&= \normb{(I - \gamma M^{\pihat}\Pkern)^{-1}(I - \gamma M^{\pihat}\Pkern)(U^* - V^{\pihat})}_{\mu,p}\\
&\leq \Lip((I - \gamma M^{\pihat}\Pkern)^{-1})\, \normb{(I - \gamma M^{\pihat}\Pkern)(U^* - V^{\pihat})}_{\xi,p}\\
&= C_{\gamma, \pihat, \Pkern, \mu, \xi} \, \frac{1}{1 - \gamma}\normb{U^* - M^{\pihat}T_{\Pkern}U^*}_{\xi,p} \\
&\leq C_{\gamma, \pihat, \Pkern, \mu, \xi} \, \Lip(M^{\pihat})\frac{\gamma}{1 - \gamma} \normb{(\Pkern - \Qkern \proj)U^*}_{\xi,p}
}
To conclude, we use that $\Lip(M^{\pihat}) \leq 1$ by \cref{prop:LipMaxOperator}.
\end{proof}

\section{Discussion and summary}
\label{sec:related}

Our results unify, strengthen and extend previous works.
The unifying framework of factored linear models was introduced by \citet{yao2014pseudo}.
The focus of the present work is the derivation of policy error bounds, while putting issues of designing and analyzing algorithms to learn models aside.
We believe that in fact this should be the preferred approach to developing theories for reinforcement learning: By first figuring out
what quantities control the policy error in a given error, one is in a better position to design learning algorithms which then control the said
quantities (this is distantly reminiscent to choosing surrogate losses in supervised learning).

Previous work that derives policy error bounds goes back to at least \citet{whitt1978approximations}. In fact,
looking at the literature we see that the results of \citet{whitt1978approximations} have been independently re-derived in part or as a whole multiple times (often confounded with the issue of statistical questions), e.g., in the works mentioned in \cref{sec:approach}.
Compared to the work of \citet{whitt1978approximations}, main advances in deriving policy error bounds have been the introduction of norms
other than the supremum norm, though this happened in different contexts (e.g., \citealt{de2003linear,munos2003error}),
and breaking down the bound of \citet{whitt1978approximations} to more specialized models
(e.g., \citealt{ormoneit2002kernel,ravindran2004algebraic,barreto2011reinforcement,sorg2009transfer}).

One of the main novelties of the present work is that we are importing previous techniques
to model-based RL to obtain policy error bounds in norms other than (unweighted) supremum norms.
In particular,
	to derive policy error bounds that use weighted supremum norms, we are building on
	the work of \citet{de2003linear}, and we bring Lyapunov analysis from the approximate linear programming (ALP) methodology to model-based RL.
	At the same time, to derive  policy error bounds that use weighted $L^p$-norms we import ideas from \citet{munos2003error}, who analyzed
	approximate dynamic programming (ADP) algorithms.
    During this process we streamlined the definitions from these works by  sticking to the language of operator algebras (specifically, Banach lattices).
The use of this language has two main benefits:
It allowed us to present shorter and rather direct proofs, while it also shed light on the algebraic and geometric assumptions that were key in the proofs.
We believe that our operator algebra approach could also improve previous results in either ALP or ADP.
An interesting avenue for further work is to investigate the minimum set of assumptions under which our calculations remain valid:
At present it appears that we use very little of the rich structure of the function spaces involved.
We speculate that the results can also be proven in certain max-plus (a.k.a.~tropical) algebras, leading to results that may hold, \eg{}, for various
versions of sequential games.

Another major novel aspect of the present work is that we tightened previous bounds. In particular, our bounds come in two forms:
One form (the ``$\varepsilon_1$'' term) tells us how model errors should be controlled in the spaces of compressed value functions,
while the other form (the ``$\varepsilon_2$'' term) tells us that it is enough if the model operator approximates the true model operator at
only the (uncompressed) value function derived \emph{from the model}.

While we shorten and improve previous results, we also managed to relax the key condition of previous works that required that the Bellman operator acting on uncompressed value functions and underlying the model needs to be a contraction.
While we are still relying on contraction-type arguments, the contraction arguments are used with the compressed space, as previously suggested (but not analyzed) by  \citet{yao2014pseudo}. We feel that it is more natural to require that the Bellman operator for the compressed space is a contraction than to require the same for the respective operator acting on the uncompressed space. Indeed, our bounds show that this second assumption is entirely superfluous (cf. the ``$\varepsilon_2$'' terms).

One limitation of the results presented so far is that we assumed that $\proj$ was a join-homomorphism.
In many models, such as state-aggregation (soft or not) or stochastic factorization
\citet{VanRoy06:Aggregation,barreto2011reinforcement}, $\proj$ is linear (and stochastic) 
but is \emph{not} a join-homomorphism.
Investigating our proofs reveals that we can allow $\proj$ to be a linear operator (and $\projA$ to be a linear operator \st{} $(\projA)^a \neq \proj$) at the price of introducing additional error terms.
For the sake of illustration, in \Cref{thm:muNormBoundStochastic} below
we present a version of the $L^p(\mu)$-norm bounds (and a sketch of proof) that can be obtained for such operators.

For presenting \cref{thm:muNormBoundStochastic}, we will use the greedy action selector in the compressed space as well, \ie{} $G'$ mapping compressed action value functions to policies in $\sn{W}$ (\ie{}, $M^{\prime G' w}w = M' w$ for $w \in \sn{W}^{\sn{A}}$).
It is important to recall the definition of $U^*$ for \cref{thm:muNormBoundStochastic}: $U^* = MT_{\Qkern}u^*$.
Note that if $\proj M = M' \projA$, then we also have $U^* = MT_{\Qkern\proj}U^*$, and we can recover \cref{thm:muNormBound} from \cref{thm:muNormBoundStochastic}. However, $U^*$ is not a fixed point of $ MT_{\Qkern\proj}$ in general when $\proj$ is not a join-homomorphism, a fact that will be important in our discussion below.
\begin{restatable}[$L^p(\mu)$ norm bound for linear $\proj, \projA$]{theorem}{thmMuNormBoundStochastic}
\label{thm:muNormBoundStochastic}
Let $\pihat$ be the policy derived from the factored linear model defined using \eqref{eq:pihatdef} and \eqref{eq:ustardef}.
Choose the norms so that $\norm{\, \cdot \,}_{\sn{V}} = \norm{\, \cdot \,}_{\mu,p}$ and $\norm{\, \cdot \,}_{\sn{W}} = \norm{\, \cdot \,}_{\infty, \eta}$.
Assume that $\proj, \projA$ are linear (but not necessarily join-homomorphisms, and $(\projA)^a$ not necessarily equal to $\proj$).
If \cref{ass:boundedQkern,ass:boundedNorm} hold, then
\begin{equation}
\normb{V^* - V^{\pihat}}_{\mu,p} \leq \varepsilon_1(V^*, M') + \min\cb{\varepsilon_1(V^{\pihat}, M^{\prime G' T_{\projA\Qkern}u^*}), \varepsilon_2}, \label[equation]{eq:muNormBoundStochastic}
\end{equation}
where
\als{
    \varepsilon_1(V, N') &= \gamma\normb{(\Pkern - \Qkern\proj)V}_{\mu,p} \\
    &\phantom{=}~+\frac{B \gamma}{1 - \gamma}\pb{\normb{\proj V - N'\projA T_{\Pkern}V}_{\infty,\eta} + \gamma \normb{\projA (\Pkern - \Qkern\proj)V}_{\infty,\eta} },\\
    \varepsilon_2 &= C_{\gamma, \pihat, \Pkern, \mu, \xi}\,\frac{1}{1 - \gamma}\normb{\Pkern U^* - \Qkern u^*}_{\xi,p},
}
and where $C_{\gamma, \pihat, \Pkern, \mu, \xi}$ is defined in \eqref{eq:concentrabilityCoefficient}.
\end{restatable}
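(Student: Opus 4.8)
The plan is to bound the two summands of the triangle inequality \eqref{eq:triangleIneq}, written here as $\normb{V^* - V^{\pihat}}_{\mu,p} \le \normb{V^* - U^*}_{\mu,p} + \normb{U^* - V^{\pihat}}_{\mu,p}$, following the skeleton of \cref{thm:muNormBound} but replacing each appeal to \cref{lem:boundOnly} by a direct argument that never invokes the join-homomorphism identity $\proj M = M'\projA$. The first summand, and one of the two competing bounds for the second, will come from a generalized $\varepsilon_1$-estimate; the other bound for the second summand is the concentrability estimate $\varepsilon_2$. Throughout, \cref{ass:LipMaxOperator} holds by \cref{prop:LipMaxOperator} (the norms are $L^p(\mu)$ and a weighted supremum norm), and \cref{ass:boundedNorm} gives $\gamma\Lip(\projA\Qkern)\le\gamma<1$.

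The core new step is to show $\normb{V - U^*}_{\mu,p} \le \varepsilon_1(V,N')$ for the pairs $(V,N') \in \cb{(V^*, M'),\,(V^{\pihat}, M^{\prime G'T_{\projA\Qkern}u^*})}$. Let $N$ denote the uncompressed selector fixing $V$, so $N=M$ with $V^*=MT_{\Pkern}V^*$ in the first case and $N=M^{\pihat}$ with $V^{\pihat}=M^{\pihat}T_{\Pkern}V^{\pihat}$ in the second; in both cases $U^* = NT_{\Qkern}u^*$ (for $N=M^{\pihat}$ this is because $\pihat$ is greedy \wrt $T_{\Qkern}u^*$). I decompose $V - U^* = (V - NT_{\Qkern\proj}V) + (NT_{\Qkern\proj}V - NT_{\Qkern}u^*)$. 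The first bracket equals $NT_{\Pkern}V - NT_{\Qkern\proj}V$, of $\norm{\cdot}_{\mu,p}$-norm at most $\gamma\normb{(\Pkern-\Qkern\proj)V}_{\mu,p}$ since $\Lip(N)\le1$; the second has norm at most $\gamma B\,\normb{\proj V - u^*}_{\infty,\eta}$ using $\Lip(N)\le1$ and $\Lip(T_{\Qkern})=\gamma B$. It remains to bound $\normb{\proj V - u^*}_{\infty,\eta}$, and here I exploit that $N'T_{\projA\Qkern}$ is a $\gamma$-contraction on $(\sn{W},\norm{\cdot}_{\infty,\eta})$ with fixed point $u^*$: for $N'=M'$ this is \cref{thm:uStar}, and for $N'=M^{\prime G'T_{\projA\Qkern}u^*}$ it holds because the greedy choice yields $N'T_{\projA\Qkern}u^* = M'T_{\projA\Qkern}u^* = u^*$. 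The fixed-point bound \eqref{eq:standardFPBound} then gives $\normb{\proj V - u^*}_{\infty,\eta} \le \frac{1}{1-\gamma}\normb{\proj V - N'T_{\projA\Qkern}\proj V}_{\infty,\eta}$, and inserting $N'\projA T_{\Pkern}V$ and using $T_{\projA\Qkern}\proj V = \projA T_{\Qkern\proj}V$ together with $\Lip(N')\le1$ splits the right-hand side into $\normb{\proj V - N'\projA T_{\Pkern}V}_{\infty,\eta} + \gamma\normb{\projA(\Pkern-\Qkern\proj)V}_{\infty,\eta}$. Multiplying by the accumulated factor $\gamma B/(1-\gamma)$ reproduces exactly $\varepsilon_1(V,N')$.

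For $\varepsilon_2$ I reuse the concentrability argument of \cref{thm:muNormBound}: from $V^{\pihat} = M^{\pihat}T_{\Pkern}V^{\pihat}$ and linearity of $M^{\pihat}$ one obtains $(I - \gamma M^{\pihat}\Pkern)(U^* - V^{\pihat}) = U^* - M^{\pihat}T_{\Pkern}U^*$, whence $\normb{U^* - V^{\pihat}}_{\mu,p} \le C_{\gamma,\pihat,\Pkern,\mu,\xi}\,\frac{1}{1-\gamma}\,\normb{U^* - M^{\pihat}T_{\Pkern}U^*}_{\xi,p}$. The only change from the join-homomorphism case is that $U^*$ need no longer be a fixed point of $MT_{\Qkern\proj}$; instead I substitute $U^* = M^{\pihat}T_{\Qkern}u^*$ to get $U^* - M^{\pihat}T_{\Pkern}U^* = \gamma M^{\pihat}(\Qkern u^* - \Pkern U^*)$, whose $\norm{\cdot}_{\xi,p}$-norm is at most $\gamma\normb{\Pkern U^* - \Qkern u^*}_{\xi,p}$ by $\Lip(M^{\pihat})\le1$; bounding $\gamma\le1$ yields the stated $\varepsilon_2$. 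Taking the minimum of $\varepsilon_2$ and $\varepsilon_1(V^{\pihat}, M^{\prime G'T_{\projA\Qkern}u^*})$ for the second summand, and adding $\varepsilon_1(V^*, M')$ for the first, gives \eqref{eq:muNormBoundStochastic}.

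The main obstacle is precisely the loss of $\proj M = M'\projA$: in the join-homomorphism case this identity makes $U^*$ a fixed point of $MT_{\Qkern\proj}$, so that \cref{lem:boundOnly} applies directly and the terms $\normb{\proj V - N'\projA T_{\Pkern}V}_{\infty,\eta}$ vanish. Without it these terms survive and must be produced by a clean insertion, and the delicate point is choosing the compressed selector $N'$ so that $u^*$ remains its fixed point. This forces $N' = M^{\prime G'T_{\projA\Qkern}u^*}$ for $V=V^{\pihat}$ (the greedy selector in $\sn{W}$), since only with this choice does \eqref{eq:standardFPBound} apply with the geometric rate $1/(1-\gamma)$; any other selector would leave $\normb{\proj V - u^*}_{\infty,\eta}$ uncontrolled.
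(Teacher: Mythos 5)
Your proposal is correct and follows essentially the same route as the paper's own proof sketch: the same triangle-inequality decomposition, the same selector choices $N'=M'$ and $N'=M^{\prime G' T_{\projA\Qkern}u^*}$ with the key identity $u^* = N' T_{\projA\Qkern}u^*$, the same insertion of $N'\projA T_{\Pkern}V$ to generate the extra $\normb{\proj V - N'\projA T_{\Pkern}V}_{\infty,\eta}$ term, and the same substitution $U^* = M^{\pihat}T_{\Qkern}u^*$ in the concentrability step for $\varepsilon_2$. The only nitpick is your appeal to \cref{thm:uStar} for the contraction/fixed-point property of $M' T_{\projA\Qkern}$: that theorem formally assumes \cref{ass:joinHomomorphism}, which is dropped here, but the part you actually need follows directly from \cref{prop:LipMaxOperator} and \cref{ass:boundedNorm} (and $u^*$ is presupposed well-defined by \eqref{eq:ustardef}), so nothing breaks.
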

\begin{proof}(Sketch)
The $\varepsilon_1$ terms are obtained by appropriately modifying \cref{lem:QKernFPBound} (which is an intermediate result, presented in the appendix, that is used in the proof of \cref{lem:boundOnly}), as we describe below.
We will take $V = V^*$ ($V = V^{\pihat}$), $N = M$ (resp.~$N = M^{\pihat}$) and $N' = M'$ (resp.~$N' = M^{\prime G' T_{\projA\Qkern}u^*}$.
Then the identity $u^* = N' T_{\projA\Qkern}u^*$ holds.

Because we cannot use the identity $\proj M = M' \projA$, we need to use the following chain of inequalities:
\als{
\normb{\proj V - u^*}
&= \inf_{k \geq  1}\frac{1}{1 - \alpha^{k}} \mnormb{\proj V - (N' T_{\projA \Qkern})^k \proj V}\\
&\leq \frac{1}{1 - \alpha} \mnormb{\proj NT_{\Pkern}V - N' \projA T_{\Qkern\proj} V} \\
&\leq \frac{1}{1 - \alpha} \pb{ \mnormb{\proj NT_{\Pkern}V - N' \projA T_{\Pkern}V} + \mnormb{N' \projA T_{\Pkern}V - N' \projA T_{\Qkern\proj} V} } \\
&\leq \frac{1}{1 - \alpha} \pb{ \mnormb{\proj NT_{\Pkern}V - N' \projA T_{\Pkern}V} + \gamma \mnormb{\projA (\Pkern - \Qkern\proj)V} }.
}

To obtain $\varepsilon_2$, we cannot use that $U^* = MT_{\Qkern\proj}U^*$, so we simply write
\als{
\normb{U^* - M^{\pihat}T_{\Pkern} U^*}
&= \normb{M^{\pihat}T_{\Qkern}u^* - M^{\pihat}T_{\Pkern} U^*} \\
&\leq \normb{\Qkern u^* - \Pkern U^*}.
}
The above can be used to modify \cref{lem:boundOnly} as well, leading to analogues of \cref{thm:inftyNormBound,thm:weightedInftyNormBound} where $\proj$ is linear but not a join-homomorphism.
\end{proof}
Note that both this result and  \cref{thm:muNormBound} show a curious scaling as a function of $1/(1-\gamma)$.
In fact, the astute reader may recall that policy error bounds typically scale with $1/(1-\gamma)^2$.
A little thinking reveals that our result may be subject to the same scaling:
Just like in \cref{thm:baseline}, where $\widetilde{V}$ hides $1/(1-\gamma)$, 
in the above bounds the value functions themselves bring in another $1/(1-\gamma)$, too.
Is the scaling with $1/(1-\gamma)^2$ necessary? 
The answer is no:
Theorem 4.1 of \citet{VanRoy06:Aggregation} shows 
that in some version of state-aggregation the policy error can scale with $1/(1-\gamma)$ only
 (as a side-note, the only result so far with this property).
Thus, it may be worthwhile to look at the differences between Theorem 4.1 and the above result.
First, recall that in his Theorem 4.1 \citet{VanRoy06:Aggregation}  bounds the error of the policy $\tilde{\pi}$ that is greedy 
with respect to the fixed point $\tilde{U}^*$ of $MT_{\Qkern\proj}$, where $\proj = \proj_{\tilde{\pi}}$ is chosen to depend on the policy (for some policy $\pi$, $\proj_{\pi}$ is a weighted Euclidean projection to the compressed space induced by the aggregation, where the weights depend on the stationary distribution of ${\pi}$).
Formally, the policy  is defined by $\tilde{\pi} = GT_{\Qkern\proj_{\tilde{\pi}}} \tilde{U}^*$ where $\tilde{U}^* = MT_{\Qkern\proj_{\tilde{\pi}}} \tilde{U}^*$.
Thus, the policy whose error he bounds is different from ours in two respects:
As pointed out above, $U^* = MT_{\Qkern\proj} u^*$ (that our $\pihat$ is greedy with respect to) 
is not necessarily the fixed point of $MT_{\Qkern\proj}$. Further, our result is proven for general $\proj$.
At this time it is not clear whether with a specific choice of $\proj$ (like $\proj_{\hat{\pi}}$) the terms involved in the definition of $\epsilon_1$ would cancel
the additional $1/(1-\gamma)$ factor.
For what it is worth, we note that for the ``counterexample'' that \citet{VanRoy06:Aggregation} presents,
when $\proj = \proj_{\hat{\pi}}$, $\epsilon_1$ scales with $1/(1-\gamma)$ only (as opposed to scaling with $1/(1-\gamma)^2$),
showing that our bound has the ability to exploit the benefits of a ``good'' choice of $\proj$.
However, it remains to be seen whether this or some other systematic way of choosing $\proj$ will always cancel
the extra $1/(1-\gamma)$ factor.

To summarize, this paper advances our understanding of model errors on policy error in reinforcement learning.
We do this by improving previous bounds by using a versatile set of norms and introducing a completely new bound which has the potential of better scaling with the discount factor, while at the same time we extend the range of the models by relaxing previous assumptions.
We also showed that (some) of our bounds are unimprovable.
By effectively using the language of Banach lattices, our proofs are shorter, while at the same time hold the promise of being generalizable beyond MDPs.
We believe that our approach may lead to advances in the analysis and design of alternate approaches to reinforcement learning, namely both in approximate linear programming and approximate dynamic programming.

\section*{Acknowledgments}

This work was supported by Alberta Innovates Technology Futures and NSERC.

\bibliography{bibliography}

\appendix

\newpage
\section{List of operators}
\label{sec:listop}
For ease of reference, we present \cref{tab:operators}, which gives a summary of the operators we define and use.

\begin{table}[h]
\centering
\begin{tabular}{l l l}
\toprule
Operator 		& Between						& Definition \\
\midrule
$\Pkern^a$ 	& $\sn{V} \to \sn{V}$ 		& $(\Pkern^a V)(x) \eqdef \int V(x') \Pkern^a(dx'|x)$ \\
$\Qkern^{a}$ & $\sn{W} \to \sn{V}$ 		& $(\Qkern^{a}w)_i \eqdef \int w_{i'} \Qkern^{a}(di' | i)$ \\
$\Pkern$ 		& $\sn{V} \to \sn{V}^{\sn{A}}$ 			& $(\Pkern V)(a) (= (\Pkern V)^a ) \eqdef  \Pkern^a V$ \\
$\Qkern$ 		& $\sn{W} \to \sn{V}^{\sn{A}}$ 			& $(\Qkern V)(a) (= (\Qkern V)^a ) \eqdef  \Qkern^a V$ \\
$\proj$	& $\sn{V} \to \sn{W}$ 	& almost always a join-homomorphism \\
$\projA$	& $\sn{V}^{\sn{A}} \to \sn{W}^{\sn{A}}$	& $(\projA V)^a = \proj V,~ \forall a \in \sn{A}$  \\
$\projA \Qkern$ 	& $\sn{W} \to \sn{W}^{\sn{A}}$ 		& $(\projA\Qkern w)(a) (= (\projA\Qkern w)^a ) \eqdef  \proj\Qkern^a w$ \\
$M$ 	& $\sn{V}^{\sn{A}} \to \sn{V}$ 	& $(MV)(x) \eqdef \max_a V^a(x)$ \\
$M'$	& $\sn{W}^{\sn{A}} \to \sn{W}$	& $(M' w)_i \eqdef \max_a w^a_i$ \\
$M^{\pi}$ 	& $\sn{V}^{\sn{A}} \to \sn{V}$ 	& $(M^{\pi}V)(x) \eqdef V^{\pi(x)}(x)$ \\
$M^{\prime\pi}$ 	& $\sn{W}^{\sn{A}} \to \sn{W}$ 	& $(M^{\prime\pi}w)_i \eqdef w^{\pi(i)}_i$ \\
$G$ 	& $\sn{V}^{\sn{A}} \to \Pi$	 	& $GV(x) \eqdef \argmax_a V^a(x)$ \\
$T_{\Pkern}$ 	& $\sn{V} \to \sn{V}^{\sn{A}}$	 	& $T_{\Pkern}V \eqdef  r + \gamma \Pkern V$ \\
$T_{\Qkern}$ 	& $\sn{W} \to \sn{V}^{\sn{A}}$	 	& $T_{\Qkern}w \eqdef  r + \gamma \Qkern w$ \\
$T_{\Qkern\proj}$ 	& $\sn{V} \to \sn{V}^{\sn{A}}$	 	& $T_{\Qkern\proj}V \eqdef  T_{\Qkern}\proj V = r + \gamma \Qkern \proj V$ \\
$T_{\projA\Qkern}$ 	& $\sn{W} \to \sn{W}^{\sn{A}}$	 	& $T_{\projA\Qkern}w \eqdef \projA T_{\Qkern} w= \projA r + \gamma \projA\Qkern w$ \\
\bottomrule
\end{tabular}
\caption{Definitions of operators used in the paper.
\label{tab:operators}}
\end{table}

\section{General results}
\label{sec:generalResults}

In this section, we present some general technical results.

\begin{proposition}
\label{prop:uniqueFixedPoint}
Consider an operator $\fun{T}{\sn{V}}{\sn{V}}$ mapping a normed space $(\sn{V},\norm{\,\cdot\,}_{\sn{V}})$ to itself.
If $\Lip(T) < \infty$ and $T^m$ is a contraction for some $m > 0$, then $T$ has a unique fixed point.
\end{proposition}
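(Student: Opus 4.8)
The plan is to reduce the statement to Banach's fixed-point theorem applied to the contraction $T^m$, and then to \emph{bootstrap} from a fixed point of $T^m$ to a fixed point of $T$ by exploiting that $T$ commutes with its own iterates. First I would invoke Banach's fixed-point theorem on $T^m$: since $T^m$ is a contraction on $\sn{V}$ (which in the applications of interest, as in \cref{lem:bContractionFixedPoint}, is complete), it admits a unique fixed point $v^* \in \sn{V}$, i.e.\ $T^m v^* = v^*$. This is the only place completeness is needed, and it supplies both the existence and the uniqueness of the fixed point of the \emph{iterate} $T^m$.

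The crux is then to upgrade $v^*$ from a fixed point of $T^m$ to a fixed point of $T$ itself. For this I would apply $T$ to the identity $T^m v^* = v^*$ and use that powers of $T$ commute: $T^m(T v^*) = T(T^m v^*) = T v^*$, so $T v^*$ is \emph{also} a fixed point of $T^m$. Since $T^m$ has exactly one fixed point, $T v^* = v^*$; that is, $v^*$ is a fixed point of $T$. I would emphasize that this bootstrap uses essentially nothing about $\Lip(T)$ beyond $T$ being a well-defined self-map whose iterates commute — the hypothesis $\Lip(T) < \infty$ mainly guarantees that $T$ is continuous and that $T^m$ is genuinely Lipschitz, but it is not otherwise exploited in the argument.

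Finally, for uniqueness of the fixed point of $T$: if $v \in \sn{V}$ satisfies $T v = v$, then iterating yields $T^m v = v$, so $v$ is a fixed point of $T^m$, and uniqueness of the latter forces $v = v^*$. The main (and essentially only) obstacle is the bootstrap step — recognizing that $T v^*$ solves the same contraction fixed-point equation as $v^*$ and collapsing the two via uniqueness; everything else is a direct appeal to Banach's theorem. I would also be careful to keep the existence claim (which relies on completeness) conceptually separate from the uniqueness claim (which does not), since any fixed point of $T$ is automatically a fixed point of $T^m$.
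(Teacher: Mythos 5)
Your proof is correct, and it takes a genuinely different --- and cleaner --- route than the paper's. Both arguments start by applying Banach's fixed-point theorem to the contraction $T^m$ to obtain its unique fixed point $v^*$, and both establish uniqueness for $T$ in the same way (any fixed point of $T$ is a fixed point of $T^m$). The difference is the bootstrap step. You commute powers: $T^m(Tv^*) = T(T^m v^*) = Tv^*$, so $Tv^*$ is a fixed point of $T^m$ and uniqueness gives $Tv^* = v^*$. The paper instead notes that $v^*$ is also the unique fixed point of the contractions $T^{m^2} = (T^m)^m$ and $T^{m(m+1)} = (T^m)^{m+1}$, and then writes $v^* = T^{m(m+1)}v^* = T T^{m^2} v^* = T v^*$. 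That middle equality needs $T^{m(m+1)} = T\circ T^{m^2}$, i.e.\ $m(m+1) = m^2+1$, which holds only for $m=1$; for $m>1$, peeling factors off $T^{m(m+1)}$ in units of $T^m$ merely re-derives $T^m v^* = v^*$, so the printed chain is broken, and your commutation step is precisely the repair it needs. (The paper's style of argument can be salvaged by actually using the hypothesis $\Lip(T)<\infty$: for $k$ large enough, $T^{km}$ and $T^{km+1}$ are both contractions, their fixed points coincide with that of the contraction $T^{km(km+1)}$, and then $Tv^* = T(T^{km}v^*) = T^{km+1}v^* = v^*$; but this is more work for no gain.) Your side remarks are also accurate: the commutation route never uses $\Lip(T)<\infty$, whereas the consecutive-powers route genuinely does; and since the proposition is stated for a normed space while Banach's theorem needs completeness for \emph{existence} (uniqueness of a contraction's fixed point does not), your explicit appeal to completeness of the spaces in the intended applications (\cref{lem:bContractionFixedPoint}) addresses a point the paper's proof passes over silently.
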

\begin{proof}
Banach's fixed point theorem ensures that $T^m$ has a unique fixed point $V$, which must also be the unique fixed point of $T^{m^2}$ and $T^{m(m + 1)}$, so $V = T^{m(m + 1)} = TT^{m^2}V = TV$, so $V$ is a fixed point of $T$.
Since every fixed point of $T$ is also a fixed point of $T^m$, it follows that $V$ is the unique fixed point of $T$.
\end{proof}

\lemBContractionFixedPoint*
\begin{proof}
Since $(HT)^m$ is a contraction, \cref{prop:uniqueFixedPoint} ensures that $(HT)^m$ has a unique fixed point $W^*$, which is also the unique fixed point of $HT$.
Defining $V^* \eqdef TW^*$, we can see that $THV^* = THTW^* = TW^* = V^*$.
It remains to show that $V^*$ is the unique fixed point of $TH$, so let us assume that there exists $V' \neq V^*$ \st{} $V' = THV'$.
Then with $W' \eqdef HV'$ we have $TW' = V'$.
Now, $HTW' = HV' = W'$, so $W'$ is a fixed point of $HT$, which implies $W' = W^*$, since the fixed point of $HT$ is unique, but then $V' = TW' = TW^* = V^*$, which is a contradiction.
\end{proof}
\begin{lemma}
\label{lem:bContractionBound}
Let $(\sn{V}, \norm{\,\cdot\,})$ be a Banach space and $\fun{T}{\sn{V}}{\sn{V}}$ be an operator.
Assume that there exists $V^* \in \sn{V}$ such that $TV^* = V^*$, and that there exist constants $a < 1$ and $b$ such that for all  $m \geq 0$ we have $\Lip(T^{m+1}) \leq ba^m$.
Then for all $V \in \sn{V}$ and $m\ge 0$ such that $ba^m<1$,
\[
	\normb{V - V^*} \le \frac{1}{1 - ba^m} \normb{V - T^{m+1} V}.
\]
Further, if we take the infimum of both sides for $m$ such that $ba^m<1$, we get an equality.
\end{lemma}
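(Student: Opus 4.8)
The plan is to establish the inequality $\normb{V - V^*} \le \frac{1}{1 - ba^m}\normb{V - T^{m+1}V}$ for each fixed $m$ with $ba^m < 1$ by a standard telescoping-and-summing argument, and then to obtain the final equality by passing to the infimum over admissible $m$. First I would fix such an $m$, write $S \eqdef T^{m+1}$, and observe that by assumption $\Lip(S) \le ba^m < 1$, so $S$ is a contraction and $V^*$ (being a fixed point of $T$) is also a fixed point of $S$. The core estimate is then the well-known bound \eqref{eq:standardFPBound} applied to the contraction $S$ with contraction factor $ba^m$: since $V^* = SV^*$, I would use the triangle inequality along the orbit, namely $\normb{V - V^*} \le \normb{V - SV} + \normb{SV - SV^*}$, and bound $\normb{SV - SV^*} \le (ba^m)\normb{V - V^*}$ using $\Lip(S) \le ba^m$. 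Rearranging gives $(1 - ba^m)\normb{V - V^*} \le \normb{V - SV} = \normb{V - T^{m+1}V}$, which is exactly the claimed inequality.

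For the second part, I would take the infimum of the right-hand side over all $m \ge 0$ with $ba^m < 1$. Since the inequality holds for every such $m$, the left-hand side $\normb{V - V^*}$ is a lower bound for the right-hand side, hence $\normb{V - V^*} \le \inf_m \frac{1}{1 - ba^m}\normb{V - T^{m+1}V}$. To get equality I would argue that as $m \to \infty$ the factor $\frac{1}{1 - ba^m} \to 1$ (because $a < 1$ forces $ba^m \to 0$), while $T^{m+1}V \to V^*$ geometrically (again because $S^k$-type iterates converge to the fixed point), so $\normb{V - T^{m+1}V} \to \normb{V - V^*}$. Thus the infimum over $m$ is attained in the limit and equals $\normb{V - V^*}$, giving the reverse inequality and hence equality.

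The main obstacle I anticipate is the convergence claim $T^{m+1}V \to V^*$ needed for the equality in the limit. The hypothesis only controls the Lipschitz constants $\Lip(T^{m+1}) \le ba^m$ and asserts existence (not a priori uniqueness) of the fixed point $V^*$; I would need to verify that these Lipschitz bounds, together with $a < 1$, force geometric convergence of the orbit to $V^*$. This follows since $\normb{T^{m+1}V - V^*} = \normb{T^{m+1}V - T^{m+1}V^*} \le \Lip(T^{m+1})\normb{V - V^*} \le ba^m\normb{V - V^*} \to 0$, which I would spell out carefully. A secondary subtlety is ensuring the infimum is genuinely over a nonempty set and that passing to the limit is legitimate; this is routine once the convergence is in hand.
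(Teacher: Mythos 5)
Your proof is correct and follows essentially the same route as the paper's: the same add-and-subtract of $T^{m+1}V$ with the triangle inequality and the Lipschitz bound $\Lip(T^{m+1})\le ba^m$, followed by rearrangement, and the same limiting argument ($\tfrac{1}{1-ba^m}\to 1$ and $T^{m+1}V\to V^*$) for the equality. In fact you spell out the one step the paper leaves implicit, namely that $\normb{T^{m+1}V - V^*} = \normb{T^{m+1}V - T^{m+1}V^*} \le ba^m\normb{V-V^*}\to 0$ justifies the paper's shorthand ``$T^{\infty}V = V^*$''.
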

\begin{proof}
We have that for all $m \geq 0$,
\als{
	\normb{V - V^*} &= \normb{V - T^{m+1}V^*}\\
	&= \normb{V - T^{m+1}V  + T^{m+1}V - T^{m+1}V^*}\\
	&\leq \normb{V - T^{m+1}V} + \normb{T^{m+1}V - T^{m+1}V^*}\\
	&\leq \normb{V - T^{m+1}V} + ba^m \normb{V - V^*}.
}
To arrive at an upper-bound, we need to move the third term to the right-hand side and divide the inequality by $1 - ba^m$.
The inequality is preserved after division only for those $m$ when $ba^m < 1$, giving the result.

To see why we get the equality, note that $T^{\infty}V = V^*$. Hence,
\[
	\inf_{m : ba^m < 1}\frac{1}{1 - ba^m} \normb{V - T^{m+1} V} \leq \normb{V - V^*}.
\]
\end{proof}
\propOnlyLJH*
\begin{proof}
Consider $v \geq 0$.
We can write $v = \sum_{j=1}^m v_j e_j$, where $\pb{e_j}_{j=1}^m$ is the Euclidean basis.
Because $v \geq 0$, we can also write $v = \bigvee_{j=1}^m v_j e_j$.
By linearity of $\proj$, we have that $\proj v = \sum_{j=1}^m v_j \proj e_j$, and since $\proj$ is a join-homomorphism and linear, we also have $\proj v = \bigvee_{j=1}^m \proj (v_j e_j) = \bigvee_{j=1}^m v_j \proj e_j$.

Next, we show that for all $i$, $(\proj e_j)_i \neq 0$ for at most one $j \in [m]$.
Taking $v$ \st{} $v_i = 1$ for all $i$, we have that for all $i \in [n]$
\[
    \pb{\sum_{j=1}^m \proj e_j}_i = (\proj v)_i = \pb{\bigvee_{j=1}^m \proj e_j}_i,
\]
which implies that for all $i \in [n]$ there is at most one $j \in [m]$ \st{} $(\proj e_j)_i > 0$, and $J_i$ is defined as such $j$ if it exists, otherwise arbitrary.
Defining $a_i \eqdef (\proj e_{J_i})_i$ for ($i \in [n]$) gives the result.
\end{proof}
\propLyapunov*
\begin{proof}
Define $A(U) \eqdef \cb{U' \in \sn{U} : \ab{U'} = \ab{U}}$ for $U \in \sn{U}$.
Since $J$ is linear, $\Lip(J) =\normb{J}$. 
Since $\norm{\cdot} \eqdef \norm{\cdot}_{\infty,\nu}$ is a type of supremum norm,
$\normb{J} =  \max_a \normb{J^a}$
(the maximum over the actions and states commute).
Thus, we have that
\als{
\Lip(J)
&= \sup_{U : \norm{U} = 1} \max_a \norm{J^a U}\\
&= \sup_{U : \norm{U} = 1} \max_a \sup_{x} \frac{\ab{(J^a U)(x)}}{\nu(x)}\\
&= \sup_{x}\max_a \sup_{U > 0: \norm{U} = 1}\sup_{U' \in A(U)}  \frac{\ab{(J^a U')(x)}}{\nu(x)}.
}
Note that equality still holds in the last line by equivalence of the suprema with the supremum on the previous line.
The term $\sup_{U' \in A(U)}\frac{\ab{(J^a U')(x)}}{\nu(x)}$ can be maximized \wrt{} $U$ by maximizing $U$ subject to $\frac{U(x)}{\nu(x)} \leq 1$, for all $x \in \sn{X}, a \in \sn{A}$.
Therefore the term is maximized by $U = \nu$, and, since $A(\nu) = \cb{U' \in \sn{U} : \ab{U'} = \nu}$, we get
\[
\gamma \Lip(J) = \gamma \sup_{U : \ab{U} = \nu}\normb{J U} = \beta_{\nu, J}.
\]
\end{proof}

\section{MDP-specific results}
\label{sec:mdpResults}

In this section, we present accessory results and proofs omitted from the main text.
\Cref{lem:QKernFPBound} is an intermediate result for \cref{lem:boundOnly}.
The proof of \cref{lem:boundOnly} is also presented here.
Moreover, we present the proof of three omitted results: \cref{prop:tightness,prop:infNormHarsh,thm:muNormBoundInfty}, respectively a tightness example for \cref{thm:inftyNormBound}, an example showing that the \cref{thm:inftyNormBound} can be harsh, and a weighted supremum norm bound for the policy error in $L^p(\mu)$ norm.

\begin{lemma}
\label{lem:QKernFPBound}
Let \cref{ass:LipMaxOperator,ass:boundedQkern} hold, and assume that $\gamma\Lip(\projA \Qkern) \leq \alpha < 1$.
Then, for $V \in \cb{V^*, V^{\pihat}}$
\[
\mnormb{V - U^*} \leq \gamma\normb{(\Pkern - \Qkern\proj)V} + \frac{B\gamma^2}{1 - \alpha}\normb{\projA (\Pkern - \Qkern\proj)V},
\]
\end{lemma}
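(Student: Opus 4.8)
The plan is to prove the bound by a two-step decomposition that realizes the ``Bellman lookahead, then fixed-point bound'' reading given after \cref{lem:boundOnly}. It is convenient to treat both cases uniformly by writing $N = M$, $N' = M'$ when $V = V^*$, and $N = M^{\pihat}$, $N' = M'^{\pi'}$ when $V = V^{\pihat}$, for a suitable induced policy $\pi'$ on $\sn{I}$ (made precise below). I would first record two facts. First, the relevant fixed-point equations $V = NT_{\Pkern}V$ hold: $V^* = MT_{\Pkern}V^*$ is the Bellman optimality equation and $V^{\pihat} = M^{\pihat}T_{\Pkern}V^{\pihat}$ is the policy Bellman equation. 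Second, $U^* = NT_{\Qkern}u^*$ in both cases: for $N = M$ this is the definition of $U^*$, and for $N = M^{\pihat}$ it follows because $\pihat = GT_{\Qkern}u^*$ is greedy \wrt{} $T_{\Qkern}u^*$, hence $M^{\pihat}T_{\Qkern}u^* = MT_{\Qkern}u^* = U^*$.

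Step 1 (Bellman lookahead). Insert $NT_{\Qkern}\proj V$ and apply the triangle inequality to $\norm{V - U^*} = \norm{NT_{\Pkern}V - NT_{\Qkern}u^*}$:
\[
\norm{V - U^*} \le \norm{NT_{\Pkern}V - NT_{\Qkern}\proj V} + \norm{NT_{\Qkern}\proj V - NT_{\Qkern}u^*}.
\]
Since $\Lip(N) \le 1$ by \cref{ass:LipMaxOperator} and $T_{\Pkern}V - T_{\Qkern}\proj V = \gamma(\Pkern - \Qkern\proj)V$, the first term is at most $\gamma\norm{(\Pkern - \Qkern\proj)V}$; since $\Lip(T_{\Qkern}) = \gamma\norm{\Qkern} = \gamma B$ by \cref{ass:boundedQkern}, the second is at most $\gamma B\norm{\proj V - u^*}$. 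This reduces everything to bounding $\norm{\proj V - u^*}$ in the compressed space $\sn{W}$.

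Step 2 (fixed-point bound on $\sn{W}$). Here I would use that $u^*$ is a fixed point of the $\alpha$-contraction $N'T_{\projA\Qkern}$. For $N' = M'$ this is exactly the defining contraction of \cref{thm:uStar}; for $N' = M'^{\pi'}$ it follows from $u^* = \proj U^* = \proj N T_{\Qkern}u^* = N'\projA T_{\Qkern}u^* = N'T_{\projA\Qkern}u^*$, using the identity $\proj N = N'\projA$ together with $u^* = \proj U^*$ from \cref{thm:uStar}. Both $M'$ and $M'^{\pi'}$ are non-expansions (\cref{ass:LipMaxOperator}), so $\Lip(N'T_{\projA\Qkern}) \le \gamma\Lip(\projA\Qkern) \le \alpha < 1$, and the standard contraction bound \eqref{eq:standardFPBound} gives $\norm{\proj V - u^*} \le \frac{1}{1-\alpha}\norm{\proj V - N'T_{\projA\Qkern}\proj V}$. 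Now $\proj V = \proj N T_{\Pkern}V = N'\projA T_{\Pkern}V$ (via $\proj N = N'\projA$) and $N'T_{\projA\Qkern}\proj V = N'\projA T_{\Qkern\proj}V$ (by the definitions of $T_{\projA\Qkern}$ and $T_{\Qkern\proj}$), so using $\Lip(N') \le 1$ and linearity of $\projA$ the numerator becomes $\norm{N'\projA(T_{\Pkern} - T_{\Qkern\proj})V} \le \gamma\norm{\projA(\Pkern - \Qkern\proj)V}$. Substituting $\norm{\proj V - u^*} \le \frac{\gamma}{1-\alpha}\norm{\projA(\Pkern - \Qkern\proj)V}$ back into Step 1 yields the claim.

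The main obstacle is establishing the identity $\proj N = N'\projA$ in the $V^{\pihat}$ case, i.e. $\proj M^{\pihat} = M'^{\pi'}\projA$ for an induced $\pi'$. For $N = M$ this is precisely the relation $\proj M = M'\projA$ furnished by the join-homomorphism \cref{ass:joinHomomorphism} (applied to finite joins over the action set). For $N = M^{\pihat}$ the value-selection operator is not a join, so I would instead invoke the coordinatewise structure of linear join-homomorphisms from \cref{prop:onlyLJH}: $\proj$ reads off a single scaled coordinate $J_i$ of its argument, whence selecting action $\pihat$ before compressing coincides with compressing and then selecting action $\pi'(i) \eqdef \pihat(x_{J_i})$, giving $\proj M^{\pihat} = M'^{\pi'}\projA$ with $\pi': \sn{I} \to \sn{A}$ as required by \cref{ass:LipMaxOperator}. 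Once this identity and the resulting fixed-point property $u^* = M'^{\pi'}T_{\projA\Qkern}u^*$ are in hand, the remaining estimates are routine Lipschitz bookkeeping identical to the $V^*$ case.
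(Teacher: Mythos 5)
Your proof is correct and follows essentially the same route as the paper's: the same lookahead/triangle-inequality decomposition reducing the problem to bounding $\normb{\proj V - u^*}$, followed by the same contraction fixed-point bound for $N' T_{\projA\Qkern}$ in the compressed space via the identity $\proj N = N'\projA$. The only difference is one of explicitness: where the paper simply asserts that $N'$ is well-defined from \cref{ass:joinHomomorphism}, you justify $\proj M^{\pihat} = M'^{\pi'}\projA$ concretely through the coordinatewise structure of \cref{prop:onlyLJH}, which is if anything a more careful treatment of that step.
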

\begin{proof}
Using that $V^* = MT_{\Pkern} V^*$, $V^{\pihat} = M^{\pihat}T_{\Pkern}V^{\pihat}$ and $M^{\pihat}T_{\Qkern\proj}U^* = U^* = MT_{\Qkern\proj}U^*$, we first upper-bound, with $N = M$ ($N = M^{\pihat}$) and $V = V^*$ ($V = V^{\pihat}$),
\als{
\normb{V - U^*}
&= \normb{NT_{\Pkern}V - NT_{\Qkern\proj}U^*} \\
&\leq \normb{NT_{\Pkern}V - NT_{\Qkern\proj}V} + \normb{NT_{\Qkern\proj}V - NT_{\Qkern\proj}U^*} \\
&\leq \gamma \Lip(N)\normb{(\Pkern - \Qkern\proj)V} + \Lip(NT_{\Qkern}) \normb{\proj(V - U^*)} \\
&\leq \gamma \normb{(\Pkern - \Qkern\proj)V} + B\gamma \, \normb{\proj(V - U^*)}.
}
Given $N \in \cb{M, M^{\pihat}}$, we define $N'$ as the operator satisfying $\proj N = N' \projA$.
In particular, if $N = M$, then $N' = M'$, otherwise $N' = M^{\prime \pi_2}$ for some $\fun{\pi_2}{\sn{I}}{\sn{A}}$ (in either case $N'$ is well-defined because $\proj$ is a join-homomorphism, \cf{} \cref{ass:joinHomomorphism}).
Since $\Lip(N') \leq 1$ by \cref{ass:LipMaxOperator}, we get $\Lip(N' T_{\projA\Qkern}) \leq \gamma\Lip(\projA \Qkern) \leq \alpha < 1$.
\Cref{lem:bContractionBound} with $T = N' T_{\projA\Qkern}$ and $a = b = \alpha < 1$, combined gives for $N = M$ ($N = M^{\pihat}$) and $V = V^*$ ($V = V^{\pihat}$),
\als{
\normb{\proj (V - U^*)}
&= \inf_{k \geq  1}\frac{1}{1 - \alpha^{k}} \mnormb{\proj V - (N' T_{\projA \Qkern})^k \proj V}\\
&\leq \frac{1}{1 - \alpha} \mnormb{N' \projA T_{\Pkern}V - N' \projA T_{\Qkern\proj} V} \\
&\leq \frac{\gamma}{1 - \alpha} \Lip(N') \mnormb{\projA (\Pkern - \Qkern\proj) V} \\
&\leq \frac{\gamma}{1 - \alpha} \mnormb{\projA (\Pkern - \Qkern\proj) V},
}
where we have also used that $V = NT_{\Pkern} V$ and that $\proj N = N' \projA$.
Combining the above gives,
\als{
\normb{V - U^*}
&\leq \gamma \normb{(\Pkern - \Qkern\proj)V} + B\gamma \normb{\proj(V - U^*)}\\
&\leq \gamma\normb{(\Pkern - \Qkern\proj)V} + \frac{B\gamma^2}{1 - \alpha}\normb{\projA (\Pkern - \Qkern\proj)V}.
}
\end{proof}
\lemmaBoundOnly*
\begin{proof}
Recall that $V^*$ is the optimal value function, \ie{}, the fixed point of the Bellman optimality equation $V^* = MT_{\Pkern}V^*$.
Recall also that $V^{\pihat}$ is the value function of $\pihat$ and the fixed point of the Bellman equation $V^{\pihat} = M^{\pihat}T_{\Pkern}V^{\pihat}$.
\Cref{lem:QKernFPBound} gives us \eqref{eq:lem:boundOnly:noContraction} directly.

To prove \eqref{eq:lem:boundOnly:contraction} for $V = V^*$, we use \cref{lem:bContractionBound} with $T = MT_{\Pkern}$ and $a = b = \beta$, which gives
\als{
\normb{V^* - U^*}
&= \inf_{k \geq 1}\frac{1}{1 - \beta^k}\normb{U^* - (MT_{\Pkern})^k U^*} \\
&\leq \frac{1}{1 - \beta}\normb{MT_{\Qkern\proj}U^* - MT_{\Pkern} U^*}\\
&\leq \frac{\gamma}{1 - \beta}\Lip(M)\normb{(\Pkern - \Qkern\proj) U^*},
}
and then we plug in $\Lip(M) \leq 1$.
For \eqref{eq:lem:boundOnly:contraction} for $V = V^{\pihat}$, we observe that $\Lip(M^{\pihat}T_{\Pkern}) = \gamma\Lip(M^{\pihat}\Pkern)$, then we follow a similar approach:
\als{
\normb{V^{\pihat} - U^*}
&= \inf_{k \geq 1}\frac{1}{1 - \beta^k}\normb{U^* - (M^{\pihat}T_{\Pkern})^k U^*} \\
&\leq \frac{1}{1 - \beta}\normb{M^{\pihat}T_{\Qkern\proj}U^* - M^{\pihat}T_{\Pkern} U^*}\\
&\leq \frac{\gamma}{1 - \beta}\Lip(M^{\pihat})\normb{(\Pkern - \Qkern\proj) U^*},
}
and plug in $\Lip(M^{\pihat}) \leq 1$.
\end{proof}

The recipe for constructing the example proving \cref{prop:tightness} is simple: i) create a three-state, two-action MDP with a ``fork'' state $s_1$ leading to a high-value terminal state $s_2$ with action $a_1$ and a low-value terminal state $s_3$ with action $a_2$; ii) choose the rewards so that the immediate reward $r^{a_2}(s_1) > r^{a_1}(s_1)$, while the value of $(s_1, a_1)$ is higher than $(s_1, a_2)$; iii) make a poor model for the fork state $s_1$, so that $\pihat$ becomes nearsighted, picking $a_2$ rather than $a_1$.
We can also perturb the model for $s_3$, in order to have a desired value for $\norm{V^* - U^*}_{\infty}$.
The rest of the effort pertains to choosing the rewards and the model carefully in order to have the correct value for $\normb{(\Pkern - \Qkern\proj) U^*}_{\infty}$.
There is factor of $\frac{1}{\gamma}$ in the scaling of the rewards, as a result of requiring the return from $s_1$ after the first action to dominate the immediate reward at $s_1$, and the rewards also scale with $\max{\tau}$ for the bound to scale.
The example underlying \cref{prop:tightness} is also well-defined for $\varepsilon = 0$, but then $GT_{\projA \Qkern}u^*$ is no longer unique: It can yield an optimal policy or a policy that is $\tau$-suboptimal in $\normb{\,\cdot\,}_{\infty}$, depending on how ties are broken.
\begin{restatable}[\Cref{thm:inftyNormBound} is tight]{proposition}{propTightness}
\label{prop:tightness}
There exist $\Pkern, \Qkern$ and $\proj$ \st{} for every $\gamma \in (0, 1)$, $\tau \geq 0$ and $\varepsilon \in (0, 1)$ there exists $r \in \sn{V}^{\sn{A}}$ (the rewards scale with $\frac{1 - \gamma^2}{\gamma}\tau$) \st{} $\Lip(\Qkern\proj) < \infty$, $\Lip(\projA\Qkern) \leq 1$, $\frac{2\gamma}{1 - \gamma}\normb{(\Pkern - \Qkern\proj) U^*}_{\infty} = \tau$, and
$\normb{V^* - V^{\pihat}}_{\infty} = (1 - \varepsilon)\tau$.
Thus, \cref{thm:inftyNormBound} can be made arbitrarily tight.
\end{restatable}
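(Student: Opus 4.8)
The plan is to exhibit an explicit worst case following the recipe sketched above: a three-state, two-action MDP together with a factored linear model whose only purpose is to mislead the greedy choice at a single ``fork'' state. The target is to make the triangle inequality \eqref{eq:triangleIneq} underlying \cref{thm:inftyNormBound}, together with the two contraction bounds \eqref{eq:lem:boundOnly:contraction} of \cref{lem:boundOnly}, simultaneously tight, so that $\normb{V^* - V^{\pihat}}_{\infty}$ comes within a factor $(1-\varepsilon)$ of $2\varepsilon_2 = \frac{2\gamma}{1-\gamma}\normb{(\Pkern - \Qkern\proj)U^*}_{\infty} = \tau$. Concretely, I would take $\sn{X} = \cb{s_1,s_2,s_3}$ and $\sn{A} = \cb{a_1,a_2}$, with deterministic true dynamics sending $s_1$ to $s_2$ under $a_1$ and to $s_3$ under $a_2$, and with $s_2,s_3$ absorbing; $s_2$ is the high-value and $s_3$ the low-value terminal. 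Choosing $\sn{W} = \sn{V} = \rl{3}$ and $\proj$ to be the identity point-evaluator makes the model unfactored, so that $\projA\Qkern = \Qkern$, $u^* = U^*$ is the optimal value of the model MDP, and \cref{ass:boundedNorm} reduces to $\normb{\Qkern} \le 1$, which is enforced by taking each $\Qkern^a$ stochastic; by \cref{thm:uStar} this guarantees that $u^*$ is well-defined and that $\Lip(\Qkern\proj) < \infty$.

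The model is made ``poor'' at the fork: its $a_1$-transition is perturbed so that the model undervalues the action leading to $s_2$, which forces $\pihat = GT_{\Qkern}u^*$ to be nearsighted and select the action $a_2$ carrying the larger immediate reward. In addition I would perturb the model at $s_3$, letting the modelled $s_3$ leak with small probability $q$ into $s_2$: this shifts $U^*(s_3)$ upward by an amount of order $q/(1-\gamma)$ while keeping the one-step error at $s_3$ only of order $q$. This ``leverage'' is the key device, since it lets me fix $\normb{(\Pkern-\Qkern\proj)U^*}_{\infty}$, and hence $\tau$, essentially independently of the reward-determined policy error; indeed the perturbation at $s_3$ is precisely what controls the term $\normb{V^*-U^*}_{\infty}$, placing its maximizer at $s_3$ where it equals $\varepsilon_2$, while the poorness at the fork controls the complementary term $\normb{U^*-V^{\pihat}}_{\infty}$ at $s_1$.

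With the structure fixed, everything reduces to a finite linear computation. I would solve the model Bellman optimality equation in closed form for $U^*$, verify $\pihat(s_1) = a_2$, and then read off (i) the model error $\normb{(\Pkern-\Qkern\proj)U^*}_{\infty}$, located at the perturbed states, and (ii) the true values $V^*$ and $V^{\pihat}$, which differ only at $s_1$, so that $\normb{V^*-V^{\pihat}}_{\infty} = V^*(s_1) - V^{\pihat}(s_1)$. The rewards then enter affinely: ordering the immediate rewards $r^{a_2}(s_1) > r^{a_1}(s_1)$ while choosing the terminal rewards so that $(s_1,a_1)$ has the higher true value makes both displayed quantities affine in a common reward scale, and I would solve the two equations $\frac{2\gamma}{1-\gamma}\normb{(\Pkern-\Qkern\proj)U^*}_{\infty} = \tau$ and $\normb{V^*-V^{\pihat}}_{\infty} = (1-\varepsilon)\tau$ for that scale. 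The factor $\frac{1-\gamma^2}{\gamma}\tau$ and the compensating $1/\gamma$ (ensuring that the post-first-action return dominates the immediate reward, so the value ordering at the fork is opposite to the immediate-reward ordering) drop out of this linear solve.

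The main obstacle is exactly this simultaneous tuning: the model error, the greedy choice at $s_1$, and the magnitude of the realized policy error are coupled through $U^*$, and it is the $s_3$-perturbation that decouples them and lets the policy error be driven toward the full bound $\tau$ as $\varepsilon \downarrow 0$. Some care is also needed at the boundary of the parameter range, where $G$ is indifferent at $s_1$ and the conclusion degenerates (as noted above, $\varepsilon = 0$ is attainable only up to tie-breaking), so keeping $\varepsilon > 0$ and the relevant inequalities strict avoids this. Finally one must check that the fixed choice of $\Pkern,\Qkern,\proj$ satisfies $\Lip(\projA\Qkern) \le 1$ for every $\gamma$ and $\Lip(\Qkern\proj) < \infty$, so that \cref{thm:uStar,thm:inftyNormBound} apply throughout.
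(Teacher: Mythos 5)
Your high-level recipe resembles the paper's (a three-state fork MDP, one-step model errors whose effect on fixed points is amplified by $\frac{1}{1-\gamma}$, rewards solved for at the end), and taking $\proj$ to be the identity with a stochastic $\Qkern$ is legitimate, since the statement only asserts existence of \emph{some} $\Pkern,\Qkern,\proj$. However, the \emph{placement} of your model error has a quantitative flaw that breaks the claim. You keep the model exact at the good terminal $s_2$ and perturb it only at the fork $s_1$ and at $s_3$. Write $\delta \eqdef \normb{(\Pkern-\Qkern\proj)U^*}_{\infty}$, so every one-step error has magnitude at most $\delta$ and $\tau = \frac{2\gamma}{1-\gamma}\delta$. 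Since $\pi^*$ and $\pihat$ agree at the absorbing states, the policy error sits entirely at $s_1$, namely $\normb{V^*-V^{\pihat}}_{\infty} = (r^{a_1}(s_1)-r^{a_2}(s_1))+\gamma\bigl(V^*(s_2)-V^*(s_3)\bigr)$, and the greedy condition that forces $\pihat(s_1)=a_2$ gives $r^{a_1}(s_1)-r^{a_2}(s_1) \le \gamma\bigl[(\Qkern^{a_2}U^*)(s_1)-(\Qkern^{a_1}U^*)(s_1)\bigr]$. Inserting $\pm U^*(s_2)$ and $\pm U^*(s_3)$ and regrouping,
\begin{align*}
\normb{V^*-V^{\pihat}}_{\infty}
&\le \gamma\bigl[(\Qkern^{a_2}U^*)(s_1)-U^*(s_3)\bigr]
+ \gamma\bigl[U^*(s_3)-V^*(s_3)\bigr] \\
&\phantom{\le}~+ \gamma\bigl[V^*(s_2)-U^*(s_2)\bigr]
+ \gamma\bigl[U^*(s_2)-(\Qkern^{a_1}U^*)(s_1)\bigr] \\
&\le \gamma\delta + \frac{\gamma^2}{1-\gamma}\delta + 0 + \gamma\delta
\;=\; \frac{\gamma(2-\gamma)}{1-\gamma}\,\delta
\;=\; \Bigl(1-\frac{\gamma}{2}\Bigr)\tau,
\end{align*}
where the first and last brackets are (up to sign) one-step model errors at $(s_1,a_2)$ and $(s_1,a_1)$, the second is at most $\frac{\gamma}{1-\gamma}\delta$ (by your own leak computation, or by \cref{lem:boundOnly}~\eqref{eq:lem:boundOnly:contraction}), and the third vanishes because your model is exact at $s_2$. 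So a single leak caps the achievable policy error at $(1-\gamma/2)\tau$: your construction cannot realize $(1-\varepsilon)\tau$ for any $\varepsilon<\gamma/2$, a shortfall approaching a factor of $2$ as $\gamma\to 1$.

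The root cause is a conflation of ``where $\normb{V^*-U^*}_{\infty}$ is maximized'' with ``what feeds the policy error''. The leak makes $U^*-V^*$ large at $s_3$, but that deviation is invisible to $V^*-V^{\pihat}$ because both policies behave identically there; only the deviations evaluated at $s_1$ matter, and these are fed by the model's value errors at the \emph{two} successors of the fork. To approach $\tau$ you need two amplified deviations of opposite sign simultaneously: $U^*(s_3)-V^*(s_3)\approx\frac{\gamma}{1-\gamma}\delta$ (model too optimistic where $\pihat$ goes) and $V^*(s_2)-U^*(s_2)\approx\frac{\gamma}{1-\gamma}\delta$ (model too pessimistic where $\pi^*$ goes). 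The paper engineers exactly this with one device: its model swaps the two terminal states (rows $(0,1)$ at $x_2$ and $(1,0)$ at $x_3$) and uses antisymmetric rewards $r(x_2)=-r(x_3)$, so that $U^*(x_2)=-U^*(x_3)$, all four one-step errors share the common magnitude $2U^*(x_2)=\delta$, and both value deviations equal $\frac{\gamma}{1-\gamma}\delta=\tau/2$ with the required signs. Its non-stochastic fork rows $(-1,0),(0,-1)$ and the genuinely compressing $\proj$ are not what you are missing: a stochastic model with leaks of probability $q$ in \emph{both} directions, $s_2\to s_3$ and $s_3\to s_2$, together with $q$-leaks at the fork, also attains policy error $\frac{2\gamma}{1-\gamma}\delta-m$ with $m$ the tie-breaking margin. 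What is essential, and absent from your plan, is the second, oppositely-signed perturbation at $s_2$.
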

\begin{proof}
The set of states is $\sn{X} = \cb{x_1,\ldots,x_3}$, the set of actions is $\sn{A} = \cb{a_1, a_2}$ and the transition probability kernel is specified by $\Pkern$ as follows:
\als{
\Pkern^{a_1} = \left( \begin{array}{c c c}
0 & 1 & 0\\
0 & 1 & 0\\
0 & 0 & 1
\end{array}\right), & & \Pkern^{a_2} = \left( \begin{array}{c c c}
0 & 0 & 1\\
0 & 1 & 0\\
0 & 0 & 1
\end{array}\right)
}
We let $\sn{W} = \rl{2}$ and $\proj V \eqdef (V(x_2), V(x_3))\tra$.

The model is
\als{
\Qkern^{a_1} = \left( \begin{array}{c c}
-1 & 0\\
0 & 1\\
1 & 0
\end{array}\right), & & \Qkern^{a_2} = \left( \begin{array}{c c}
0 & -1\\
0 & 1\\
1 & 0
\end{array}\right).
}

Given $\gamma \in (0,1)$, $\tau \geq 0$ and $\varepsilon \in (0, 1)$, define
\als{
r^{a_1} = \left( \begin{array}{c}
-\frac{\tau}{4}(2\varepsilon + \gamma - 1)\\
\frac{\tau(1 - \gamma^2)}{4\gamma}\\
-\frac{\tau(1 - \gamma^2)}{4\gamma}
\end{array}\right), & & r^{a_2} = \left( \begin{array}{c}
\frac{\tau}{4}(2\varepsilon + \gamma - 1)\\
\frac{\tau(1 - \gamma^2)}{4\gamma}\\
-\frac{\tau(1 - \gamma^2)}{4\gamma}
\end{array}\right),
}
which gives $V^* = \frac{\tau}{4}\pb{2(1 - \varepsilon), \frac{1 + \gamma}{\gamma}, -\frac{1 + \gamma}{\gamma}}\tra$ and $U^* = \frac{\tau}{4}\pb{ 2\varepsilon, \frac{1 - \gamma}{\gamma}, -\frac{1 - \gamma}{\gamma} }\tra$.
Given that
\als{
(\Pkern^{a_1} - \Qkern^{a_1}\proj)U^* = \left( \begin{array}{c}
U^*_2 + U^*_2\\
U^*_2 - U^*_3\\
U^*_3 - U^*_2
\end{array}\right), & & (\Pkern^{a_2} - \Qkern^{a_2}\proj)U^* = \left( \begin{array}{c}
U^*_3 + U^*_3\\
U^*_2 - U^*_3\\
U^*_3 - U^*_2
\end{array}\right),
}
and that $U^*_2 = -U^*_3$, we have $\normb{(\Pkern - \Qkern\proj)U^*} = 2U_2^* = \frac{1 - \gamma}{2\gamma}\tau$, which gives $\frac{2\gamma}{1 - \gamma}\normb{(\Pkern - \Qkern\proj)U^*} = \tau$.
It can be seen also that $\pihat(x_1) = a_2$ (and that the policy in $x_2$ and $x_3$ is irrelevant), so $\normb{V^* - V^{\pihat}} = (1 - \varepsilon)\tau$, since $r^{a_2}(x_1) + \gamma V^*_3 = -V^*_1 = -(1 - \varepsilon)\frac{\tau}{2}$.
We note in passing that $\normb{V^* - U^*} = \ab{\frac{\tau}{2} - \tau\varepsilon}$ and that $\normb{V^{\pihat} - U^*} = \frac{\tau}{2}$.
\end{proof}
\Cref{prop:infNormHarsh} is based on the natural argument that the model does not need to be good in states that are not visited by an optimal policy: i) we can extend the example in \cref{prop:tightness} with an initial state with two actions: ``stay'', or  ``go to the fork state''; ii) we pick the value of staying to be higher than the value of going to the fork state; iii) we pick an accurate model at the initial state, so that both $\pihat$ and $\pi^*$ choose to stay there (rather than go to the fork state).
The policy error is zero when we take $\mu$ that puts measure one on the initial state, however $\pihat$ is still near-sighted in the fork state, and it suffers the supremum norm error outlined in \cref{prop:tightness}.

\begin{restatable}[The supremum norm is harsh]{proposition}{propInfNormHarsh}
\label{prop:infNormHarsh}
There exist $\Pkern, \Qkern$ and $\proj$ \st{} for every $\gamma \in (0, 1)$ and $\tau > 0$, there exists $r \in \sn{V}^{\sn{A}}$ (the rewards scale with $\frac{1 - \gamma^2}{\gamma}\tau$) \st{} $\Lip(\Qkern\proj) < \infty$, $\Lip(\projA\Qkern) \leq 1$, $\normb{V - V^{\pihat}}_{\infty} = \tau$ and $\normb{V^* - V^{\pihat}}_{\mu,p} = \normb{V^* - V^{\pihat}}_{\xi,p} = 0$ where $\mu$  and $\xi$ are stationary \wrt{} $\pi^*$ and $\pihat$, respectively.
\end{restatable}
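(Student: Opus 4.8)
The plan is to enlarge the three-state ``fork gadget'' of \cref{prop:tightness} with a new initial state $x_0$ carrying a \emph{reliable} self-loop, and to tune the rewards so that both $\pi^*$ and $\pihat$ elect to stay at $x_0$ forever. Concretely, I would relabel the states of \cref{prop:tightness} so that the fork sits at $x_1$ and the two terminals at $x_2,x_3$, and introduce $x_0$ with action $a_1$ inducing the self-loop $x_0\to x_0$ and action $a_2$ inducing the transition $x_0\to x_1$ into the fork. I would take $\sn{W}=\rl{3}$ with $\proj V=\pb{V(x_0),V(x_2),V(x_3)}\tra$, still dropping the fork coordinate, and extend $\Qkern$ so that it is exact at $x_0$: the self-loop row of $\Qkern^{a_1}$ reads off the $x_0$-coordinate, and the ``go'' row of $\Qkern^{a_2}$ is a linear functional calibrated (using the reward $r^{a_2}(x_0)$ and the weights on $u^*_{x_2},u^*_{x_3}$) so that, evaluated at the compressed fixed point $u^*$, it reproduces the model's own appraisal of the fork. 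The fork rows are copied verbatim from \cref{prop:tightness}; since only rows with a single nonzero entry are added for the self-loop, this preserves $\Lip(\projA\Qkern)\le 1$ and $\Lip(\Qkern\proj)<\infty$.

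Next I would fix the rewards. The fork-gadget rewards are taken and rescaled exactly as in \cref{prop:tightness} (so they scale with $\tfrac{1-\gamma^2}{\gamma}\tau$), with the free parameter chosen so that the fork alone produces a supremum-norm policy error equal to the target $\tau$. The stay reward $r^{a_1}(x_0)$ is then chosen large enough that staying strictly dominates going to the fork in the \emph{true} MDP, i.e. $r^{a_1}(x_0)/(1-\gamma)>r^{a_2}(x_0)+\gamma V^*(x_1)$, which forces $\pi^*(x_0)=a_1$. The crucial observation is that the fork model \emph{undervalues} the fork, since $U^*(x_1)<V^*(x_1)$ in \cref{prop:tightness}; hence the model's appraisal of ``go'' at $x_0$ is no larger than the true one, so the same dominance holds a fortiori under the model, giving $\pihat(x_0)=a_1$ as well.

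The verification is then routine. Solving $u^*=M'T_{\projA\Qkern}u^*$ coordinatewise shows that $u^*$ agrees with the fork gadget's fixed point on $x_2,x_3$ and that the self-loop equation forces $u^*_{x_0}=r^{a_1}(x_0)/(1-\gamma)$; since staying is greedy, $V^{\pihat}(x_0)=u^*_{x_0}=V^*(x_0)=r^{a_1}(x_0)/(1-\gamma)$. Because $x_0$ is absorbing under both $\pi^*$ and $\pihat$, the Dirac measure $\delta_{x_0}$ is stationary for both, so I take $\mu=\xi=\delta_{x_0}$; the two values coincide at $x_0$, giving $\normb{V^*-V^{\pihat}}_{\mu,p}=\normb{V^*-V^{\pihat}}_{\xi,p}=0$. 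Meanwhile the fork gadget is untouched, so the near-sighted choice $\pihat(x_1)=a_2$ persists and $\normb{V^*-V^{\pihat}}_{\infty}=\tau$, exactly as computed in \cref{prop:tightness}.

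The main obstacle is the simultaneous bookkeeping at $x_0$: I must enrich both the model and the compressed space without perturbing the delicate fork computation that yields the suboptimal action, while still forcing ``stay'' to win \emph{both} the true and the model-based comparison at $x_0$. The observation that dissolves this difficulty is precisely the fork's undervaluation $U^*(x_1)<V^*(x_1)$: exactness at $x_0$ together with true optimality of staying automatically yields model-optimality of staying, so the two policies can be made to agree on the only state seen by $\mu$ and $\xi$, even though they disagree sharply at the fork that neither policy visits.
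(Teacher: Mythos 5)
Your overall plan---graft an absorbing ``stay'' state onto the fork gadget of \cref{prop:tightness}, force both $\pi^*$ and $\pihat$ to choose it, and take $\mu=\xi$ to be the Dirac measure there---is exactly the paper's construction (the paper calls the new state $x_4$ and fixes $\varepsilon=\tfrac12$). The gap is in your treatment of the ``go'' row of $\Qkern^{a_2}$ at the new state. You calibrate it, with weights on the coordinates $u^*_{x_2},u^*_{x_3}$, so that it evaluates at $u^*$ to the model's fork value $U^*(x_1)$. But in \cref{prop:tightness} one has $u^*_{x_2}=-u^*_{x_3}=\frac{\tau_t(1-\gamma)}{4\gamma}$ and $U^*(x_1)=\frac{\tau_t\varepsilon}{2}$ (writing $\tau_t$ for that proposition's parameter), so the calibration forces the row $c$ to satisfy $c_{x_2}-c_{x_3}=\frac{2\gamma\varepsilon}{1-\gamma}$. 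This breaks the statement's quantifier order: the proposition asserts that $\Pkern,\Qkern,\proj$ are fixed \emph{before} $\gamma$ and $\tau$ are given (only $r$ may depend on them), yet your row depends on $\gamma$. Nor can a fixed nonzero row rescue this: fixing $c_{x_2}-c_{x_3}=\kappa>0$ forces $\varepsilon=\frac{\kappa(1-\gamma)}{2\gamma}$, which violates $\varepsilon\le\tfrac12$ whenever $\gamma<\frac{\kappa}{1+\kappa}$ (and even exceeds $1$ for small $\gamma$). Note that $\varepsilon\le\tfrac12$ is itself an unstated requirement of your argument: your ``crucial observation'' $U^*(x_1)\le V^*(x_1)$ is not automatic in \cref{prop:tightness}; it holds iff $\varepsilon\le\tfrac12$. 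Separately, $\Lip(\projA\Qkern)\le1$ requires $\ab{c_{x_2}}+\ab{c_{x_3}}\le1$, i.e., $\varepsilon\le\frac{1-\gamma}{2\gamma}$, again entangling the model with $\gamma$; your remark that the added rows have ``a single nonzero entry'' covers the self-loop row but not this calibrated one.

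The repair is the paper's simpler choice: make the ``go'' row at the new state identically zero and set the corresponding reward to zero. Then the model appraises ``go'' at exactly $0$ and ``stay'' at $r^{a_1}(x_0)/(1-\gamma)>0$, so $\pihat$ stays with no calibration and no undervaluation argument, and $\Qkern$ remains a fixed $0/1$ matrix independent of $\gamma$ and $\tau$ (true optimality of staying is arranged by the stay reward, as you already do). The rest of your verification---the fork computation being untouched, $u^*_{x_0}=r^{a_1}(x_0)/(1-\gamma)$, stationarity of the Dirac measures under both policies, and the supremum-norm error being realized at the fork---is correct and matches the paper.
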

\begin{proof}
Pick any $\tau' > 0$.
Consider $\Pkern, \Qkern, r$ as in \cref{prop:tightness}, for the choice of $\varepsilon = \frac{1}{2}$ and $\tau' = \frac{\tau}{2}$.
Add a state, $x_4$, to $\sn{X}$, redefine $\proj V \eqdef (V(x_2), V(x_3), V(x_4))\tra$, let $\Pkern^{a_1}_{4,4} = 1$, $\Pkern^{a_2}_{4,1} = 1$, $\Qkern^{a}_{i,4} = 0$ for all $a$ and $i \neq 4$, and let also $\Qkern^{a}_{4,i} = \Pkern^{a}_{4,i}$ for all $a, i$.
Finally, let $r^{a_1}(x_4) = 2(1 - \gamma)\tau'$ and $r^{a_2}(x_4) = 0$

Thus, $V^*_4 = 2\tau'$, $\pi^*(x_4) = a_1$, $U^*_1 = V^*_1 = \tau'$, $\pihat(x_4) = a_1$ and $U^*_4 = 2\tau'$.
Moreover, the distribution $\mu$ ($\xi$) defined by $\mu(x_4) \eqdef 1$ ($\xi(x_4) \eqdef 1$) is stationary \wrt{} $\pi^*$ ($\pihat$).
This gives $\normb{V^* - V^{\pihat}}_{\mu,p} = \normb{V^* - V^{\pihat}}_{\xi,p} = 0$ as desired, and $\normb{V^* - V^{\pihat}}_{\infty} = \tau'$, which implies the result.
\end{proof}

To conclude, we present \cref{thm:muNormBoundInfty}.

\begin{restatable}[Weighted supremum norm bound for the policy error in $L^p(\mu)$ norm]{theorem}{thmMuNormBoundInfty}
\label{thm:muNormBoundInfty}
Let $\pihat$ be the policy derived from the factored linear model defined using \eqref{eq:pihatdef} and \eqref{eq:ustardef}.
If \cref{ass:boundedQkern,ass:Lyapunov} holds for the weighted supremum norm over $\sn{V}^{\sn{A}}$ and $\sn{W}^{\sn{A}}$, then
\als{
\normb{V^* - V^{\pihat}}_{\mu,p} \leq \norm{\nu}_{\mu,p} \pb{ \varepsilon(V^*) + \varepsilon(V^{\pihat}) },
}
where $\varepsilon(V) = \min(\varepsilon_1(V),\varepsilon_2)$, and
\begin{align*}
\varepsilon_1(V) &= \gamma \normb{(\Pkern - \Qkern\proj)V}_{\infty, \nu} + \frac{B\gamma^2}{1 - \beta_{\eta, \projA\Qkern}} \, \normb{\proj(\Pkern - \Qkern\proj)V}_{\infty, \eta} \\
\varepsilon_2 &= \frac{\gamma}{1 - \beta_{\nu, \Pkern}} \, \mnormb{(\Pkern  - \Qkern\proj)U^*}_{\infty, \nu}.
\end{align*}
\end{restatable}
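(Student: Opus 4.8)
The plan is to derive \cref{thm:muNormBoundInfty} as a direct corollary of \cref{thm:weightedInftyNormBound}, passing from the $\nu$-weighted supremum norm to the $L^p(\mu)$ norm by a single pointwise-domination inequality. First I would note that the hypotheses assumed here---\cref{ass:boundedQkern,ass:Lyapunov} holding for the weighted supremum norms over $\sn{V}^{\sn{A}}$ and $\sn{W}^{\sn{A}}$---are exactly those of \cref{thm:weightedInftyNormBound}. Invoking that theorem would then give
\[
\normb{V^* - V^{\pihat}}_{\infty, \nu} \leq \varepsilon(V^*) + \varepsilon(V^{\pihat}),
\]
where crucially the terms $\varepsilon_1(V)$ and $\varepsilon_2$ produced by \cref{thm:weightedInftyNormBound} coincide verbatim with those in the present statement, so no re-derivation of the error terms is required.

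The only remaining step would be the norm conversion. By the very definition of the weighted supremum norm, every $f: \sn{X} \to \rl{}$ obeys the pointwise bound $\ab{f(x)} \leq \normb{f}_{\infty, \nu}\,\nu(x)$ for all $x$. Raising to the $p$-th power and integrating against $\mu$ would yield
\[
\normb{f}_{\mu,p}^p = \int \ab{f(x)}^p \, d\mu(x) \leq \normb{f}_{\infty, \nu}^p \int \nu(x)^p \, d\mu(x) = \normb{f}_{\infty, \nu}^p \, \normb{\nu}_{\mu,p}^p,
\]
so that $\normb{f}_{\mu,p} \leq \normb{\nu}_{\mu,p}\,\normb{f}_{\infty, \nu}$. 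Specializing to $f = V^* - V^{\pihat}$ and chaining this with the weighted-supremum bound above would deliver the claimed inequality.

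I expect no genuine obstacle here: the entire content is the elementary fact that the $L^p(\mu)$ norm is dominated by the $\nu$-weighted supremum norm times the scalar $\normb{\nu}_{\mu,p}$. The only matters requiring attention would be bookkeeping---verifying that the $\varepsilon$ terms match those of \cref{thm:weightedInftyNormBound} termwise---and observing that if $\normb{\nu}_{\mu,p} = \infty$ the bound holds trivially, so no separate finiteness caveat on $\nu$ is needed.
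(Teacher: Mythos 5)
Your proposal is correct and follows essentially the same route as the paper: the paper's proof likewise invokes \cref{thm:weightedInftyNormBound} and converts norms via the single inequality $\norm{V}_{\mu,p} \leq \norm{\nu}_{\mu,p}\,\norm{V}_{\infty,\nu}$, which is exactly your pointwise-domination step. Your write-up merely spells out the integration argument that the paper states in one line.
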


\begin{proof}(of \cref{thm:muNormBoundInfty})
Since
\[
    \norm{V}_{\mu} \leq (\mu(\nu^p))^\frac{1}{p}\norm{\ab{V}^p}_{\infty,\nu^p} = \norm{\nu}_{\mu,p} \, \norm{V}_{\infty,\nu},
\]
we can apply \cref{thm:weightedInftyNormBound} to obtain the result.
\end{proof}

\section{Issues with bounding $\normb{U^* - V^*}_{\infty}$ instead of the policy error}
\label{sec:valueBoundIssues}

As we indicated in \cref{sec:baseline}, \citet{ormoneit2002kernel,barreto2011reinforcement,barreto2011computing,precup2012online,barreto2014policy,barreto2014practical}
\footnote{To be precise, the proof of \Citet{ormoneit2002kernel}'s Theorem 2 implies that this quantity converges to zero as the model error converges to zero (their analysis confounds the estimation and approximation errors).
Their Theorem 3, using an additional argument, is concerned with the probability of choosing a suboptimal action when using the approximate model.}
bound $\norm{V^* - \widetilde{V}}_{\infty}$ (not the policy error).
We can show by counterexample that this is not is not the correct quantity to bound in order to understand the quality of $\pihat$, and that the policy error should be bounded instead.
The recipe for constructing the counterexample proving this \cref{prop:errorGaps} is similar to the one used in \cref{prop:tightness}.

\begin{restatable}[Controlling only $\normb{U^* - V^*}_{\infty}$ is not enough]{proposition}{propErrorGaps}
\label{prop:errorGaps}
There exist $\Pkern, \Qkern$ and $\proj$ \st{}
satisfying  \cref{ass:boundedQkern,ass:boundedNorm} such that
for every $\gamma \in (0, 1)$, $\tau_1 \geq 0$ and $\tau_2 \geq 0$
 there exists a reward function $r \in \sn{V}^{\sn{A}}$ with $\norm{r}_{\infty}\le 2(\tau_1\vee \tau_2)/\gamma$ 
 \st{}
  $\normb{V^* - U^*} = \tau_1$, $\normb{V^{\pihat} - U^*} = \tau_2$ and
$\normb{V^* - V^{\pihat}}_{\infty} = \tau_1 + \tau_2$.
The rewards scale proportionally to $\frac{1 - \gamma}{\gamma}\max\cb{\tau_1, \tau_2}$.
\end{restatable}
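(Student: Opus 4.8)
The plan is to exhibit an explicit finite MDP together with a factored linear model, mirroring the ``fork'' construction behind \cref{prop:tightness}, but now carrying two \emph{independent} error parameters. I would keep a three-state, two-action MDP $\sn{X} = \cb{x_1, x_2, x_3}$, $\sn{A} = \cb{a_1, a_2}$ with $x_1$ a fork whose action $a_1$ leads to a high-value absorbing state $x_2$ and whose action $a_2$ leads to a low-value absorbing state $x_3$, and take $\sn{W} = \rl{2}$ with $\proj V = (V(x_2), V(x_3))\tra$, exactly as in \cref{prop:tightness}. The transition kernel $\Pkern$ and the left factor $\Qkern$ are fixed once and for all; only the reward $r$ will depend on $\gamma, \tau_1, \tau_2$. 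The entries of $\Qkern$ are chosen so that $\norm{\Qkern} < \infty$ (automatic for a fixed finite matrix, giving \cref{ass:boundedQkern}) and $\norm{\projA \Qkern} \le 1$ (\cref{ass:boundedNorm}), and so that the model's one-step lookahead $T_{\Qkern}$ is biased toward the suboptimal action $a_2$ at the fork, making $\pihat$ near-sighted there.

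Next I would compute the three value functions in closed form. Since $M' T_{\projA\Qkern}$ is a contraction (\cref{thm:uStar}), $u^*$ and hence $U^* = MT_{\Qkern}u^*$ are obtained by solving a $2\times 2$ fixed-point system; $V^*$ follows from the Bellman optimality equation and $V^{\pihat}$ from the linear Bellman equation for the near-sighted policy (which at the absorbing states reduces to a geometric series). The crucial design goal is to arrange the reward so that at the fork the three scalars are collinear with $U^*_1$ strictly between the other two, namely $V^*_1 - U^*_1 = \tau_1$ and $U^*_1 - V^{\pihat}_1 = \tau_2$ with a common sign, while at the absorbing states $x_2, x_3$ the corresponding gaps are no larger in magnitude. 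This forces $\normb{V^* - U^*}_{\infty} = \tau_1$, $\normb{V^{\pihat} - U^*}_{\infty} = \tau_2$, and, by tightness of the triangle inequality at $x_1$, $\normb{V^* - V^{\pihat}}_{\infty} = \tau_1 + \tau_2$.

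Solving the resulting small linear system pins down $r^{a_1}(x_1), r^{a_2}(x_1)$ and the absorbing-state rewards as explicit affine functions of $\tau_1$ and $\tau_2$, scaling like $\frac{1-\gamma}{\gamma}\max\cb{\tau_1, \tau_2}$ as claimed, from which $\norm{r}_{\infty} \le 2(\tau_1 \vee \tau_2)/\gamma$ can be read off directly. It then remains to confirm two inequality-type facts rather than equalities: that $\pihat(x_1) = a_2$ indeed holds for the chosen rewards (the near-sightedness condition, which amounts to checking which action $T_{\Qkern}u^*$ prefers at $x_1$), and that the absorbing states do not dominate any of the three suprema.

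The main obstacle I anticipate is the simultaneous satisfaction of the sign and ordering constraints: I must make $U^*_1$ land between $V^*_1$ and $V^{\pihat}_1$ with the prescribed gaps while independently forcing the greedy choice under the model to be the wrong action $a_2$, and do so for every $\gamma \in (0,1)$ and all $\tau_1, \tau_2 \ge 0$, including the degenerate cases $\tau_1 = 0$ or $\tau_2 = 0$. The case $\tau_1 = 0$ is the conceptually important one: there $U^* = V^*$, yet $\pihat$ can still be $\tau_2$-suboptimal because the greedy step uses the wrong kernel $\Qkern$ rather than $\Pkern$, so matching the optimal value function is genuinely insufficient to control the policy error. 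As in the $\varepsilon = 0$ remark preceding \cref{prop:tightness}, at such boundaries ties in $G$ must be broken toward $a_2$. Handling the regime $\tau_1 > \tau_2$ may require mirroring the roles of the two actions relative to the \cref{prop:tightness} template, but the remaining bookkeeping is routine.
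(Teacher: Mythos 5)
Your overall strategy (an explicit three-state ``fork'' counterexample) matches the paper's, but the template you commit to cannot deliver the statement, and the step you defer as ``routine bookkeeping'' is exactly where it breaks. First, if you keep the $\Qkern$ of \cref{prop:tightness} (or any model that is inexact at the absorbing states), the gaps there do not stay subordinate: a direct computation in the \cref{prop:tightness} example gives $V^*_2 - U^*_2 = U^*_3 - V^*_3 = \tau/2$ and likewise $|V^{\pihat}_i - U^*_i| = \tau/2$ for $i \in \cb{2,3}$, so \emph{both} $\normb{V^* - U^*}_{\infty}$ and $\normb{V^{\pihat} - U^*}_{\infty}$ are pinned to the common value $\tau/2$ and can never be made independent. (Incidentally, this means the ``note in passing'' at the end of the paper's proof of \cref{prop:tightness} is itself inaccurate: only the gap at $x_1$ equals $|\tau/2 - \tau\varepsilon|$; the supremum is attained at the absorbing states.) To realize $\normb{V^* - U^*}_{\infty} = \tau_1 \neq \tau_2 = \normb{V^{\pihat} - U^*}_{\infty}$, the model must be \emph{exact} at $x_2$ and $x_3$, which is precisely what the paper's proof of \cref{prop:errorGaps} arranges: it uses the one-dimensional $\proj V = V(x_2)$ and designs $\Qkern$ so that $U^*_2 = V^*_2$, $U^*_3 = V^*_3$, concentrating the entire error at the fork.

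Second, even granting exactness at the absorbing states, your fallback for the regime $\tau_1 > \tau_2$ --- ``mirroring the roles of the two actions'' --- does not work, because $V^*$ (a max over all policies) and $V^{\pihat}$ (the value of the model-greedy policy) are not symmetric objects, so relabelling actions does not swap $\tau_1$ and $\tau_2$. Concretely, let $q_{a}$ denote the fork row of $\Qkern^{a}$ and $\delta_1 \eqdef \gamma(V^*_2 - q_{a_1}\tra u^*)$ the model's undervaluation of the optimal action; forcing $\pihat(x_1) = a_2$ requires $\delta_1 \geq \tau_1$, while $\tau_2 = \gamma(q_{a_2}\tra u^* - V^*_3)$. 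For any ``swapped-successor'' model exact at the absorbing states these coincide, $\delta_1 = \gamma(V^*_2 - V^*_3) = \tau_2$, so you are structurally confined to $\tau_1 \leq \tau_2$. The paper escapes this by setting the fork rows to zero ($\Qkern^a_{1,1} = 0$ for both actions), so $\pihat$ is driven purely by immediate rewards and $\delta_1 = \gamma V^*_2$ is tunable via the rewards to exceed any $\tau_1$; the price is that keeping a one-dimensional model exact at $x_3$ forces $\Qkern^a_{3,1} = -\tau_2/(\tau_1 + \tau_{\max} + \ind{\tau_{\max}=0})$, i.e., the paper's $\Qkern$ actually depends on $\tau_1, \tau_2$ (strictly speaking, at odds with the quantifier order of the statement --- your insistence on a fixed $\Qkern$ is the more faithful reading). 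Your plan is repairable while keeping $\Qkern$ fixed: retain your $\proj V = (V(x_2), V(x_3))\tra$, take the absorbing rows of both $\Qkern^a$ to be exact (the $2\times 2$ identity) and the fork rows to be $q_{a_1} = (0,0)$, $q_{a_2} = (1,0)$. Then $u^* = (V^*_2, V^*_3)\tra$, the absorbing gaps vanish, $\tau_1 = r^{a_1}(x_1) - r^{a_2}(x_1)$ and $\tau_2 = \gamma(V^*_2 - V^*_3)$ are set independently by the rewards (with $r(x_2)$ large enough that $\gamma V^*_2 > \tau_1$, which keeps the greedy choice at the fork equal to $a_2$ without tie-breaking), and \cref{ass:boundedQkern,ass:boundedNorm} hold with $\normb{\Qkern} = \normb{\projA\Qkern} = 1$.
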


\begin{proof}
The set of states is $\sn{X} = \cb{x_1,\ldots,x_3}$, the set of actions is $\sn{A} = \cb{a_1, a_2}$ and the transition probability kernel is specified by $\Pkern$ as follows:
\als{
\Pkern^{a_1} = \left( \begin{array}{c c c}
0 & 1 & 0\\
0 & 1 & 0\\
0 & 0 & 1
\end{array}\right), & & \Pkern^{a_2} = \left( \begin{array}{c c c}
0 & 0 & 1\\
0 & 1 & 0\\
0 & 0 & 1
\end{array}\right)
}
We let $\sn{W} = \rl{}$ and $\proj V \eqdef (V(x_2))\tra$.

Given $\gamma \in (0,1)$, $\tau_1 \geq 0$ and $\tau_2 \geq 0$, define $\tau_{\max} = \max\cb{\tau_1, \tau_2}$ and
\als{
r^{a_1} = \left( \begin{array}{c}
\tau_1 \\
\frac{1 - \gamma}{\gamma}\pb{\tau_1 + \tau_{\max}}\\
-\frac{1 - \gamma}{\gamma}\tau_2
\end{array}\right), & & r^{a_2} = \left( \begin{array}{c}
\tau_1 + \tau_{\max} \\
\frac{1 - \gamma}{\gamma}\pb{\tau_1 +\tau_{\max}}\\
-\frac{1 - \gamma}{\gamma}\tau_2
\end{array}\right),
}
which gives $V^* = \pb{2\tau_1 + \tau_{\max}, \frac{\tau_1 + \tau_{\max}}{\gamma}, -\frac{\tau_2}{\gamma} }\tra$.

Next, we construct $\Qkern$.
First, we set $\Qkern^{a_1}_{1,1} = \Qkern^{a_2}_{1,1} = 0$.
We want $u^* = (V^*_2)$ (which means $\proj V \eqdef V(x_2)$ for $V \in \sn{V}$), so we set $\Qkern^{a_1}_{2,2} = \Qkern^{a_2}_{2,2} = 1$.
Since $U^* = MT_{\Qkern}u^*$, we have $U^*_2 = u^*_2$ and $U^*_1 = \max_a r^a(x_1) = \tau_1 + \tau_{\max} = V^*_1 - \tau_1$.
We choose $\Qkern^a_{3,1} = -\frac{\tau_2}{\tau_1 + \tau_{\max} + \ind{\tau_{\max} = 0}}$ (that is, if $\tau_{\max} = 0$, we set $\Qkern^a_{3,1} = 0$), so that
\[
U^*_3 = \max_a r^a(x_3) + \gamma \Qkern^a_{3,2} u^*_2 = V^*_3.
\]
To summarize,
\als{
\Qkern^{a_1} = \Qkern^{a_2} = \left( \begin{array}{c c}
0\\
1\\
-\frac{\tau_2}{\tau_1 + \tau_{\max} + \ind{\tau_{\max} = 0}}
\end{array}\right).
}
At this point, we can see that $\Qkern$ does not depend on $\gamma$, that $\Lip(\Qkern\proj) < \infty$ and that $\Lip(\projA\Qkern) = 1$.

The policies obtained are given by $\pi^*(x_1) = 1$ and $\pihat(x_1) = 2$, while the choices for other states are irrelevant.
This gives $V^{\pi} = \pb{ r^{a_2}(x_1) + \gamma V^*_3, V^*_2, V^*_3 }\tra$, so that
\[
\norm{V^* - V^{\pihat}} = V^*_1 - r^{a_2}(x_1) - \gamma V^*_3 = \tau_1 + \tau_2.
\]
Moreover,
\[
\norm{V^* - U^*} = \ab{V^*_1 - (V^*_1 - \tau_1)} = \tau_1,
\]
and
\[
\norm{V^{\pihat} - U^*} = V^*_1 - \tau_1 - (r^{a_2}(x_1) + \gamma V^*_3) = \tau_2,
\]
which concludes the proof.
\end{proof}

\paragraph{Comparison to ADP.}
When a simulator of the true MDP is available (a case studied in the so-called simulation optimization literature), one can imagine to be able to compute a policy that is greedy in the true MDP with respect to some fixed value function up to an arbitrary accuracy at any given state.
 \citet[][Theorem 1]{singh1994upper}, \citet[][Theorem 4.1]{de2003linear}, \citet[][Proposition 3.1]{bertsekas2012weighted}
 and \citet[][]{Grunewalder2011:arxiv}
 bound the suboptimality of the resulting policy.

  A potentially more useful result is to bound the suboptimality of a policy derived from an action value-function (derived
 from a model).
Although we were unable to locate such a result in the literature, it can be derived using the techniques in the above-mentioned works.
These two results are summarized as follows:
\begin{theorem}[ADP policy error bounds]
\label{thm:adp}
For any $\widetilde{V} \in \sn{V}^{\sn{A}}$, if $\tilde{\pi} \eqdef G\widetilde{V}$, then
\begin{equation}
    \normb{V^* - V^{\tilde{\pi}}}_{\infty} \leq \frac{2(1 + \gamma)}{1 - \gamma}\normb{\widetilde{V} - T_{\Pkern}V^*}_{\infty}. \label[equation]{eq:generalADP}
\end{equation}
Alternatively, for any $\widetilde{V} \in \sn{V}$, if $\tilde{\pi} \eqdef GT_{\Pkern}\widetilde{V}$, then
\begin{equation}
    \normb{V^* - V^{\tilde{\pi}}}_{\infty} \leq \frac{2\gamma}{1 - \gamma}\normb{\widetilde{V} - V^*}_{\infty}. \label[equation]{eq:specificADP}
\end{equation}
\end{theorem}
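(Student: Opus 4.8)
The plan is to prove the two inequalities in turn: establish \eqref{eq:generalADP} for a general action-value function $\widetilde{V}$, and then obtain \eqref{eq:specificADP} by instantiating it at $\widetilde{V}' \eqdef T_{\Pkern}\widetilde{V}$. Throughout I write $q \eqdef T_{\Pkern}V^*$ for the optimal action-value function (so that $V^* = Mq$) and set $\epsilon \eqdef \normb{\widetilde{V} - T_{\Pkern}V^*}_\infty$. The two structural facts I would lean on are that, in the supremum norm, the value-selection operators are order-preserving non-expansions that commute with adding a constant (so $\Lip(M) \le 1$ and $\Lip(M^{\tilde\pi}) \le 1$ by \cref{prop:LipMaxOperator}), and that $\Lip(\Pkern) = 1$.

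First I would reduce \eqref{eq:generalADP} to a one-step quantity via the contraction bound already recorded as \eqref{eq:standardFPBound}. Since $\Lip(M^{\tilde\pi}T_{\Pkern}) \le \Lip(M^{\tilde\pi})\,\gamma\Lip(\Pkern) \le \gamma$, the operator $M^{\tilde\pi}T_{\Pkern}$ is a $\gamma$-contraction whose fixed point is $V^{\tilde\pi}$; applying \eqref{eq:standardFPBound} at $V = V^*$ gives
\[
\normb{V^* - V^{\tilde\pi}}_\infty \le \frac{1}{1-\gamma}\,\normb{V^* - M^{\tilde\pi}T_{\Pkern}V^*}_\infty = \frac{1}{1-\gamma}\,\normb{Mq - M^{\tilde\pi}q}_\infty .
\]
The crux is then to bound $\normb{Mq - M^{\tilde\pi}q}_\infty$ by $2\epsilon$, and this is the one place where the hypothesis $\tilde{\pi} = G\widetilde{V}$ enters. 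Greediness means $M^{\tilde\pi}\widetilde{V} = M\widetilde{V}$, and since $\ab{q - \widetilde{V}} \le \epsilon\mathbf{1}$ componentwise, the monotonicity and constant-shift behaviour of $M$ and $M^{\tilde\pi}$ yield $Mq \le M\widetilde{V} + \epsilon\mathbf{1}$ together with $M^{\tilde\pi}q \ge M^{\tilde\pi}\widetilde{V} - \epsilon\mathbf{1} = M\widetilde{V} - \epsilon\mathbf{1}$. Subtracting and using $Mq \ge M^{\tilde\pi}q$ gives $0 \le Mq - M^{\tilde\pi}q \le 2\epsilon\mathbf{1}$, hence $\normb{Mq - M^{\tilde\pi}q}_\infty \le 2\epsilon$. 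This already delivers the sharper estimate $\normb{V^* - V^{\tilde\pi}}_\infty \le \tfrac{2}{1-\gamma}\epsilon$, and \eqref{eq:generalADP} follows at once since $1 \le 1+\gamma$.

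For \eqref{eq:specificADP} I would instantiate the action-value result at $\widetilde{V}' \eqdef T_{\Pkern}\widetilde{V}$. Because $GT_{\Pkern}\widetilde{V} = G\widetilde{V}'$, the policy $\tilde{\pi}$ is exactly the greedy policy for $\widetilde{V}'$, so the first bound applies verbatim with $\widetilde{V}'$ in place of $\widetilde{V}$; the associated model error is $\normb{\widetilde{V}' - T_{\Pkern}V^*}_\infty = \normb{T_{\Pkern}\widetilde{V} - T_{\Pkern}V^*}_\infty = \gamma\normb{\Pkern(\widetilde{V} - V^*)}_\infty \le \gamma\normb{\widetilde{V} - V^*}_\infty$, using $\Lip(\Pkern) = 1$. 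Feeding this into the sharp constant $\tfrac{2}{1-\gamma}$ produces \eqref{eq:specificADP} with precisely its stated constant $\tfrac{2\gamma}{1-\gamma}$.

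I expect the only genuinely delicate step to be the two-sided estimate $0 \le Mq - M^{\tilde\pi}q \le 2\epsilon\mathbf{1}$: it is here that one must combine the greedy identity $M^{\tilde\pi}\widetilde{V} = M\widetilde{V}$ with the fact that $M$ and $M^{\tilde\pi}$ are simultaneously monotone, non-expansive, and invariant under constant shifts — all of which hold for the mixed supremum norm by \cref{prop:LipMaxOperator}. Everything else (the contraction reduction through \eqref{eq:standardFPBound} and the extra $\gamma$-factor coming from $\Lip(\Pkern)=1$ in the reduction of \eqref{eq:specificADP} to \eqref{eq:generalADP}) is routine bookkeeping.
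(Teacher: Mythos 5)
Your proof is correct. Note that the paper itself gives no proof of \cref{thm:adp}: it only remarks that the result ``can be derived using the techniques in the above-mentioned works'' and cites the relevant literature, so there is no in-paper argument to compare against; your write-up supplies the missing derivation, and it does so in exactly the spirit the paper alludes to (a contraction/fixed-point reduction via \eqref{eq:standardFPBound} plus the classical greedy-policy estimate), using the same ingredients the paper uses elsewhere, namely $\Lip(M^{\tilde{\pi}})\leq 1$ from \cref{prop:LipMaxOperator} and $\Lip(\Pkern)=1$ in supremum norm. Two points are worth recording. First, your argument actually establishes \eqref{eq:generalADP} with the sharper constant $\tfrac{2}{1-\gamma}$ in place of $\tfrac{2(1+\gamma)}{1-\gamma}$: the reduction $\normb{V^*-V^{\tilde{\pi}}}_{\infty}\leq\tfrac{1}{1-\gamma}\normb{Mq-M^{\tilde{\pi}}q}_{\infty}$ and the two-sided estimate $0\leq Mq-M^{\tilde{\pi}}q\leq 2\epsilon\mathbf{1}$ (from greediness $M^{\tilde{\pi}}\widetilde{V}=M\widetilde{V}$, monotonicity, and constant-shift invariance of the selectors) are all valid, and the stated bound follows a fortiori since $1\leq 1+\gamma$. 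Second, you correctly observed that this sharpening is not optional: instantiating the action-value bound at $\widetilde{V}'=T_{\Pkern}\widetilde{V}$ together with $\normb{T_{\Pkern}\widetilde{V}-T_{\Pkern}V^*}_{\infty}\leq\gamma\normb{\widetilde{V}-V^*}_{\infty}$ yields \eqref{eq:specificADP} with its exact constant $\tfrac{2\gamma}{1-\gamma}$ only if one feeds in the $\tfrac{2}{1-\gamma}$ form; chaining through the weaker constant as stated would give only $\tfrac{2\gamma(1+\gamma)}{1-\gamma}$. So your proof not only fills the gap left by the paper but shows that the constant in \eqref{eq:generalADP} can be improved, which also makes the logical relation between the two inequalities of the theorem transparent.
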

To finish the discussion of the relevance of bounding the deviation $\normb{U^*-V^*}_{\infty}$ in a model-based setting,
    from \eqref{eq:specificADP} (by choosing $\widetilde{V}'=U^*$) we see
    that controlling this deviation would suffice if the policy was derived using the true model.
When this is not an option, one needs to fall back to \eqref{eq:generalADP}, calling for bounding
the difference between the action-value fixed point of a model and the action-value fixed point
of the true model.
To that end, we could use \cref{thm:inftyNormBound}, but the resulting bound would scale with $\frac{\gamma}{(1 - \gamma)^2}$, while both \cref{thm:baseline} and our later results scale with $\frac{\gamma}{1-\gamma}$ only.
Therefore, it is better to use \cref{thm:inftyNormBound} directly to bound the policy error.

\section{Additional remarks about $\proj$}

In this section, we carry out a brief discussion about the case when $\proj$ is a point evaluator (in which case $\Lip(\proj) = \Lip(\projA)$).

In supremum norm, we were able to use that $\Lip(\proj) \leq 1$ to get from \cref{prop:projAQkernContraction} that if $M' T_{\projA \Qkern}$ is a contraction and $\Lip(\Qkern) < \infty$, then some power of $MT_{\Qkern\proj}$ is a contraction.
In weighted supremum norm, $\Lip(\proj) = \max_i \frac{\eta_i}{\nu(x_i)}$, and, if this quantity is finite, some power of $MT_{\Qkern\proj}$ is a contraction as well.
\begin{proposition}
\label{prop:projAQkernContraction}
If $\Lip(M)\Lip(\Qkern)\Lip(\proj) < \infty$, $\Lip(M') \leq 1$, and $\Lip(\projA\Qkern) \leq 1$, $(MT_{\Qkern\proj})^{m}$ is a contraction for all $m$ large enough.
\end{proposition}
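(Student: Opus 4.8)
The plan is to reduce the behaviour of the uncompressed operator $MT_{\Qkern\proj}$ to that of the compressed operator $M'T_{\projA\Qkern}$, which is already a genuine contraction. Since $\proj$ is a point evaluator in this section, it is a join-homomorphism, so the identity $\proj M = M'\projA$ (see the discussion following \cref{ass:joinHomomorphism}) is available; combined with the definitional identity $\projA T_{\Qkern} = T_{\projA\Qkern}$, this lets me push $\proj$ through an entire chain of $MT_{\Qkern}$'s until it lands to the right of the last factor.

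Concretely, the first step is to establish, by induction on $m \geq 1$, the interweaving identity
\[
(MT_{\Qkern\proj})^m = MT_{\Qkern}\,(M'T_{\projA\Qkern})^{m-1}\,\proj.
\]
The base case $m = 1$ is just the definition $MT_{\Qkern\proj} = MT_{\Qkern}\proj$. For the inductive step I would multiply the hypothesis on the right by $MT_{\Qkern\proj} = MT_{\Qkern}\proj$, rewrite the resulting inner $\proj M$ as $M'\projA$, and then collapse $\projA T_{\Qkern}$ into $T_{\projA\Qkern}$, which advances the exponent from $m-1$ to $m$. This is the one genuinely load-bearing computation, and the only place where the join-homomorphism structure of $\proj$ enters; the rest amounts to tracking domains and codomains correctly along the chain $\sn{V} \to \sn{W} \to \sn{W} \to \sn{V}^{\sn{A}} \to \sn{V}$.

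With the identity in hand, the remainder is a routine Lipschitz estimate. Using submultiplicativity of $\Lip$, the affinity of $T_{\Qkern}$ (so that $\Lip(T_{\Qkern}) = \gamma\Lip(\Qkern)$, the constant shift $r$ being irrelevant), and the fact that $M'T_{\projA\Qkern}$ is a $\gamma$-contraction (because $\Lip(M') \le 1$ and $\gamma\Lip(\projA\Qkern) \le \gamma$, so that $\Lip((M'T_{\projA\Qkern})^{m-1}) \le \gamma^{m-1}$), I would bound
\[
\Lip\big((MT_{\Qkern\proj})^m\big) \le \Lip(M)\,\gamma\Lip(\Qkern)\,\gamma^{m-1}\,\Lip(\proj) = \gamma^m\,\Lip(M)\Lip(\Qkern)\Lip(\proj).
\]
Writing $C \eqdef \Lip(M)\Lip(\Qkern)\Lip(\proj)$, which is finite by hypothesis, this reads $\Lip((MT_{\Qkern\proj})^m) \le C\gamma^m$. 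Since $\gamma < 1$, the right-hand side tends to zero, so $C\gamma^m < 1$ for all $m$ large enough; for every such $m$ the operator $(MT_{\Qkern\proj})^m$ is a contraction, as claimed. The only obstacle is the bookkeeping in the interweaving identity---once that is in place, the contraction estimate follows immediately.
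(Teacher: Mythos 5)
Your proof is correct and takes essentially the same route as the paper's: the paper also rests on the interweaving identity $(MT_{\Qkern\proj})^{m+1} = MT_{\Qkern}(M'T_{\projA\Qkern})^{m}\proj$ (stated there without the induction you spell out) followed by the same submultiplicative Lipschitz estimate $\Lip(MT_{\Qkern})\cdot\gamma^{m}\cdot\Lip(\proj)$ and the observation that this is below one for large $m$. Your bound $C\gamma^{m}$ is the correct accounting of the $\gamma$ factors (the paper's displayed $B'\gamma^{m+1}\Lip(\proj)$ appears to have a harmless index slip), so nothing is missing.
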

\begin{proof}
We have that $\Lip(MT_{\Qkern}) = \Lip(M) \gamma \Lip(\Qkern) \eqdef B' < \infty$, and $\Lip(M' T_{\projA\Qkern}) \leq \Lip(M') \gamma \leq \gamma$.
For $m \geq 0$, $(MT_{\Qkern\proj})^{m+1} = MT_{\Qkern}(M' T_{\projA\Qkern})^m\proj$, so $\Lip( (MT_{\Qkern\proj})^{m+1} ) \allowbreak \leq B'\gamma^{m+1}\Lip(\proj)$.
Given $m$ \st{} $B'\gamma^m < 1$,  thus $(MT_{\Qkern\proj})^{m'}$ is a contraction for all $m' \geq m$.
\end{proof}

In the case of $L^p(\mu)$ norms, \cref{prop:LipProj} gives us the form for $\Lip(\proj)$.
Having noted that $\sn{I}$ indexes a measurable subset of $\sn{X}$ (since $\proj$ is a point evaluator), we extend $\rho$ to $\sn{X}$ by $\rho(X) \eqdef \rho({i \in \sn{I} : x_i \in X})$ for measurable $X \subseteq \sn{X}$.
We denote absolute continuity of (the extension of) $\rho$ \wrt{} $\mu$ by $\rho \ll \mu$.
\begin{restatable}{proposition}{propLipProj}
\label{prop:LipProj}
Assume that $\proj$ is a point evaluator, and that the norm overs $\sn{V}$ and $\sn{W}$ are respectively an $L^p(\mu)$ and an $L^p(\rho)$ norm.
If $\rho \ll \mu$, then $\Lip(\proj) = \normb{\frac{d\rho}{d\mu}}^{\frac{1}{p}}_{\infty}$, otherwise $\Lip(\proj) = \infty$.
\end{restatable}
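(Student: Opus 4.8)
The plan is to use that $\proj$ is linear, so that $\Lip(\proj) = \norm{\proj}$ is the induced operator norm, and then to reduce the computation of this norm to a Radon--Nikodym comparison between $\rho$ and $\mu$. First I would rewrite the $L^p(\rho)$-norm of $\proj V$ as an integral over $\sn{X}$. Since $\proj$ is a point evaluator, $(\proj V)_i = V(x_i)$, and the extension $\rho(X) \eqdef \rho(\cb{i \in \sn{I} : x_i \in X})$ introduced just before the statement is precisely the pushforward of $\rho$ under the map $i \mapsto x_i$; the change-of-variables formula for pushforward measures then gives
\[
	\normb{\proj V}_{\rho,p}^p = \int_{\sn{I}} \ab{V(x_i)}^p \, d\rho(i) = \int_{\sn{X}} \ab{V(x)}^p \, d\rho(x).
\]
Combining this with the definition of $\norm{\cdot}_{\mu,p}$ and substituting $g \eqdef \ab{V}^p$ (which, as $V$ ranges over $\sn{V}$, ranges over all nonnegative functions of finite $\mu$-integral) turns the operator-norm computation into
\[
	\norm{\proj}^p = \sup_{V \ne 0} \frac{\int \ab{V}^p \, d\rho}{\int \ab{V}^p \, d\mu} = \sup_{g \ge 0,\, g \ne 0} \frac{\int g \, d\rho}{\int g \, d\mu}.
\]

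Next I would split into the two cases. When $\rho \ll \mu$, Radon--Nikodym gives $\int g \, d\rho = \int g \, \frac{d\rho}{d\mu} \, d\mu$, so for every nonnegative $g$ the ratio is at most $\normb{\frac{d\rho}{d\mu}}_{\infty}$ (the essential supremum \wrt{} $\mu$), yielding the upper bound $\norm{\proj} \le \normb{\frac{d\rho}{d\mu}}_{\infty}^{1/p}$. For the matching lower bound I would, for each $\epsilon > 0$, take $g = \mathbf{1}_{A_\epsilon}$ with $A_\epsilon \eqdef \cb{\frac{d\rho}{d\mu} \ge \normb{\frac{d\rho}{d\mu}}_{\infty} - \epsilon}$, a set of positive $\mu$-measure by definition of the essential supremum; on it the ratio is at least $\normb{\frac{d\rho}{d\mu}}_{\infty} - \epsilon$, and letting $\epsilon \downarrow 0$ forces equality. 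This argument simultaneously covers the unbounded-density subcase, where both sides are $\infty$. When $\rho \not\ll \mu$, there is a measurable set $A$ with $\mu(A) = 0$ but $\rho(A) > 0$; fixing a set $B$ with $0 < \mu(B) < \infty$ and taking $V_n \eqdef n\,\mathbf{1}_A + \mathbf{1}_B$ keeps $\norm{V_n}_{\mu,p}^p = \mu(B)$ bounded while $\normb{\proj V_n}_{\rho,p}^p \ge n^p \rho(A) \to \infty$, so the supremum, and hence $\Lip(\proj)$, equals $\infty$.

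The main obstacle is the tightness argument in the $\rho \ll \mu$ case: one must exhibit functions whose mass concentrates near the essential supremum of the density and argue that the associated ratios \emph{approach} the claimed value rather than merely bounding it from below. Secondary care is needed in the measure-theoretic bookkeeping, namely justifying the pushforward change of variables for the extension of $\rho$, ensuring ($\sigma$-finiteness of $\mu$) that the level sets $A_\epsilon$ contain subsets of finite positive $\mu$-measure, and treating point evaluation consistently so that the displayed identity for $\normb{\proj V}_{\rho,p}^p$ holds for the genuine functions representing elements of $\sn{V}$.
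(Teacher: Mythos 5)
Your proof is correct and follows essentially the same route as the paper's: linearity reduces $\Lip(\proj)$ to an induced operator norm, the Radon--Nikodym theorem together with the $L^1$--$L^\infty$ H\"older bound gives the upper bound, indicator test functions give tightness, and the case $\rho \not\ll \mu$ is handled separately. If anything, your version is slightly more careful at two points the paper glosses over: you actually prove the tightness claim via the level sets $A_\epsilon$ of the density (where the paper simply identifies $\sup_{X}\rho(X)/\mu(X)$ with the essential supremum), and your test functions $V_n = n\,\mathbf{1}_A + \mathbf{1}_B$ keep the denominator strictly positive, avoiding the degenerate ratio over a $\mu$-null set that the paper's argument for $\rho \not\ll \mu$ implicitly invokes.
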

\begin{proof}
Thanks to the linearity of $\proj$, we have
\[
\Lip(\proj) = \sup_{V \neq 0} \frac{\norm{\proj V}_{\rho,p}}{\norm{V}}.
\]
From absolute continuity we get that $\int\ab{V(x)}^p d\rho(x) = \int \ab{V(x)}^p \pb{\frac{d\rho(x)}{d\mu(x)}}d\mu(x)$, and from H\"older's inequality we get
\[
    \int \ab{V(x)}^p \pb{\frac{d\rho(x)}{d\mu(x)}}d\mu(x) \leq \normb{\frac{d\rho}{d\mu}}_{\infty} \cdot \int \ab{V(x)}^p d\mu(x),
\]
which implies that $\Lip(\proj) \leq \normb{\frac{d\rho}{d\mu}}^{\frac{1}{p}}_{\infty}$.

To show that the upper-bound above is tight, we can see that
\als{
\Lip(\proj )
&= \sup_{V \neq 0} \frac{\norm{\proj V}_{\rho,p}}{\norm{V}}\\
&\geq \sup_{\substack{X \subseteq \sn{X}\\ \mu(X) > 0}} \frac{\int_X d\rho(x)}{\int_X d\mu(x)} = \sup_{\substack{X \subseteq \sn{X}\\ \mu(X) > 0}} \frac{\rho(X)}{\mu(X)},
}
because we can restrict $V$ to the indicator function of an $X \subseteq \sn{X}$.
If $\rho$ is not absolutely continuous \wrt{} $\mu$, then there exists $X$ \st{} $\mu(X) = 0$ and $\rho(X) > 0$, which implies that $\Lip(\proj ) = \infty$.
Otherwise, $\mu(X) = 0 \Rightarrow \rho(X) = 0$ for all $X \subseteq \sn{X}$, and
\[
\sup_{\substack{X \subseteq \sn{X}\\ \mu(X) > 0}} \frac{\rho(X)}{\mu(X)}
= \sup_{X \subseteq \sn{X}} \frac{\rho(X)}{\mu(X)}
= \normb{\frac{d\rho}{d\mu}}_{\infty},
\]
which concludes the proof.
\end{proof}

Interestingly, an unbounded $\Lip(\proj)$ can lead to $\Lip(MT_{\Qkern\proj}) = \infty$, as stated by \cref{prop:TQRContraction}, and, yet, if \cref{ass:boundedNorm} $MT_{\Qkern\proj}$ still has a fixed point, and, provided that \cref{ass:boundedQkern} is met in addition, we can still obtain performance bounds for the policy error of $\pihat$.
\begin{restatable}{proposition}{propTQRContraction}
\label{prop:TQRContraction}
Assume $\proj$ is a point evaluator, that the norms over $\sn{V}$ and $\sn{W}$ are, respectively, an $L^p(\mu)$ and an $L^p(\rho)$ norm, and that the norms over $\sn{V}^{\sn{A}}$ and $\sn{W}^{\sn{A}}$ are the corresponding mixed norms defined using $M_{|\cdot|}$.
If $\mu(\cb{x_i : i \in \sn{I}}) = 0$, then for all $m \geq 0$, the following holds: If $\Lip(MT_{\Qkern}(M' T_{\projA \Qkern})^{m}) > 0$ then $\Lip((MT_{\Qkern\proj})^{m+1}) = \infty$.
\end{restatable}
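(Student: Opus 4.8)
The plan is to factor $(MT_{\Qkern\proj})^{m+1}$ through the compressed space so that $\proj$ appears as the innermost operator, and then to argue that the unboundedness of $\proj$ (already recorded in \cref{prop:LipProj}) survives composition with the remaining operators precisely when those are non-degenerate. First I would record the identity
\[
    (MT_{\Qkern\proj})^{m+1} = MT_{\Qkern}(M' T_{\projA\Qkern})^m \proj = S\proj,
\]
where $S \eqdef MT_{\Qkern}(M' T_{\projA\Qkern})^m : \sn{W} \to \sn{V}$. This is exactly the factorization already used in the proof of \cref{prop:projAQkernContraction}; it rests on the identity $\proj M = M'\projA$ (valid because $\proj$ is a point evaluator) together with $\projA T_{\Qkern} = T_{\projA\Qkern}$ (by definition of $T_{\projA\Qkern}$), and a one-line induction on $m$ establishes it. With this, $(MT_{\Qkern\proj})^{m+1}$ is literally the composition $S\proj$, so I may analyze its Lipschitz constant pairwise.

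Next I would unpack the hypothesis. By the definition of the Lipschitz constant, $\Lip(S) > 0$ means precisely that there exist $w_1, w_2 \in \sn{W}$ with $\normb{Sw_1 - Sw_2}_{\mu,p} > 0$. The decisive observation is that, $\proj$ being a point evaluator, $\proj V$ depends on $V$ only through its values on the set $X_0 \eqdef \cb{x_i : i \in \sn{I}}$, whereas $\normb{\,\cdot\,}_{\mu,p}$ is blind to $X_0$ since $\mu(X_0) = 0$. Concretely, I would construct $V_1, V_2 \in \sn{V}$ that agree off $X_0$ (say, both vanish there) and satisfy $V_j(x_i) = (w_j)_i$ for all $i \in \sn{I}$, so that $\proj V_j = w_j$ for $j = 1,2$. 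Because $V_1 - V_2$ is supported on $X_0$, we get $\normb{V_1 - V_2}_{\mu,p}^p = \int \ab{V_1 - V_2}^p \, d\mu = 0$, even though $V_1 \ne V_2$.

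Finally I would read off the conclusion from the definition of $\Lip$. Applying $(MT_{\Qkern\proj})^{m+1} = S\proj$ to $V_1, V_2$ yields $(MT_{\Qkern\proj})^{m+1} V_j = S w_j$, so the ratio appearing in the supremum that defines $\Lip((MT_{\Qkern\proj})^{m+1})$, evaluated at the pair $(V_1, V_2)$, equals
\[
    \frac{\normb{Sw_1 - Sw_2}_{\mu,p}}{\normb{V_1 - V_2}_{\mu,p}} = \frac{\normb{Sw_1 - Sw_2}_{\mu,p}}{0} = +\infty,
\]
whence $\Lip((MT_{\Qkern\proj})^{m+1}) = \infty$. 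I expect the only delicate point to be the construction in the second paragraph: one must check that the prescribed $V_j$ genuinely lie in $\sn{V}$ (immediate, since they are measurable and have vanishing $\mu$-seminorm) and that the evaluation points $x_i$ are distinct enough that each coordinate $(w_j)_i$ is freely assignable. This is the very same mechanism --- a strictly positive numerator over a vanishing denominator on a $\mu$-null set --- that forces $\Lip(\proj) = \infty$ in \cref{prop:LipProj}, so no new estimate is needed. The hypothesis $\Lip(S) > 0$ is manifestly necessary: a constant $S$ would make $(MT_{\Qkern\proj})^{m+1}$ constant, hence of Lipschitz constant zero.
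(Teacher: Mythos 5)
Your proof is correct and is essentially the paper's own argument: the paper defines the same lifting operator $Z$ (with $(Zu)(x_i) = u_i$ on the $\mu$-null set $\cb{x_i : i\in\sn{I}}$ and $Zu = 0$ elsewhere, so that $\proj Zu = u$ and $\normb{Zu - Zu'}_{\mu,p} = 0$), restricts the supremum defining $\Lip((MT_{\Qkern\proj})^{m+1})$ to such pairs, and invokes the same commutation identity $(MT_{\Qkern\proj})^{m}MT_{\Qkern} = MT_{\Qkern}(M' T_{\projA\Qkern})^m$ valid for point evaluators. The only cosmetic difference is that you apply the hypothesis $\Lip(MT_{\Qkern}(M' T_{\projA\Qkern})^m) > 0$ upfront to pick a witnessing pair $(w_1,w_2)$, whereas the paper first establishes the dichotomy ``$\Lip$ is either $0$ or $\infty$'' and then concludes.
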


\begin{proof}
Let $S \eqdef \cb{x_i : i \in \sn{I}}$.
When $\mu(S) = 0$, we have $\rho \not\ll \mu$ and $\Lip(\proj) = \infty$.
We will show that for any $m \geq 0$ either $\Lip((MT_{\Qkern\proj})^{m+1}) = 0$ or $\Lip((MT_{\Qkern\proj})^{m+1}) = \infty$.
To that end, define $(Zu)(x_i) \eqdef u_i$ for $i \in \sn{I}$ (for simplicity, assume that, for all $x_i, x_j \in S$, $x_i = x_j \Rightarrow i = j$), and let $(Zu)(x) \eqdef 0$ for $x \notin S$.
Then $\sup_{u \in \sn{W}}\normb{Zu} = 0$ and $\proj Zu = u$ for all $u \in \sn{W}$.
The definitions then give:
\als{
\Lip((MT_{\Qkern\proj})^{m+1})
&= \Lip((MT_{\Qkern\proj})^{m}T_{\Qkern}\proj)\\
&= \sup_{\substack{V, V' \in \sn{V} : \\ V\neq V' }}\frac{\normb{(MT_{\Qkern\proj})^{m}MT_{\Qkern}\proj V - (MT_{\Qkern\proj})^{m}MT_{\Qkern}\proj V'}}{\normb{V - V'}} \\
&\geq \sup_{\substack{u, u' \in \sn{W} : \\ u\neq u' }}\frac{\normb{(MT_{\Qkern\proj})^{m}MT_{\Qkern}u - (MT_{\Qkern\proj})^{m}MT_{\Qkern}u'}}{\normb{Zu - Zu'}},
}
which is unbounded unless $\Lip((MT_{\Qkern\proj})^{m}MT_{\Qkern}) = 0$.
To conclude, we observe that $(MT_{\Qkern\proj})^{m}MT_{\Qkern} = MT_{\Qkern}(M' T_{\projA\Qkern})^m$, since $\proj$ is a point evaluator.
\end{proof}

\end{document}